\newif\iffinal
    \newcommand{\tianhao}[1]{}
    \newcommand{\ruoxi}[1]{}
    \newcommand{\tong}[1]{}
    \newcommand{\add}[1]{#1}
    \newcommand{\tianhao}[1]{{\bf \textcolor{purple}{[Tianhao: #1]}}}
    \newcommand{\kaifeng}[1]{{\bf \textcolor{orange}{[Kaifeng: #1]}}}
    \newcommand{\ruoxi}[1]{{\bf \textcolor{BrickRed}{[Ruoxi:#1]}}}
    \newcommand{\tong}[1]{{\bf \textcolor{teal}{[Tong:#1]}}}
    \newcommand{\add}[1]{\textcolor{blue}{#1}}
\setlist[itemize]{leftmargin=*, nosep}
\newtheorem{theorem}{Theorem}
\newtheorem{lemma}[theorem]{Lemma}
\newtheorem*{remark}{Remark}
\newtheorem{remark-star}{Remark}
\newtheorem{remark-star-1}{Remark}
\newtheorem{corollary}[theorem]{Corollary}
\DeclareMathOperator*{\argmin}{\arg\!\min}
\newcommand{\E}{\mathbb{E}}
\newcommand{\Var}{\mathrm{Var}}
\newcommand{\R}{\mathbb{R}}
\newcommand{\norm}[1]{\lVert#1\rVert}
\newcommand{\inner}[2]{\left\langle #1, #2 \right\rangle}
\newcommand{\etaopt}{\eta_{\text{opt}}}
\newcommand{\reals}{\mathbb{R}}
\newcommand{\cL}{\mathcal{L}}
\newcommand{\cD}{\mathcal{D}}
\newcommand{\cF}{\mathcal{F}}
\newcommand{\cI}{\mathcal{I}}
\newcommand{\cDA}{\mathcal{D}_{\mathrm{A}}}
\newcommand{\cDB}{\mathcal{D}_{\mathrm{B}}}
\newcommand{\DA}{D_{\mathrm{A}}}
\newcommand{\DB}{D_{\mathrm{B}}}
\newcommand{\Dx}{D_{\mathrm{x}}}
\newcommand{\Dval}{D_{\mathrm{val}}}
\newcommand{\Dvalx}{{D_{\mathrm{val}}}_{\mathrm{x}}}
\newcommand{\cDval}{\cD_{\mathrm{val}}}
\newcommand{\cLval}{\mathcal{L}_{\mathrm{val}}}
\newcommand{\cIvalm}{\mathcal{I}_{\mathrm{val}}^{(m)}}
\newcommand{\cIvaloptm}{\mathcal{I}_{\mathrm{val}\text{-}\mathrm{opt}}^{(m)}}
\newcommand{\cP}{\mathcal{P}}
\newcommand{\cPm}{\mathcal{P}^{(m)}}
\newcommand{\cQ}{\mathcal{Q}}
\newcommand{\cU}{\mathcal{U}}
\newcommand{\cT}{\mathcal{T}}
\newcommand{\cX}{\mathcal{X}}
\newcommand{\cE}{\mathbb{E}} % For Expectation
\newcommand{\cH}{\mathcal{H}}
\newcommand{\abs}[1]{\lvert #1 \rvert}
\newcommand{\labs}[1]{\left\lvert #1 \right\rvert}
\newcommand{\normtwo}[1]{\norm{#1}_2}
\newcommand{\supp}{\mathrm{supp}}
\newcommand{\mHval}{\mH_{\text{val}}}
\newcommand{\opnorm}[1]{\|#1\|_{\text{op}}} % Operator norm
\newcommand{\ltwonorm}[1]{\|#1\|} % L2 norm for functions
\newcommand{\eucnorm}[1]{\|#1\|} % Euclidean norm for vectors
\newcommand{\lmin}{\lambda_{\min}} % Min eigenvalue
\newcommand{\vfm}{\varphi_m} % m-dim feature map
\newcommand{\KinfOpD}[1]{K_\infty(#1)} % Infinite-width kernel operator for distribution #1
\newcommand{\AmD}[1]{A_m(#1)}     % Matrix A_m for distribution #1, A_m = (1/m)Sigma_m
\newcommand{\SigmaM}[1]{\Sigma_m(#1)} % Sigma_m(D) matrix Sigma_m(D)=E_D[phi_m phi_m^T]
\def\vzero{{\bm{0}}}
\def\vmu{{\bm{\mu}}}
\def\vtheta{{\bm{\theta}}}
\def\vdelta{{\bm{\delta}}}
\def\vbeta{{\bm{\beta}}}
\def\vxi{{\bm{\xi}}}
\newcommand{\ve}{\@ifnextchar\bgroup{\velong}{{\bm{e}}}}
\newcommand{\velong}[1]{{\bm{#1}}}
\def\vu{{\bm{u}}}
\def\vv{{\bm{v}}}
\def\vx{{\bm{x}}}
\def\mA{{\bm{A}}}
\def\mC{{\bm{C}}}
\def\mH{{\bm{H}}}
\def\mI{{\bm{I}}}
\def\mU{{\bm{U}}}
\def\mSigma{{\bm{\Sigma}}}
\newcommand{\inne}[2]{\langle{#1}, {#2}\rangle}
\newcommand{\sign}{\mathrm{sign}}
\newcommand{\tr}{\mathrm{tr}}
\newcommand{\DeltaAB}{\Delta_{\mathrm{AB}}}
\newcommand{\etatiny}{\eta_{\text{tiny}}}
\title{Can Small Training Runs Reliably Guide Data Curation? Rethinking Proxy-Model Practice}
\author{Jiachen T. Wang \\
Princeton University \\
\And
Tong Wu \\
Princeton University \\
\And
Kaifeng Lyu\thanks{Work done while at UC Berkeley.} \\
Tsinghua University \\
\AND
James Zou \\
Stanford University \\
\And
Dawn Song \\
UC Berkeley \\
\And 
Ruoxi Jia\thanks{Equal contribution as senior authors.}  \\
Virginia Tech \\
\And 
Prateek Mittal$^\dagger$ \\
Princeton University \\
}
\begin{document}

\maketitle

\begin{abstract}
Data teams at frontier AI companies routinely train small proxy models to make critical decisions about pretraining data recipes for full-scale training runs. However, the community has a limited understanding of whether and when conclusions drawn from small-scale experiments reliably transfer to full-scale model training. In this work, we uncover a subtle yet critical issue in the standard experimental protocol for data recipe assessment: the use of identical small-scale model training configurations across all data recipes in the name of ``fair" comparison. We show that the experiment conclusions about data quality can flip with even minor adjustments to training hyperparameters, as the optimal training configuration is inherently data-dependent. Moreover, this fixed-configuration protocol diverges from full-scale model development pipelines, where hyperparameter optimization is a standard step. Consequently, we posit that the objective of data recipe assessment should be to identify the recipe that yields the best performance under data-specific tuning. To mitigate the high cost of hyperparameter tuning, we introduce a simple patch to the evaluation protocol: using \emph{reduced} learning rates for proxy model training. We show that this approach yields relative performance that strongly correlates with that of fully tuned large-scale LLM pretraining runs. Theoretically, we prove that for random-feature models, this approach preserves the ordering of datasets according to their optimal achievable loss. Empirically, we validate this approach across 23 data recipes covering four critical dimensions of data curation, demonstrating dramatic improvements in the reliability of small-scale experiments.
\end{abstract}

\section{Introduction}
\label{sec:intro}

High-quality data has emerged as the primary driver of progress in modern AI development \cite{comanici2025gemini}. Constructing the data recipe for training frontier AI models requires a series of high-stakes decisions—such as how to filter low-quality text, how aggressively to deduplicate, and how to balance diverse data sources. Unfortunately, there is little theoretical guidance or human intuition to direct these choices, leaving practitioners to rely on actual model training for data quality assessment.

\textbf{Proxy-model-based techniques.} 
The most direct approach to selecting a data recipe is to train full-scale models for each candidate recipe and compare their performance. However, this is prohibitively expensive for large-scale model training, as it would require numerous complete training runs. Researchers and practitioners have widely adopted smaller ``proxy models'' as surrogates to efficiently estimate each dataset's utility for full-scale training, significantly reducing the computational burden \citep{coleman2020selection}. Given the computational efficiency and ease of implementation, small-scale proxy model experiments have guided data decisions for many high-profile models and open-source datasets \citep{schuhmann2021CLIP,mazumder2023dataperf,li2024datacomplm,dubey2024llama3}. 

% \vspace{-1mm}
\textbf{Rethinking data recipe ablation in practical workflows.} 
Given the critical importance of data quality, modern AI development teams typically employ a division of labor: \emph{specialized data teams} curate and optimize training data recipe, then recommend a high-quality dataset to \emph{model training teams} who subsequently optimize training configurations (e.g., hyperparameters) specifically for the dataset \citep{hoffmann2022training, anil2023palm, almazrouei2023falcon} (Figure~\ref{fig:overview}). However, existing data-centric research evaluates and compares data recipes in ways fundamentally disconnected from this practical workflow. Most experiments in the data-centric literature and benchmarks evaluate all candidate data recipes under \emph{fixed} training hyperparameters for ``fairness'', while practical model training uses \emph{optimally-tuned hyperparameter configurations specifically adapted to each dataset}. We therefore propose a refined objective for data-centric AI: find the data recipe that maximizes performance under \emph{optimally-tuned hyperparameters}, reflecting how data is actually used in practice.

\begin{figure}[t]
    \centering
    \setlength\intextsep{0pt}
    \setlength\abovecaptionskip{2pt}
    \setlength\belowcaptionskip{-5pt}    \includegraphics[width=\linewidth]{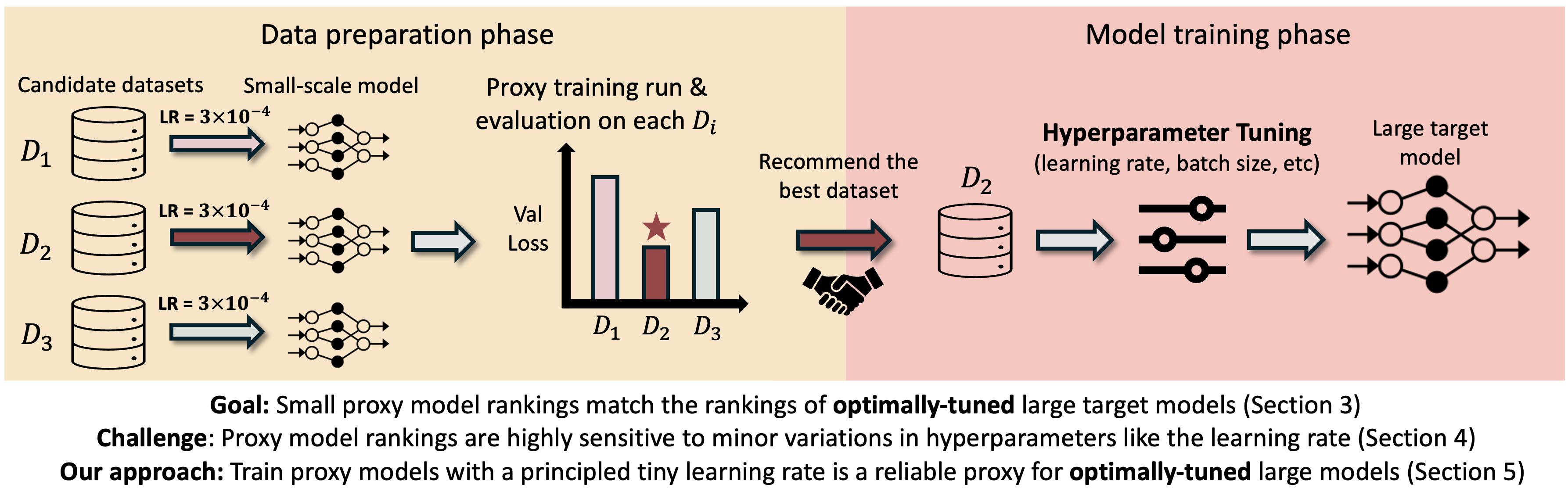}
    \caption{Overview of the industrial practice of data recipe ablation using proxy models. 
    \textbf{Left}: Data teams assess each candidate dataset using small-scale proxy models trained with \emph{identical} hyperparameters, and recommend the data recipe with the best performance.
    \textbf{Right}: Model training teams perform extensive hyperparameter tuning to optimize performance on the large target model.}
    \label{fig:overview}
\end{figure}

\begin{figure}[t]
    \centering
    \setlength\intextsep{0pt}
    \setlength\abovecaptionskip{2pt}
    \setlength\belowcaptionskip{-10pt}
    \includegraphics[width=\linewidth]{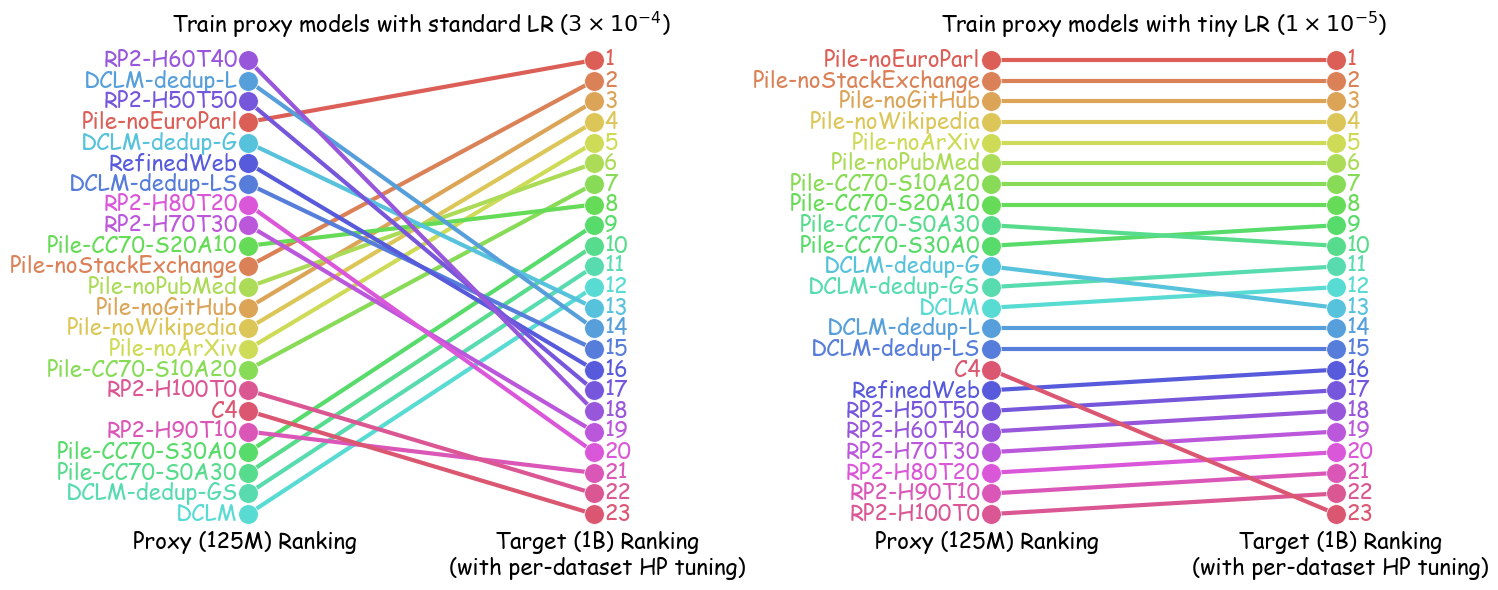} \caption{Validation loss rankings of 23 data recipes evaluated on proxy models (GPT2-125M) and target models (Pythia-1B), where target models undergo extensive dataset-specific hyperparameter tuning. Rankings are determined by the \add{pretrained models'} loss on Pile's validation split \citep{gao2020pile}. \textbf{Left:} When proxy models are trained with a standard learning rate ($3 \times 10^{-4}$), data recipe rankings exhibit severe disagreement between proxy and target scales, with many dramatic reorderings that would lead to suboptimal data recipe ablation. \textbf{Right:} When proxy models are trained with a tiny learning rate ($1 \times 10^{-6}$), dataset rankings remain highly consistent across scales.}
    \label{fig:slopegraph}
\end{figure}

% \vspace{-1mm}
\textbf{Minor variations in proxy training configuration can alter data recipe rankings (Section \ref{sec:fragility}).} 
The refined objective imposes a critical requirement for proxy-model-based techniques to succeed: the best data recipe identified through small-scale training runs must remain superior after (i) scaling the model to its target size and (ii) the training team optimizes training hyperparameters. However, this goal presents significant challenges given current proxy-model-based methods, where small proxy models evaluate and rank candidate datasets under a fixed, heuristically chosen hyperparameter configuration. Due to the strong interdependence between training hyperparameters and data distribution, each data recipe inherently requires its own optimal training configuration \citep{hutter2019automated,sivaprasad2020optimizer}. \add{Our experiments (Figure \ref{fig:fragility}) demonstrate that in language model pretraining, even minor variations in training hyperparameters, particularly learning rate, can dramatically alter data recipe rankings and lead to suboptimal recommendations.} If a dataset ordering collapses under such minor hyperparameter tweaks at the same proxy scale, broader sweeps at larger scales will almost certainly reorder the ranking further.

% \vspace{-1mm}
\textbf{Training proxy models with tiny learning rates: a theoretically-grounded patch (Section \ref{sec:method}).} 
In the long run, the tight interplay between data and training hyperparameters suggests that these two components should be optimized \emph{jointly} rather than tuned in separate, sequential steps. 
In the meantime, however, practitioners still need a \emph{drop-in patch} that allows data teams to assess and optimize their data curation pipelines through small-scale experiments alone. 
We propose a simple yet effective remedy: train the proxy model with a \emph{tiny} learning rate. Two key empirical observations inspire this approach: (i) within the same model architecture, datasets' performance under a tiny learning rate strongly correlates with their optimal achievable performance after dataset-specific hyperparameter tuning; (ii) data recipe rankings remain stable under tiny learning rates as models scale from small to large sizes. We provide formal proof for these empirical findings for random feature models. Specifically, we prove that, as network width grows, training with a sufficiently small learning rate preserves the ordering of datasets, and the rankings converge to the ordering of their \emph{best achievable performance} in the infinite-width limit. We further characterize this tiny learning rate regime through both theory and practical heuristics.

% \vspace{-1mm}
\textbf{Experiments.} 
We empirically evaluate the effectiveness of the tiny-learning-rate strategy through comprehensive experiments spanning multiple architectures, scales, and data curation scenarios. \add{We test 23 data recipes covering domain composition, quality filtering, deduplication strategies, and major pretraining corpora across three language model families (GPT2, Pythia, OPT), ranging from 70M to 1B parameters.} Our results demonstrate that proxy models trained with small learning rates achieve significantly improved transferability to \emph{hyperparameter-tuned} larger target models. Figure \ref{fig:slopegraph} illustrates this improvement: the Spearman rank correlation for data recipe rankings between GPT2-125M and Pythia-1B improved to $>0.95$ across ${23 \choose 2} = 253$ data recipe pairs when using a tiny learning rate ($10^{-5}$) to train GPT2 instead of a commonly used learning rate ($3 \times 10^{-4}$).

% \vspace{-1mm}
\section{Background: Guiding Data Curation with Small Proxy Models}
\label{sec:background}

\newcommand{\dataset}{\mathcal{D}}
\newcommand{\modeltgt}{\theta_{\mathrm{tgt}}}
\newcommand{\modelproxy}{\theta_{\mathrm{proxy}}}
\newcommand{\ellval}{\ell_{\mathrm{val}}}
\newcommand{\valdataset}{D_{\mathrm{val}}}
\newcommand{\model}{\theta}

In this section, we formalize the problem of data recipe ablation for large-scale model training.

\textbf{Setup \& Notations.} 
Consider a target model architecture $\modeltgt$ and a pool of candidate datasets $\dataset = \{D_1, D_2, \ldots, D_n\}$, where each dataset results from a different \emph{data recipe} (e.g., different curation algorithms, filter thresholds, or domain mixing ratios). Data recipe ablation aims to identify the optimal dataset $D_{i^*} \in \dataset$ that maximizes model performance on a validation set $\valdataset$. Given a loss function $\ell$, let $\ellval(\theta) := \ell(\theta; \valdataset)$ denote the validation loss of model $\theta$. Since model performance depends critically on both training data and hyperparameters (e.g., learning rate, batch size), we write the trained model as $\theta(D; \lambda)$ for dataset $D$ and hyperparameter configuration $\lambda$.\footnote{Additionally, the training outcome is influenced by stochastic elements such as random initialization and data ordering. We omit the randomness here for clean presentation and treat $\theta(\cdot)$ as deterministic. We provide additional discussion on the impact of stochasticity on proxy-based selection in Appendix \ref{appendix:additional-results}.}

% \vspace{-1mm}
\textbf{Current practice of data recipe ablation with small proxy models.} Directly evaluating candidate datasets by training a large target model on each is typically computationally prohibitive. A common practice to mitigate this issue involves using smaller ``proxy models" ($|\mathcal{M}_{\text{proxy}}| \ll |\mathcal{M}_{\text{target}}|$) to predict data quality and determine which data curation recipe to use in large-scale training runs. Small models enable repeated training at substantially reduced computational costs, making them popular for ablation studies of various data curation pipelines \citep{dubey2024llama3,li2024datacomplm}. Current practices typically train proxy models on each $D_i$ (or its subset) using a \emph{fixed} hyperparameter configuration $\lambda_0$, then rank the datasets based on small models' performance $\ell_{\text{val}}(\mathcal{M}_{\text{proxy}}(D_i; \lambda_0))$.

% \vspace{-1mm}
\section{Rethinking ``High-quality Dataset": A Practical Development Perspective}
\label{sec:rethink-objective}

% \vspace{-1mm}
Despite the broad adoption of small proxy models for data recipe ablation, the research community has a limited understanding of the conditions under which conclusions from small-scale experiments can be reliably transferred to large-scale production training. Before delving into the question of whether datasets that appear superior for small model training remain optimal for larger models, we must first establish a principled objective for data quality assessment. That is, \emph{what constitutes a ``high-quality dataset" in practical model development?} In this section, we discuss a subtle yet critical disconnect between practical workflows and the standard evaluation protocol in data-centric research.

\textbf{Data recipes need to be assessed under individually optimized training configurations.}
In the existing literature (e.g., \cite{magnusson2025datadecide}), the effectiveness of data-centric algorithms is typically assessed by training a large target model on the curated dataset with a \emph{fixed} set of hyperparameters. However, in the actual AI development pipelines, hyperparameter tuning will be performed, and the hyperparameters are \emph{tailored to the curated dataset}. For instance, GPT-3 \citep{brown2020language} determines the batch size based on the gradient noise scale (GNS) \citep{mccandlish2018empirical}, a data-specific statistic. Similarly, learning rates \citep{yang2022tensor} and optimization algorithms \citep{chen2022learning} are usually adjusted in a data-dependent way. Therefore, we argue that a more reasonable goal for the data recipe ablation should optimize the selected dataset's performance under \emph{hyperparameters tuned specifically for that dataset}. This refined objective acknowledges the strong interaction between data and training hyperparameters in model training \citep{hutter2019automated}, emphasizing that data curation strategies should optimize for the training data's \emph{optimally achievable performance} rather than performance under a predefined, potentially suboptimal hyperparameter configuration. Formally, we formalize the data recipe ablation problem as $D_{i^*} := \argmin_{i \in [n]} \min_{\lambda \in \Lambda} \ellval(\theta(D_i; \lambda))$, where $\Lambda$ is a predefined feasible space of hyperparameters constrained by factors such as the available compute budget.

% \vspace{-1mm}
\section{Proxy-Model Fragility under Hyperparameter Variation}
\label{sec:fragility}

% \vspace{-1mm}
Given our refined objective of identifying the dataset that performs best under its own optimized hyperparameters, we now examine whether current proxy-model practices align with this goal. Standard practice evaluates each candidate dataset by training a proxy model with a single, heuristically chosen hyperparameter configuration. Our investigation reveals a concerning consequence: even minor adjustments to the learning rate can flip the conclusions drawn from small proxy training runs. Consequently, the current practices can fail to identify datasets with the highest potential even for the same proxy model, and are even more likely to select suboptimal datasets when scaled to larger models with proper hyperparameter tuning. 
\add{Through sweeps of critical training hyperparameters—including batch size, weight decay, and token-per-parameter ratios—we find that dataset rankings are most sensitive to learning rate. Consequently, we center our main analysis on learning rate and detail the remaining hyperparameter experiments in Appendix \ref{appendix:eval-other-hyperparameters}.}

% \vspace{-1mm}
\textbf{Experiments: Learning rate sensitivity can undermine proxy models' reliability.} We investigate how minor variations in learning rate values can flip the conclusions drawn from small-scale experiments. Figure \ref{fig:fragility} demonstrates this fragility by comparing DCLM \citep{li2024datacomplm} with one of its variants that underwent more stringent deduplication (detailed configurations in Appendix \ref{appendix:data-recipes}). 
\begin{wrapfigure}{r}{0.6\textwidth}
    \centering
    \setlength\intextsep{0pt}
    \setlength\abovecaptionskip{0pt}
    \setlength\belowcaptionskip{-10pt}
    \vspace{-3mm}
    \includegraphics[width=\linewidth]{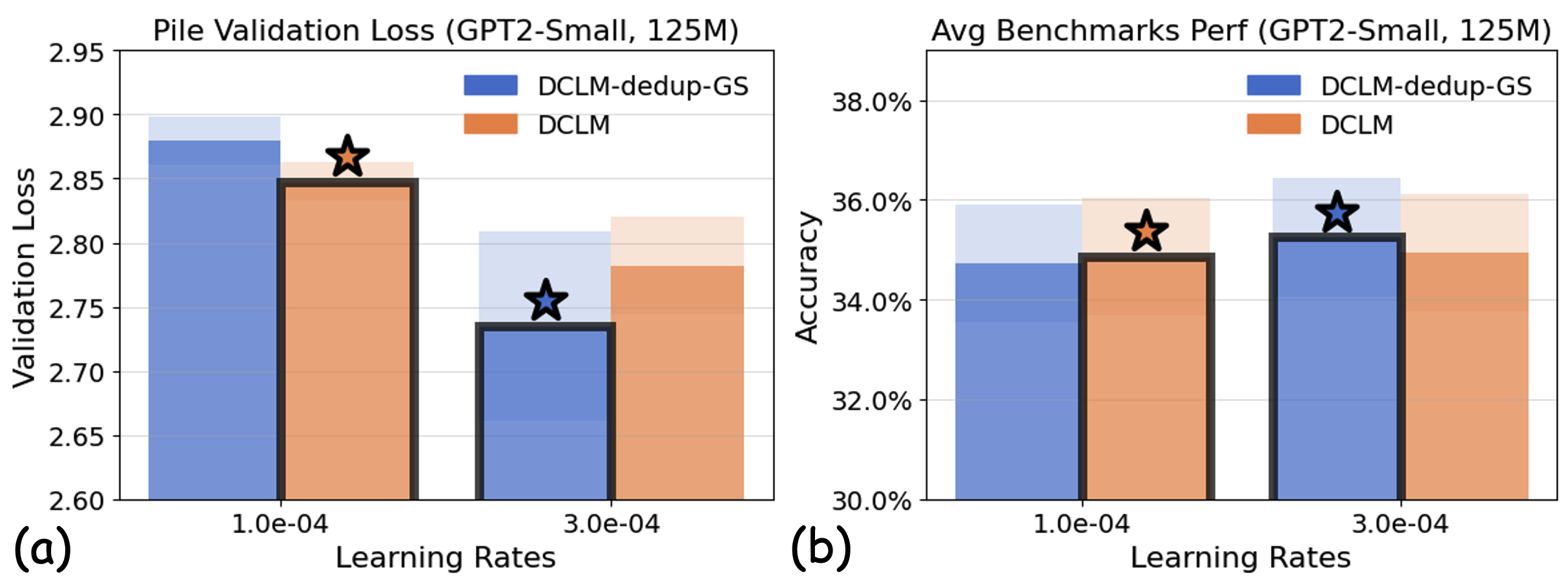}
    \caption{
    Performance comparison of DCLM variants by training GPT2-Small with two similar learning rates in a near-optimal regime. DCLM-dedup-GS is a variant of the DCLM dataset constructed with more stringent deduplication thresholds (see Table \ref{table:dataset-recipes} for details). 
    \textbf{(a)} Validation loss on the Pile dataset. 
    \textbf{(b)} Average accuracy across 5 downstream benchmarks (see Section \ref{sec:eval-settings}). Star markers indicate the superior recipe in each setting.
    \add{The shaded regions represent the standard errors calculated across 3 random seeds.}
    }
    \label{fig:fragility}
\end{wrapfigure}
We train GPT2-Small (125M) on each dataset using two similar learning rates that reflect typical choices made by practitioners \citep{karpathy2022nanogpt}. The results reveal that dataset rankings can be inconsistent with respect to the chosen learning rate. At a lower learning rate, DCLM is superior on both the validation loss and the downstream benchmarks. However, a slight increase in learning rate reverses this performance ranking. This reversal likely stems from DCLM's relatively loose deduplication criteria, which tend to favor smaller learning rates, whereas its more stringently deduplicated variant performs better with larger learning rates. The sensitivity to learning rate selection highlights a critical limitation of small-scale proxy experiments: conclusions drawn from small-scale training runs with \emph{fixed} configurations can overfit to those specific settings, potentially leading to suboptimal data curation decisions when scaling to large-scale model training.

% \vspace{-1mm}
\textbf{High-level intuition: The curse of higher-order effects.} 
To gain intuition about why different learning rates lead to inconsistent data recipe rankings in proxy models, we consider a highly simplified one-step gradient descent setting, where the change in validation loss $\ellval$ after one gradient update can be approximated through Taylor expansion:  
\begin{align*}
\Delta \ellval(\model) 
= \ellval(\model - \eta \nabla \ell(\model)) - \ellval(\model) 
\approx -\eta \nabla \ellval(\model) \cdot \nabla \ell(\model) + \frac{\eta^2}{2} \nabla \ell(\model)^T H_{\ellval}(\model) \nabla \ell(\model)
\end{align*}
where $H_{\ellval}$ is the Hessian of the validation loss. Consider two datasets $D_i$ and $D_j$ with corresponding training losses $\ell_i$ and $\ell_j$. When the learning rate is low, their ranking primarily depends on the first-order gradient alignment terms: $\nabla \ellval(\model) \cdot \nabla \ell_i(\model)$ versus $\nabla \ellval(\model) \cdot \nabla \ell_j(\model)$. However, at moderate learning rates, the second-order term becomes significant. For example, two datasets $D_i$ and $D_j$ that have better first-order gradient alignment can reverse their ordering once the quadratic terms outweigh this difference at moderate values of $\eta$. Overall, the ranking flip can occur as the learning rate moves beyond the tiny regime, where the higher-order curvature term can dominate the change in the loss, override first-order alignment, and ultimately flip the dataset rankings.

\begin{figure}[t]
    \centering
    \setlength\intextsep{0pt}
    \setlength\abovecaptionskip{2pt}
    \setlength\belowcaptionskip{-10pt}\includegraphics[width=\linewidth]{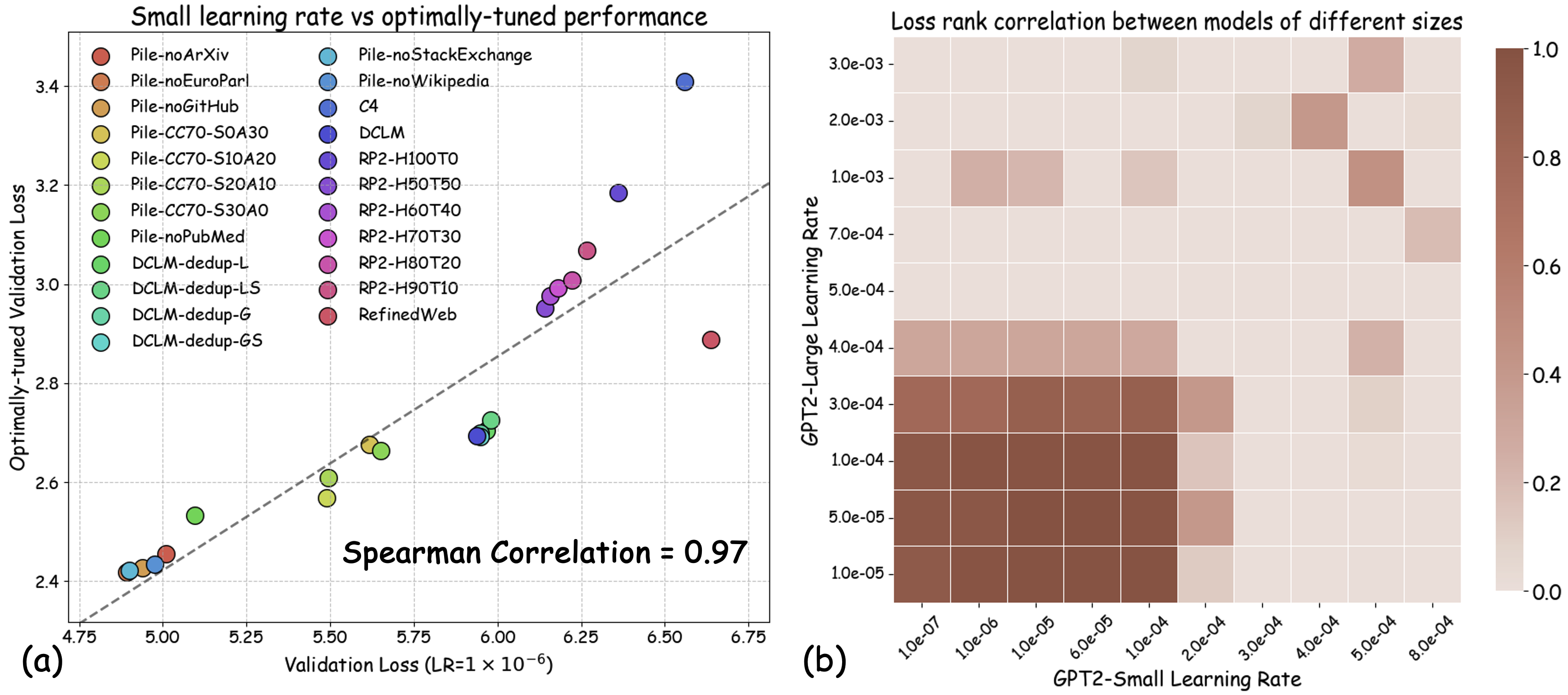}
    \caption{
    \textbf{(a)} Correlation between losses of GPT2-Small trained with a small learning rate $1\times 10^{-6}$ versus optimally tuned hyperparameters, evaluated on Pile's validation split. Each point represents one dataset, demonstrating that tiny learning rate performance is strongly correlated with the optimal performance for the same model architecture. \textbf{(b)} Heatmap showing Spearman rank correlation between dataset rankings from proxy (GPT2-Small) and target (GPT2-Large) models trained with varying learning rate combinations. High correlation (darker area) in the lower-left indicates that when both models use tiny learning rates, dataset rankings are preserved across model scales.
    }
    \label{fig:correlation}
\end{figure}

% \vspace{-2mm}
\section{Training small proxy models with tiny learning rates improves transferability to tuned target models}
\label{sec:method}

% \vspace{-1mm}
To reduce hyperparameter sensitivity in data recipe ablation and better align with the practical goal of identifying the recipe with the highest performance potential, one straightforward approach is to sweep through extensive hyperparameter configurations for each candidate data recipe and compare their optimal validation loss across all settings. 
However, this requires numerous training runs per candidate data recipe, which can be prohibitively expensive even with smaller proxy models. Moreover, training randomness can introduce significant variance in results, further reducing the reliability of such comparisons. Building on insights from our one-step gradient update analysis, we propose a simple yet effective solution: \emph{train proxy models with very small learning rates} and rank data recipes by their validation losses under this regime.

% \vspace{-1mm}
\subsection{Empirical Findings \& High-level Intuition}
\label{sec:empirical-finding}

\newcommand{\ellvalopt}{\ell_{\text{val-opt}}}
\newcommand{\etaproxy}{\eta_{\text{proxy}}}
\newcommand{\etatgt}{\eta_{\text{tgt}}}

% \vspace{-1mm}
\textbf{Notations.} We denote the optimal validation loss achievable for a dataset~$D$ by $\ellvalopt(\model(D)) = \min_{\eta \in \Lambda} \ellval(\model(D; \eta))$ where the minimum is taken over the full hyper-parameter space $\Lambda$. In our experiment, this optimum is approximated via a wide range of hyperparameter sweeps. 
Our pilot study (see Section \ref{sec:eval} and Appendix \ref{appendix:eval}) shows that dataset rankings are \emph{most} sensitive to the learning rate, while relatively stable across other major hyperparameters such as batch size. Therefore, throughout this section, we simplify notation and write $\model(D; \eta)$ to denote a model trained with learning rate $\eta$ while all other hyperparameters are fixed to standard values from prior literature (details in Appendix \ref{appendix:eval}). 
Given two datasets $D_i$ and $D_j$, we denote by $\Delta\ellval^{(i, j)}(\model, \eta)$ the difference in validation loss when a model $\model$ is trained on these datasets using a fixed learning rate $\eta$, and by $\Delta\ellvalopt^{(i, j)}(\model)$ the difference in their optimal validation losses after full hyperparameter tuning:
\begin{align*}
\Delta\ellval^{(i, j)}(\model, \eta) = \ellval(\model(D_i; \eta)) - \ellval(\model(D_j; \eta)), ~~~\Delta\ellvalopt^{(i, j)}(\model) = \ellvalopt(\model(D_i)) - \ellvalopt(\model(D_j))
\end{align*}
Our objective is to identify a specific learning rate $\eta_0$ for proxy models that reliably preserves the ranking of datasets based on their fully-optimized performance on large target models:
\begin{align}
\sign(\Delta\ellval^{(i, j)}(\modelproxy, \eta_0)) = \sign(\Delta\ellvalopt^{(i, j)}(\modeltgt))
  \label{eq:transfer-goal}
\end{align}
We present two key empirical findings (settings detailed in Appendix \ref{appendix:eval}) and the high-level intuition behind them, which motivates our solution of training proxy models with \emph{tiny learning rates}. 

% \vspace{-1mm}
\textbf{Finding I: For the same model, tiny learning rate performance correlates with optimal hyperparameter-tuned performance.} 
We first investigate the correlation between the losses for the same model architectures trained by different learning rates. 
Figure \ref{fig:correlation}(a) shows the correlation between GPT2-Small's validation loss on the Pile validation split when trained with a tiny learning rate versus its minimum validation loss achieved through extensive hyperparameter sweeps, evaluated across 23 data recipes. The strong correlation demonstrates that for a given model architecture, training with a tiny learning rate provides a reliable proxy for optimized performance after full hyperparameter tuning. The relative utility of datasets, as measured by tiny learning rate training, is preserved when the same model is optimally configured. 
Formally, for small $\eta_0$, we observe that
\begin{align}
\sign(\Delta\ellval^{(i, j)}(\modelproxy, \eta_0)) = \sign(\Delta\ellvalopt^{(i, j)}(\modelproxy))
\label{eq:transfer-samescale}
\end{align}
for most dataset pairs $(D_i, D_j)$. \textbf{Intuition:} In one-step gradient update, when the learning rate $\eta \rightarrow 0$, model updates are dominated by the first-order gradient alignment term $\nabla \ellval(\model) \cdot \nabla \ell(\model)$. This alignment measures the distributional similarity between training and validation datasets from the neural network's perspective, and is a metric widely used in data selection literature \citep{wang2024greats, fan2024doge, fan2024dynamic}. Therefore, the validation loss at an infinitesimal learning rate effectively upper-bounds the irreducible train–validation distribution gap, and its ranking across datasets is strongly correlated with the best loss each dataset can achieve after full hyperparameter tuning.

% \vspace{-1mm}
\textbf{Finding II: Dataset rankings remain consistent across model scales when both use tiny learning rates.} 
In Figure \ref{fig:correlation}(b), we visualize the heatmap of Spearman rank correlation coefficients between proxy and target model dataset rankings across various learning rate combinations. The lower-left region of the heatmap shows that when both proxy and target models are trained with tiny learning rates, the cross-scale transferability of dataset rankings achieves nearly perfect correlation. Formally, this suggests that when both $\eta_0$ and $\eta_1$ are small, we have 
\begin{align}
\sign(\Delta\ellval^{(i, j)}(\modelproxy, \eta_0)) = \sign(\Delta\ellval^{(i, j)}(\modeltgt, \eta_1))
\label{eq:transfer-crossscale}
\end{align}
for most dataset pairs $(D_i, D_j)$. \textbf{Intuition:} 
\add{As we discussed in the one-step gradient update analysis in Section \ref{sec:fragility}, training with tiny learning rates minimizes the higher-order interactions that can confound data recipe comparison when using varying learning rates.} While gradient alignment (the first-order term) is still model-dependent, its relative ranking across different candidate datasets tends to be preserved across model architectures \citep{sankararaman2020impact, vyas2023feature}. When both proxy and target models operate in the tiny-LR regime, their loss rankings are primarily determined by the distributional similarity between training and validation data. As a result, the validation-loss orderings remain relatively stable across different model scales, despite their differing capacities.

% \vspace{-1mm}
Collectively, the empirical observations in (\ref{eq:transfer-samescale}) and (\ref{eq:transfer-crossscale}) together lead to (\ref{eq:transfer-goal}) when using a small learning rate, motivating our strategy of ranking datasets by their tiny‑LR losses on proxy models. 
\add{Empirical evidence of this high-level intuition can be found in Appendix \ref{appendix:hypothesis-verify}.}

% \vspace{-1mm}
\subsection{Theoretical Analysis}
\label{sec:method-theory}

% \vspace{-1mm}
\add{Section \ref{sec:empirical-finding} uses a simplified one-step gradient update analysis to provide high-level intuition for the two empirical findings. To further support our findings, we formally prove that training random feature models aligns with the two empirical findings.} 
We choose the random feature model because it is one of the simplest models whose training dynamics at different learning rates and scales can be approximately characterized mathematically. As the training dynamics of deep neural networks are usually non-tractable, a series of recent works have used random feature models as proxies to understand the dynamics of neural networks under model scaling \citep{bordelon2024dynamical,lin2024scaling,medvedev2025weak}. The random feature model is also closely related to the Neural Tangent Kernel (NTK), a key concept in the theory of deep neural networks that has often yielded valuable insights for developing data-centric techniques \citep{arora2019exact,wang2024helpful}.

\begin{theorem}[Informal]
    Given two candidate data distributions $\DA$ and $\DB$,
    if the width of the random feature model is larger than a threshold,
    then after training on both datasets with learning rates $\eta$ small enough, with high probability, the relative ordering of $\ellval(\theta(\DA; \eta))$ and $\ellval(\theta(\DB; \eta))$ is the same as the relative ordering of the two validation loss values at the infinite-width limit.
\end{theorem}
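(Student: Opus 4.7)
The plan is to exploit the fact that the random feature model $f_N(x;\theta) = \frac{1}{\sqrt N}\theta^\top\sigma(Wx)$ is linear in its trainable weights $\theta$. For squared loss and gradient descent with learning rate $\eta$ started from $\theta_0 = 0$, the iterates $\theta_t$ admit a closed-form expression driven entirely by the empirical feature Gram matrix $\hat K_N(x,x') = \frac{1}{N}\sigma(Wx)^\top\sigma(Wx')$ evaluated on the training inputs drawn from $\DA$ or $\DB$. Consequently, $\ellval(\theta(\DA;\eta))$ becomes a deterministic function of $\hat K_N$, the fixed validation data, the labels, the number of steps $T$, and $\eta$; the same holds for $\DB$. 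This reduces the problem to tracking how much $\hat K_N$ fluctuates around its infinite-width limit.

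\textbf{Concentration to the infinite-width kernel.} Next I would invoke the standard infinite-width limit theory for random-feature kernels. Under mild assumptions on $W$ and $\sigma$ (sub-Gaussian entries plus Lipschitz activation suffice), matrix Bernstein or Hanson--Wright type inequalities give $\|\hat K_N - K_\infty\|_{\mathrm{op}} \leq C\sqrt{\log(1/\delta)/N}$ with probability at least $1-\delta$, where $K_\infty$ is the deterministic infinite-width kernel of the architecture. Pushing this perturbation through the closed-form GD trajectory, which is a polynomial of degree $T$ in $\eta\hat K_N$, and then through the validation loss yields a bound of the form $|\ellval(\theta(\DA;\eta)) - \ell^{\infty}(\DA;\eta)| \leq L(\eta,T)\cdot\|\hat K_N - K_\infty\|_{\mathrm{op}}$ for a Lipschitz constant $L(\eta,T)$ depending only on the data and optimization schedule, where $\ell^{\infty}(\DA;\eta)$ denotes the validation loss of the analogous GD procedure run against $K_\infty$. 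An identical bound holds for $\DB$.

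\textbf{Strict-gap argument.} The deterministic infinite-width values $\ell^{\infty}(\DA;\eta)$ and $\ell^{\infty}(\DB;\eta)$ either tie (a degenerate case excluded by the statement) or differ by a strict gap $\Delta(\eta) > 0$. In the latter case, choosing $N$ large enough that $L(\eta,T)\cdot\|\hat K_N - K_\infty\|_{\mathrm{op}} < \Delta(\eta)/3$ with probability at least $1-\delta$ forces $\sign(\ellval(\theta(\DA;\eta)) - \ellval(\theta(\DB;\eta))) = \sign(\ell^{\infty}(\DA;\eta) - \ell^{\infty}(\DB;\eta))$, which is exactly the conclusion of the theorem.

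\textbf{Main obstacle.} The hard part is keeping $L(\eta,T)$ and $\Delta(\eta)$ jointly under control. Smallness of $\eta$ is what makes the GD operator $(I - \eta\hat K_N)^t$ stable and keeps the polynomial Lipschitz constant tame, which is the role of the \emph{small enough} qualifier in the statement. However, as $\eta \to 0$ with $T$ fixed the gap $\Delta(\eta)$ also vanishes, so a naive width threshold would become vacuous. To avoid this I would parameterize the analysis by the effective training time $\tau = \eta T$, running the argument along sequences where $\tau$ stays in a range on which $\Delta(\cdot)$ is bounded below, and letting $\eta \to 0$ by inflating $T$. A secondary but routine issue is the sampling randomness in the training sets drawn from $\cDA$ and $\cDB$, which a standard uniform concentration argument absorbs without affecting the overall structure.
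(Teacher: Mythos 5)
Your overall skeleton (linearity in $\vtheta$, concentration of the empirical kernel to its infinite-width limit, a strict-gap argument) is reasonable, but as written it proves a different statement from the one the paper proves, and the difference is exactly the part you flag as the ``main obstacle'' without resolving it. In the paper, ``the validation loss at the infinite-width limit'' means the \emph{best achievable} validation loss $\cLval(f^*_D)$ --- the loss of the target function itself, equivalently the fully converged, optimally tuned kernel-regression solution (this identification is supplied by the realizability and compatibility assumptions on well-behaved distributions). Your anchor $\ell^{\infty}(D;\eta)$ is the infinite-width model run for the \emph{same} finite $\eta$ and $T$, which does not equal $\cLval(f^*_D)$ unless the effective training time $\eta T$ diverges. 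Your proposed fix --- holding $\tau=\eta T$ in a bounded range so that $\Delta(\tau)$ stays bounded below --- freezes the infinite-width trajectory at a non-converged point, so even a fully rigorous version of your argument would only show that finite-width rankings track infinite-width rankings \emph{at matched hyperparameters}, not rankings by optimal achievable loss. The paper instead imposes the lower bound $\eta \ge c_0/T$ precisely so that the contraction factor $(1-\eta\lambda_0)^T$ is negligible, i.e.\ the optimization error has decayed; combined with an approximation-error bound $\E[\cL_D(\vmu;\mU)]\le C_1^2/m$ (from realizability) and a density-ratio bound transferring training loss to validation loss, this shows the trained model's validation loss converges to $\cLval(f^*_D)$ itself, after which the sign comparison is immediate.

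The second gap is your treatment of stochasticity as ``routine.'' The paper's training procedure is one-pass mini-batch SGD over the population (there is no fixed training set to union-bound over), and the variance of the stochastic updates contributes a floor of order $\frac{\eta}{B}\cdot\frac{\tr(\mSigma)}{\lambda_0}$ to $\E\bigl[\normtwo{\vtheta_T-\vmu}^2\bigr]$. This floor is what forces the \emph{upper} constraint $\eta \le \alpha Bm$ in the theorem: it vanishes only because $\tr(\mSigma)\lesssim \cL_D(\vmu;\mU) \lesssim 1/m$ under the no-label-noise and realizability assumptions. In your full-batch closed-form picture this term never appears, so your ``small enough $\eta$'' is motivated only by stability of $(I-\eta\hat K_N)^t$ and misses the drift-versus-variance tradeoff that is the actual mechanism behind the result. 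To complete your route you would need to (i) let $\eta T\to\infty$ so the infinite-width dynamics converge, (ii) identify that limit with $f^*_D$ via realizability, and (iii) add the SGD variance analysis --- at which point you would have essentially reconstructed the paper's decomposition into approximation error, optimization error, and a covariate-shift transfer to the validation loss.
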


% \vspace{-1mm}
\add{See Appendix \ref{appendix:theory-sgd} for complete theorem statements and proofs.}

% \vspace{-1mm}
\textbf{Interpretation.}
\add{This theorem establishes that tiny-learning-rate training preserves dataset rankings relative to the \emph{infinite-width optimum}, which serves as a theoretical anchor for understanding cross-scale transferability. The infinite-width model provides a theoretical reference point for understanding large target models after extensive hyperparameter tuning. In the infinite-width limit, training random feature models is equivalent to solving a kernel regression problem, and the obtained solution captures the optimal validation loss achievable within the function class. Our theory demonstrates that both optimally-tuned large target models and tiny-learning-rate proxy models converge to the same dataset rankings dictated by this infinite-width optimum as model width increases. Consequently, dataset rankings from small-scale training with low learning rates are consistent with those from optimally-tuned large-scale training with high probability.}

% \vspace{-1mm}
\textbf{Intuition \& Proof sketch.} 
Similar to our intuition from one-step gradient update, our analysis of random feature models decomposes the change in validation loss under SGD into a deterministic \emph{drift} term that captures the true quality gap between datasets and a \emph{variance} term from stochastic updates. When a sufficiently wide model is trained with a small learning rate, the drift dominates the variance, so the observed sign of the loss difference matches the sign of the infinite-width optimum gap with high probability. This formalizes the intuition that small learning rates suppress the curse of higher-order effects that otherwise scramble rankings.

% \vspace{-1mm}
\subsection{An Empirical Characterization of ``Tiny'' Learning Rates}
\label{sec:characterize-lr}

% \vspace{-1mm}
Our analysis suggests the potential benefits of training proxy models with tiny learning rates. A natural question is \emph{how small should the learning rate be for reliable proxy training runs?} We provide both theoretical guidance and practical recommendations, with extended discussion in Appendix \ref{appendix:eta-upper-bound}. 

% \vspace{-1mm}
\textbf{Theoretical guidance.} 
Our theoretical analysis (detailed in Appendix \ref{appendix:eta-upper-bound}) establishes an upper bound for what constitutes a ``tiny" learning rate in the context of one-step mini-batch SGD. At a high level, we show that $\etatiny$ needs to be far smaller than $1/\lambda_{\max}$, where $\lambda_{\max}$ is the top eigenvalue of the validation loss Hessian. We stress that one-step analysis has often been shown to give quantitatively useful guidance when coupled with empirical validation (e.g. \cite{mccandlish2018empirical, smith2018don}). Following the established precedent, we treat our derived bound as principled guidance rather than as a rigid constraint. In Appendix \ref{appendix:eta-upper-bound}, we use model checkpoints that have undergone short warmup training to estimate $\etatiny$'s upper bound, and we show that these theoretically-predicted values consistently fall within the empirically-identified regime of learning rates with high transferability. 

% \vspace{-1mm}
\textbf{Choosing $\etatiny$ in practice: a simple rule of thumb.} 
While estimating $\etatiny$ from proxy model checkpoints is computationally inexpensive, we find that a simpler heuristic suffices for most LLM pretraining scenarios. 
In Appendix \ref{appendix:eta-upper-bound}, we discuss the relationship between $\etatiny$ and the standard learning rate for language model pretraining. Empirically, we find that \emph{using learning rates 1-2 orders of magnitude smaller than standard choices of learning rates} generally works well. For language model pretraining, this typically translates to learning rates of approximately $10^{-5}$ to $10^{-6}$. In Section \ref{sec:eval-results}, we will see that these values work consistently well across various model scales. 

% \vspace{-1mm}
Practical implementation also requires consideration of the lower bounds of $\etatiny$ due to numerical precision. However, we find that the recommended range of $10^{-5}$ to $10^{-6}$ lies comfortably above the threshold where numerical errors become significant in standard training. Our experiments confirm that learning rates in this range maintain numerical stability while achieving the desired transferability properties. We provide additional analysis of numerical considerations in Appendix \ref{appendix:eta-upper-bound}.

% \vspace{-1mm}
\section{Experiments}
\label{sec:eval}

% \vspace{-1mm}
\subsection{Experiment Settings}
\label{sec:eval-settings}

% \vspace{-1mm}
\textbf{Evaluation protocol.} 
We conduct comprehensive evaluations across multiple model architectures (GPT2, Pythia, OPT) spanning sizes from 70M to 1B parameters. We assess transferability using both the validation loss across diverse domains from The Pile corpus \citep{gao2020pile} and downstream benchmark performance on HellaSwag, Winogrande, OpenBookQA, ARC-Easy, and CommonsenseQA. Following standard practices, we primarily set the token-per-parameter ratio to 20, adhering to Chinchilla's compute-optimal ratio \citep{hoffmann2022training}. Additional ablation studies examining overtraining scenarios (with ratios up to 160) are presented in Appendix \ref{appendix:eval-other-hyperparameters}. \add{All experiments use single-epoch training without sample repetition, following standard LLM pretraining conventions. Crucially, target models undergo \emph{dataset-specific hyperparameter optimization} to align with our proposed methodology, with tuning budget held constant across all experiments.}

% \vspace{-1mm}
\textbf{Data Recipes.} To comprehensively evaluate proxy model transferability across the diverse data curation decisions faced in production scenarios, we construct 23 data recipes spanning four critical dimensions of language data curation. 
\textbf{(1) Domain composition and ablation.} We create 10 domain mixture variations from The Pile \citep{gao2020pile}. 
Specifically, 4 recipes maintain a fixed 70\% Pile-CC allocation while varying the remaining 30\% between StackExchange and ArXiv. Additionally, 6 recipes are created by excluding one domain from the full Pile dataset to assess domain importance. 
\textbf{(2) Existing dataset comparison.} We evaluate 3 widely-adopted pretraining corpora: C4 \citep{raffel2020exploring}, DCLM-baseline \citep{li2024datacomplm}, and RefinedWeb \citep{penedo2023refinedweb}. They all originate from Common Crawl but vary in curation strategies such as data filtering criteria and deduplication algorithms. 
\textbf{(3) Scoring-based data filter.} We construct 6 data recipes using different mixing ratios of head-middle versus tail partitions from RedPajama-V2 \cite{weber2024redpajama}, ranging from 50:50 to 100:0. These partitions are defined by the perplexity scores from a 5-gram Kneser-Ney model trained on Wikipedia \citep{wenzek2020ccnet}. \textbf{(4) Deduplication.} We create 4 variants of DCLM-baseline by modifying the stringency of Massive Web Repetition Filters \citep{rae2021scaling}, adjusting n-gram and line/paragraph repetition thresholds. The detailed description for the data recipes considered in this work is summarized in Appendix \ref{appendix:data-recipes} and Table \ref{table:dataset-recipes}. 

% \vspace{-1mm}
\textbf{Evaluation Metrics.} 
We report two complementary quantities that capture (i) rank fidelity and (ii) practical decision risk. \textbf{Spearman rank correlation.} This measures the monotone agreement between the proxy-induced ranking and the target-scale ranking, where the target ranking is obtained by training the target model on each dataset with its \emph{own tuned hyperparameters}. \textbf{Top-$k$ Decision Regret.} This addresses the practical question: if a data team selects the top-$k$ data recipes ranked by the proxy model for full-scale evaluation, how suboptimal might their final choice be compared with the best data recipe? Formally, we compute the performance difference between the truly optimal dataset and the best-performing dataset among the proxy's top-$k$ selections, with both datasets evaluated using their individually optimized hyperparameters on the target model. 

% \vspace{-1mm}
Due to space constraints, we leave detailed model training and other settings to Appendix \ref{appendix:eval-setting}.

\begin{figure}[t]
    \centering
    \setlength\intextsep{0pt}
    \setlength\abovecaptionskip{-1pt}
    \setlength\belowcaptionskip{0pt}
    \includegraphics[width=\linewidth]{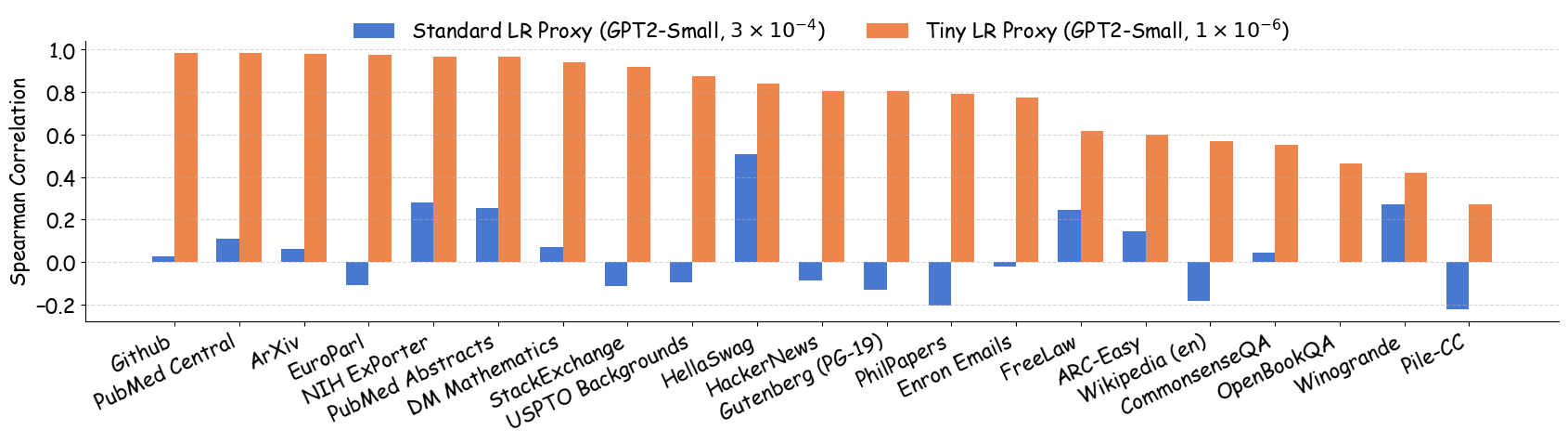}
    \caption{
    Average rank correlation between proxy (GPT2-Small) and target (GPT2-Large) for the loss computed over a variety of validation domains (from Pile) and downstream benchmarks. Compared to the standard learning rate used in the literature \citep{karpathy2022nanogpt}, a tiny learning rate consistently improves the proxy model's rank transferability, often by a significant margin.
    }
    \label{fig:bar-for-different-domains}
\end{figure}

% \vspace{-1mm}
\begin{remark}
A recently released benchmark suite, DataDecide \citep{magnusson2025datadecide}, provides up to 1B pre-trained models for efficient evaluation of proxy model transferability. However, they use a fixed training configuration across all datasets, which may cause the conclusions to overfit to the specific hyperparameter configurations, as we discussed earlier. Therefore, we conduct our own evaluation in which all target models are hyperparameter-tuned for each data recipe. 
\end{remark}

\begin{remark}[Computational challenges in evaluating proxy model transferability]
\label{remark:cost}
We consider large target models of size up to 1B parameters, which matches the largest target model size in the established DataDecide benchmark suite \citep{magnusson2025datadecide} developed by AI2 for evaluating proxy model transferability. Our experimental protocol requires training target models with extensive dataset-specific hyperparameter optimization, which further amplifies computational requirements in evaluation. However, this is essential for properly evaluating each dataset's optimal performance potential as advocated in our refined objective. 
Throughout this project, we completed $> 20,000$ model training runs (including both small proxy models and large target models). We leave the exploration of efficient evaluation protocols for proxy-to-target transferability as a future direction.
\end{remark}

\begin{figure}[t]
    \centering
    \setlength\intextsep{0pt}
    \setlength\abovecaptionskip{1pt}
    \setlength\belowcaptionskip{0pt}
    \includegraphics[width=\linewidth]{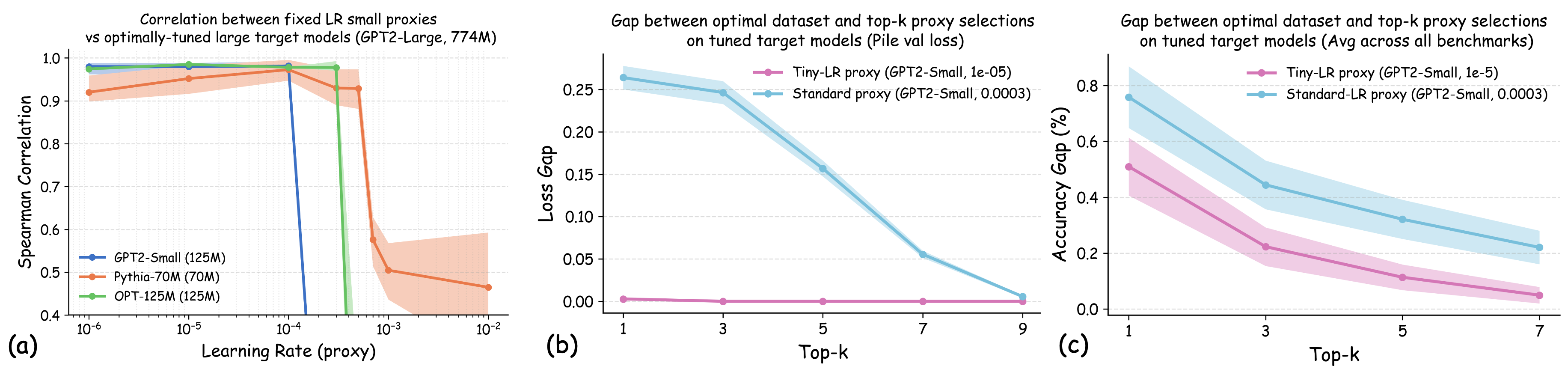}
    \caption{
    Dataset ranking consistency between proxy and target models across different learning rates, evaluated on validation loss and downstream benchmarks. \textbf{(a)} Spearman rank correlation between dataset rankings from proxy models (GPT2-Small, Pythia-70M, OPT-125M) and larger target model (GPT2-Large) as a function of proxy model learning rate, evaluated on aggregated Pile validation loss. 
    \textbf{(b)} Top-$k$ gap between optimal dataset and proxy selections on hyperparameter-optimized target models, measured using Pile validation loss. 
    \textbf{(c)} Top-$k$ gap measured using the averaged accuracies across the 5 benchmarks we considered (see Section \ref{sec:eval-settings}). 
    Gaps approaching zero indicate that proxy models reliably identify datasets that perform optimally on larger models. Shaded areas show 95\% bootstrap CIs computed by resampling seeds (3 runs per dataset). 
    Additional results are available in Appendix \ref{appendix:additional-results}. 
    }
    \label{fig:corr-vs-lr}
\end{figure}

% \vspace{-1mm}
\subsection{Results}
\label{sec:eval-results}

% \vspace{-1mm}
As shown in Figure \ref{fig:corr-vs-lr} (a), training proxy models with standard learning rates ($3 \times 10^{-4}$ for GPT2-Small \citep{karpathy2022nanogpt} and OPT-125M \citep{zhang2022opt}, $10^{-3}$ for Pythia-70M \citep{biderman2023pythia}) yields only modest ranking agreement, with Spearman rank correlation $\rho < 0.75$. This poor rank consistency can lead to suboptimal data selection decisions and performance degradation in large-scale model training. However, when the learning rate drops below $1 \times 10^{-4}$, we observe substantial improvements in ranking agreement. Under these smaller learning rates, the proxy models achieve performance ranking correlations above 0.92 across all three architectures. 
Figure \ref{fig:bar-for-different-domains} extends these results beyond aggregated validation loss to examine how small learning rates affect the reliability of proxy models across individual Pile domains and downstream benchmarks. The results demonstrate that training proxy models with reduced learning rates improves data recipe ranking correlations across nearly all evaluation metrics. In contrast, standard learning rate proxies exhibit catastrophic failure across most metrics, yielding near-zero or negative correlations in many domains.

% \vspace{-1mm}
Figure \ref{fig:corr-vs-lr} (b) and (c) demonstrate the practical implications of these ranking correlations through the top-$k$ decision regret metric. When data teams select the top-$k$ data recipes for large-scale model training based on small-scale experiments, proxies trained with tiny learning rates consistently minimize the performance gap relative to the ground-truth optimal data recipes for the target model. In contrast, standard learning rate proxies can incur $> 0.25$ validation loss degradation when $k$ is small. This stark difference demonstrates that conducting small-scale ablation studies with smaller learning rates can significantly enhance the reliability of data decisions in large-scale model training.

% \vspace{-1mm}
\section{Conclusion \& Future Works}
\label{sec:conclusion}

% \vspace{-1mm}
We demonstrate that standard data recipe ablations are brittle in small-scale settings; even slight adjustments to the learning rate can yield inconsistent findings. To address this, we proposed a simple yet effective solution: using \emph{tiny} learning rates during proxy model training. In the following, we discuss the scope of this work and outline several potential future directions. 

% \vspace{-1mm}
\add{\textbf{Extending proxy model techniques to multi-epoch and curriculum learning.} This work focuses on single-epoch LLM pretraining. Designing effective proxy training runs for multi-epoch scenarios presents significant challenges. A key question is how to subsample the full dataset while properly accounting for the repetition patterns that emerge across epochs. Similar technical difficulties arise in curriculum learning. Given the growing data scarcity, developing principled small-scale proxy training protocols for multi-epoch and curriculum-based learning is an important future direction.}

% \vspace{-1mm}
\textbf{Joint optimization of data and training configurations.} 
As we mentioned in the Introduction, our approach serves as a simple remedy to the proxy-model-based approach. However, because the challenge stems from the strong coupling between training configurations and datasets, we argue that, in the long term, datasets and tunable training configurations should be jointly optimized. Recent advances in gradient-based hyperparameter optimization \citep{lorraine2020optimizing} and algorithm unrolling \citep{chen2022learning} may provide inspiration and building blocks for this direction. 

\clearpage

\section*{acknowledgement}

This work is supported in part by the Apple PhD Fellowship. Ruoxi Jia and the ReDS lab acknowledge support through grants from the National Science Foundation under grants IIS-2312794, IIS-2313130, and OAC-2239622.

We thank Lin Chen, Mohammadhossein Bateni, Jennifer Brennan, Clayton Sanford, and Vahab Mirrokni at Google Research for their helpful feedback on the preliminary version of this work.

\bibliography{ref}
\bibliographystyle{iclr2026_conference}

\appendix

\clearpage

\section{Extended Background \& Related Works}
\label{appendix:proxy-morebackground}

\subsection{Taxonomy of Proxy-Model-Based Methodology}

In the literature, proxy-model-based approaches for data quality optimization typically fall into three main categories, each of which uses different strategies to reduce computational costs. 

\textbf{(1) Model-scale proxies} maintain the full training schedule but use models with fewer parameters (e.g., reduced layer count or hidden dimensions) for the proxy training run. This approach is relatively common in early literature where the dataset sizes are relatively small \citep{coleman2020selection}. We found one example in large language model pretraining \citep{xie2023doremi}. 

\textbf{(2) Data-scale proxies} preserve the target model architecture while drastically reducing the number of training iterations or tokens processed \citep{kang2024autoscale}. This strategy is commonly used for optimizing training data in continual pretraining scenarios and is particularly prevalent in developing compact models such as Phi-4 \citep{abdin2024phi}. The approach operates under the fundamental assumption that early training dynamics provide sufficient predictive signals to forecast long-term convergence properties and final model performance accurately.

\textbf{(3) Ratio-preserving proxies} simultaneously reduce both model and data scale, generally maintaining consistent scaling relationships between them (e.g., \citep{li2024datacomplm, magnusson2025datadecide}). 
In language model training, practitioners typically preserve the parameter-to-token ratio prescribed by Chinchilla scaling laws \citep{hoffmann2022training}. This balanced approach during downscaling ensures that proxy models operate in appropriate regimes—neither underfitting nor overfitting relative to their capacity—which better mimics the learning dynamics of full-scale target models. 

\textbf{Scope of this work:} 
Our work primarily focuses on \textbf{(3) Ratio-preserving proxies}, representing the most widely adopted approach in large-scale language model development. A systematic study examining the other two proxy training methodologies and their comparative effectiveness remains an important direction for future research.

\begin{remark}[Architecture and scale considerations.]
We emphasize that this work focuses specifically on optimizing training hyperparameters for proxy models, particularly the learning rate, rather than addressing questions about proxy model architecture or scale selection. While our experiments demonstrate the effectiveness of tiny learning rates across three different model families (GPT-2, Pythia, and OPT) with varying model sizes, we do not delve into how architectural choices or the proxy-to-target size ratio affect the reliability of dataset selection. The question of how large a proxy model should be relative to the target model, and whether certain architectural families provide better transferability than others, is an important future work. Our theoretical analysis in Section \ref{sec:method-theory} for random feature models provides initial insights into width requirements, but extending this understanding to modern transformer architectures and establishing principled guidelines for proxy model sizing remains an open challenge.
\end{remark}

\subsection{Extended Related Works}

\textbf{Proxy models in data-centric machine learning.} 
Proxy models have emerged as a standard efficiency solution across the literature of data-centric machine learning algorithms that require model retraining. This approach originated from early works demonstrating that feature representations from small, computationally efficient proxy models could effectively substitute for representations from larger, more accurate target models in the task of coreset selection and active learning \citep{lewis1994heterogeneous, coleman2020selection}. The research community has leveraged this technique to accelerate various dataset optimization algorithms, including data mixture optimization (e.g., \citep{xie2023doremi, chen2023skill}), data point filtering (e.g., \citep{mekala2024smaller, ni2024small}), active learning (e.g., \citep{wen2024feature}), and prompt engineering (e.g., \citep{liu2024tuning, hong2024dp}). Recent works have even extended this paradigm to knowledge distillation \citep{rawat2024little}, where small proxy models guide the early training stages of larger models.

\textbf{Data-dependent scaling laws.}
A parallel research direction explores using proxy model performance to directly predict target model performance by fitting scaling laws. For instance, \citep{anugraha2024proxylm} fits regression models on proxy performance metrics to forecast target model outcomes. \cite{brandfonbrener2024loss} shows that test losses between two models trained on separate datasets can follow predictable scaling patterns, though their work specifically examines paired model and dataset size configurations. The line of works that are most relevant to this paper \citep{liu2024regmix, ye2024data} use small proxy models to fit data mixture-dependent scaling laws that predict target model performance across different mixture configurations. They then use these predictions to optimize the composition of training data mixtures. 
Another work by \citep{kang2024autoscale} explores an alternative proxy approach, where models with identical architecture are trained for fewer iterations, and the resulting performance curves are extrapolated through scaling laws to predict outcomes at extended training durations. 
However, these approaches share a limitation: all of the proxy training runs use identical, fixed hyperparameters across all data mixtures. As discussed in our paper, this approach can inadvertently favor datasets that align well with the chosen training configuration, potentially distorting the resulting data mixture recommendations. Revisiting these data mixture optimization methods with hyperparameter-tuned proxy models is an interesting and promising way to extend our findings.

\textbf{Understanding proxy-to-target transferability.} 
Despite the widespread use of proxy models, relatively few studies have systematically investigated the reliability and conditions under which proxy model conclusions transfer effectively to large target models. \citep{khaddaj2025small} empirically showed that proxy model losses generally correlate strongly with those of large-scale models across various training datasets, though the relationship depends somewhat on specific test domains or downstream tasks. Their work demonstrates that data attribution and selection tasks can be reliably performed using smaller proxy models. More recently, \citep{magnusson2025datadecide} released DataDecide, a comprehensive suite containing over 30,000 checkpoints spanning 25 corpora. Their analysis shows that scaling law-based approaches do not consistently outperform single-scale proxy models for performance prediction. Theoretical analysis in this area remains notably sparse, with \citep{gu2025data} providing a rare exception by proving that the optimal ratio between factual and web-scraped data can shift significantly when scaling from small proxy models to larger target models. However, these studies neither identified the specific conditions that lead to high transferability nor examined how training hyperparameters affect proxy model reliability. Our work significantly extends this line by demonstrating that proxy training runs with \emph{tiny learning rates} lead to strong transferability across model scales, offering a computationally efficient yet more reliable enhancement to existing proxy-model pipelines. 

\textbf{Connections to hyperparameter transferability.}
The proxy-model-based dataset selection problem bears striking similarities to well-documented challenges in hyperparameter optimization, where configurations optimized for small-scale models often transfer poorly to larger architectures \citep{li2020rethinking,yang2022tensor}. While solutions like $\mu$-parameterization have been proposed to address hyperparameter transferability across model scales, their application to dataset transferability remains unexplored. $\mu$-parameterization might theoretically improve data transferability when small and large models train for identical iteration counts, but proxy-based techniques typically train on substantially reduced time. 
An interesting future direction is to explore whether analogous principles can be developed specifically for training data's transferability. On the other hand, our work demonstrates that simply training proxy models with tiny learning rates, without modifying any other training configurations, significantly improves cross-scale data transferability.

\clearpage

\section{Theoretical Analysis}
\label{appendix:theory-sgd}

In this section, we analyze two-layer random feature networks and prove that the transferability to datasets' minimum achievable performance holds when the proxy models are trained with tiny learning rates. 

\subsection{Setup}

\textbf{Random Feature Networks.}
We analyze two-layer random feature networks, also known as random feature models.
Let $\sigma : \reals \to \reals$ be a Lipschitz function. The weights in the first layer $\mU := (\vu_1, \dots, \vu_m) \in \R^{d \times m}$ are drawn independently from a bounded distribution~$\cU$ on $\R^d$.
For all $\vx$ in a compact input space $\cX \subseteq \R^d$, define the feature map
\begin{align}
\phi^{(\mU)}_i(\vx) = \sigma(\inne{\vu_i}{\vx}), \qquad \phi^{(\mU)}(\vx) = (\phi^{(\mU)}_1(\vx), \dots, \phi^{(\mU)}_m(\vx))^{\top} \in \reals^{m}.
\end{align}
The second layer is parameterized by a weight vector $\vtheta \in \R^m$. Given the first-layer weights $\mU$ and the second-layer weights $\vtheta$, the two-layer network outputs
\begin{align}
    f^{(\mU)}_{\vtheta}(\vx) = \frac{1}{\sqrt{m}} \inne{\vtheta}{\phi^{(\mU)}(\vx)}.
\end{align}
This function computes a linear combination of randomly generated nonlinear features of the input~$\vx$.

\textbf{Data Distribution.}
A data distribution $D$ is a probability distribution over $\cX \times \reals$, where the marginal distribution on inputs $\vx$, denoted as $\Dx$, is assumed to admit a continuous density function over $\cX$.
Each sample $(\vx, y) \sim D$ consists of an input $\vx$ and a label $y$.
We say that $D$ has \textit{no label noise} if there exists a target function $f^*_D : \cX \to \reals$ such that $y = f^*_D(\vx)$ almost surely for $(\vx, y) \sim D$.

\textbf{Training.}
We define $\cP_m(D; \eta, B, T)$ as the distribution of $(\vtheta, \mU)$ of a width-$m$ network trained on a data distribution $D$ with the following procedure.
Define the MSE loss as:
\begin{align}
    \ell(f; \vx, y) := \frac{1}{2}(f(\vx) - y)^2,
    \qquad
    \cL_{D}(\vtheta; \mU) = \cE_{(\vx,y) \sim D} \left[ \ell(f^{(\mU)}_{\vtheta}; \vx, y) \right].
\end{align}
We randomly sample the first-layer weights $\mU \sim \cU^m$ as described above and freeze them. We then optimize the second-layer weights $\vtheta$ using one-pass mini-batch SGD with batch size $B$ and learning rate $\eta$, starting from $\vtheta_0 = \vzero$.
The update rule is given by:
\begin{align}
    \vtheta_{t+1} &= \vtheta_t - \frac{\eta}{B} \sum_{b=1}^{B} \left.\frac{\partial}{\partial \vtheta} \left(\ell(f^{(\mU)}_{\vtheta}; \vx_{t,b}, y_{t,b})\right)\right|_{\vtheta = \vtheta_t},
\end{align}
where $(\vx_{t,b}, y_{t,b})$ is the $b$-th training input and label in the $t$-th batch, sampled independently from~$D$.
$\cPm(D; \eta, B, T)$ is the distribution of $(\vtheta_T, \mU)$ obtained by running the above procedure for $T$ steps.

\textbf{Validation.} We fix a data distribution $\Dval$ for validation, called the validation distribution. The validation loss is given by
\begin{align}
    \cLval(f) = \cE_{(\vx,y) \sim \Dval} \left[ \ell(f; \vx, y) \right].
\end{align}
Given a training distribution $D$ with no label noise, we define the expected validation loss of a width-$m$ network trained on $D$ with batch size $B$ and learning rate $\eta$ for $T$ steps as
\begin{align}
    \cIvalm(D; \eta, B, T) = \E_{(\vtheta, \mU) \sim \cPm(D; \eta, B, T)} \left[ \cLval(f^{(\mU)}_{\vtheta}) \right].
\end{align}
Further, we define the best achievable validation loss of a width-$m$ network trained on $D$ with batch size $B$ for $T$ steps (after tuning the learning rate $\eta$) as
\begin{align}
    \cIvaloptm(D; B, T) &= \min_{\eta > 0}\left\{ \cIvalm(D; \eta, B, T) \right\}, \\
    \quad \etaopt^{(m)}(D; B, T) &= \argmin_{\eta > 0}\left\{ \cIvalm(D; \eta, B, T) \right\}.
\end{align}
\textbf{Understanding the Infinite-Width Limit.} Before stating our assumptions on the data distributions, we first provide some intuition on the infinite-width limit of the random feature model.
A random feature model with width $m$ naturally induces a kernel function 
$K^{(\mU)}(\vx, \vx') = \frac{1}{m} \langle \phi^{(\mU)}(\vx), \phi^{(\mU)}(\vx') \rangle$ on the input space $\cX$.
As $m \to \infty$, the randomness in the features $\phi^{(\mU)}(\vx)$ averages out due to the law of large numbers and the kernel $K^{(\mU)}(\vx, \vx')$ converges to a deterministic kernel $K(\vx, \vx')$:
\begin{align}
    K(\vx, \vx') = \E_{\vu \sim \cU}\left[ \sigma(\langle \vu, \vx \rangle)\sigma(\langle \vu, \vx' \rangle) \right].
\end{align}
In this limit, the kernel $K$ captures the expressive power of the infinite-width model: learning in this regime becomes equivalent to kernel regression using $K$. %Moreover, the training dynamics under small learning rates or in early training phases can be well-approximated by linear models in the RKHS defined by $K$.
The function class defined by the model becomes a Reproducing Kernel Hilbert Space (RKHS) associated with $K$, denoted as $\cH_K$.
The properties of this kernel—especially whether it is positive definite and how well it aligns with the target function—directly influence the learnability of the problem. With this in mind, we now formalize the assumptions we make about the data distributions in our theoretical analysis.

\textbf{Well-Behaved Distributions.} We say that a distribution $D$ with no label noise is well-behaved if the following conditions are satisfied:
\begin{enumerate}
    \item \textbf{Full Support on the Input Space.}
    The probability density of $\Dx$ is strictly positive on the input space $\cX$. This ensures that 
    $\Dx$ has full support on $\cX$.
    \item \textbf{Positive-Definite Kernel.}
    The kernel $K(\vx, \vx')$ has strictly positive minimum eigenvalue on $L^2(\Dx)$, the square-integrable function space under $\Dx$.
    This prevents the kernel function from being degenerate.
    \item \textbf{Realizability at Infinite Width.}
    The target function $f^*_D$ lies in the RKHS $\cH_K$ associated with the kernel $K$.
    More specifically, there exists $\nu: \supp(\cU) \to \R$ such that $\sup_{\vu \in \supp(\cU)} \abs{\nu(\vu)} < \infty$
    and it holds for all $\vx \in \cX$ that
    \begin{align}
        f^*_D(\vx) = \E_{\vu \sim \cU}[\sigma(\inne{\vu}{\vx})\nu(\vu)]
    \end{align}
    %That is, there exists $g \in \cH_K$ such that $f^*_D(\vx) = g(\vx)$ for all $\vx \in \cX$.
    This condition ensures that $f^*_D$ can be learned by kernel regression with kernel $K$ and 
    the random feature model with infinite width.
    \item \textbf{Compatible with the Validation Distribution.} The distribution $D$ satisfies the following:
    \begin{align}
        \lim_{m \to \infty} \lim_{T \to \infty} \E_{(\vtheta; \mU) \sim \cP^*_m(D; B, T)} \left[ \cL_D(\vtheta; \mU) \right] = 0,
    \end{align}
    where $\cP^*_m(D; B, T) = \cP_m(D; \etaopt^{(m)}(D; B, T), B, T)$ is the distribution of $(\vtheta, \mU)$ obtained by running SGD when the learning rate is optimally tuned for minimizing the validation loss.
    Intuitively, this condition means that asymptotically, the best way to minimize the validation loss is to fit $f^*_D$ exactly.
    In other words, although there is a distribution shift between $D$ and $\Dval$, when the network is sufficiently wide and the training is sufficiently long, tuning the learning rate $\eta$ for minimizing the validation loss is essentially equivalent to tuning it for fitting $f^*_D$ better on $D$.
    This is consistent with the typical phenomenon of validation loss, which usually decreases as training progresses.
    It is easy to see that this condition implies the following:
    \begin{align}
        \lim_{m \to \infty} \lim_{T \to \infty} \cIvaloptm(D; B, T) = \cLval(f^*_D),
    \end{align}
    where we call $\cLval(f^*_D)$ the \textit{best achievable validation loss} of the training distribution $D$.
    %Intuitively, this condition means that asymptotically, the best way to minimize the validation loss is to fit $f^*_D$ exactly.
    %In other words, although there is a distribution shift between $D$ and $\Dval$, when the network is sufficiently wide and the training is sufficiently long, tuning the learning rate $\eta$ for minimizing the validation loss is essentially equivalent to tuning it for fitting $f^*_D$ better on $D$.
\end{enumerate}

%\textit{compatible} with the validation distribution $\Dval$ if asymptotically, the best way to minimize the validation loss is to fit $f^*_D$ exactly. This is rigorously defined by the following condition:
%\begin{align}
%    \lim_{m \to \infty} \lim_{T \to \infty} \cIvaloptm(D; B, T) = \cLval(f^*_D).
%\end{align}
%We call $\cLval(f^*_D)$ the \textit{best achievable validation loss} of the training distribution $D$.
%In other words, although there is a distribution shift between $D$ and $\Dval$, when the network is sufficiently wide and the training is sufficiently long, tuning the learning rate $\eta$ for minimizing the validation loss is essentially equivalent to tuning it for fitting $f^*_D$ better on $D$.
%This is consistent with our experiments \kaifeng{need to refer to some experiments}.

\textbf{Main Result.} Now we are ready to state our main result. Suppose we have two training distributions $\DA$ and $\DB$. We want to know training on which distribution leads to a smaller validation loss.
Mathematically, we want to know which one has a smaller $\cIvaloptm(\,\cdot\,; B, T)$. 
\add{The following theorem shows that comparing $\cIvalm(\,\cdot\,; \eta, B, T)$ with a sufficiently small learning rate $\eta$ reliably preserves this ordering.}
\begin{theorem}[Main Theorem] \label{thm:main}
    Let $\DA$ and $\DB$ be two \emph{well-behaved} training distributions with no label noise.
    Assume that the best achievable validation loss of $\DA$ is different from that of $\DB$, i.e.,
    \begin{align}
        \DeltaAB := \cLval(f^*_{\DA}) - \cLval(f^*_{\DB}) \neq 0.
    \end{align}
    Then, there exist constants $m_0, T_0, c_0, \alpha, \eta_{\max} > 0$ such that
    for all $m \ge m_0$, $T \ge T_0$, $B \ge 1$, if $\eta$ lies in the range $\frac{c_0}{T} \le \eta \le \min\left\{ \alpha B m, \eta_{\max} \right\}$, then
    \begin{align}
        \sign\left(\cIvalm(\DA; \eta, B, T) - \cIvalm(\DB; \eta, B, T)\right) = \sign(\DeltaAB).
    \end{align}
\end{theorem}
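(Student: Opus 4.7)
The plan is to exploit that the random feature network is linear in $\vtheta$, reducing training to mini-batch SGD on a quadratic objective. With $\vphi^{(\mU)}(\vx) := \phi^{(\mU)}(\vx)/\sqrt{m}$, the SGD iteration becomes the affine recursion $\vtheta_{t+1} = (\mI - \eta\hat{\mSigma}_t)\vtheta_t + \eta\hat{\vc}_t$, where $\hat{\mSigma}_t, \hat{\vc}_t$ are unbiased mini-batch estimators of the population feature covariance $\mSigma_m := \E_{\Dx}[\vphi\vphi^\top]$ and cross-moment $\vc_m := \E_{D}[y\vphi]$. The overall strategy is to show that, in the stated regime of $(\eta,T,m,B)$, the expected validation loss $\cIvalm(D;\eta,B,T)$ falls within $\epsilon$ of the infinite-width benchmark $\cLval(f^*_D)$ uniformly for $D\in\{\DA,\DB\}$ and uniformly over admissible $(\eta,B)$; choosing $\epsilon < |\DeltaAB|/2$ then forces $\sign\bigl(\cIvalm(\DA;\cdot) - \cIvalm(\DB;\cdot)\bigr) = \sign(\DeltaAB)$ by the triangle inequality.

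Concretely, I would decompose
\begin{align*}
\cIvalm(D;\eta,B,T) - \cLval(f^*_D)
= \underbrace{\E\!\left[\cLval(f^{(\mU)}_{\vtheta_T}) - \cLval(f^{(\mU)}_{\vtheta^*_m})\right]}_{\text{SGD error}}
+ \underbrace{\E\!\left[\cLval(f^{(\mU)}_{\vtheta^*_m})\right] - \cLval(f^*_D)}_{\text{model-approximation error}},
\end{align*}
where $\vtheta^*_m := \mSigma_m^{-1}\vc_m$ is the population training-loss minimizer, well-defined once $m$ is large enough by the positive-definite-kernel assumption. Because $\cLval$ is a quadratic with Hessian $\mSigma^{\text{val}}_m := \E_{\Dvalx}[\vphi\vphi^\top]$, standard SGD-for-least-squares moment recursions yield
\begin{align*}
\text{SGD error} \;\lesssim\; \bigl\|(\mI - \eta\mSigma_m)^T\vtheta^*_m\bigr\|_{\mSigma^{\text{val}}_m}^{2} \;+\; \tfrac{\eta}{B}\,\tr\!\left(\mSigma^{\text{val}}_m \Xi\right),
\end{align*}
plus a cross term that is bounded by the bias, where $\Xi$ is the per-sample gradient-noise covariance. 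The condition $\eta \ge c_0/T$ (for $c_0$ large) drives the exponential-in-$\eta T$ bias below any prescribed tolerance, $\eta \le \eta_{\max}$ enforces stability $\eta \lambda_{\max}(\mSigma_m) < 2$, and $\eta \le \alpha Bm$ keeps the variance trace term small.

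For the model-approximation error, realizability at infinite width (condition~3) places $f^*_D$ in the RKHS $\cH_K$, so there is a sequence of weights with $f^{(\mU)}_{\vtheta} \to f^*_D$ in $L^2(\Dx)$ as $m \to \infty$; combined with the full-support and positive-definite-kernel conditions, this forces $f^{(\mU)}_{\vtheta^*_m} \to f^*_D$ in $L^2(\Dx)$, and condition~4 promotes this convergence into the validation metric to give $\E_{\mU}[\cLval(f^{(\mU)}_{\vtheta^*_m})] \to \cLval(f^*_D)$. Concentration of measure (since $\cU$ is bounded and $\sigma$ is Lipschitz) upgrades pointwise-in-$\mU$ convergence to a high-probability uniform rate, which together with the SGD bound completes the $\epsilon$-closeness argument and the sign conclusion. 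The main obstacle I anticipate is quantitative control of the pseudoinverse $\mSigma_m^{-1}$ as $m$ grows: $\lambda_{\min}(\mSigma_m)$ can shrink, which simultaneously threatens to inflate $\|\vtheta^*_m\|$ and slow the bias decay rate $(1 - \eta\lambda_{\min}(\mSigma_m))^T$. The way through is to work in the eigenbasis of $\mSigma_m$: realizability forces $\vtheta^*_m$ to concentrate on the leading eigendirections, so the \emph{effective} condition number along which the bias must decay is $O(1)$, and random-matrix concentration of the feature covariance closes the bound.
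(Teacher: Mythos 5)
Your proposal is correct and takes essentially the same route as the paper: reduce to showing $\labs{\cIvalm(D;\eta,B,T) - \cLval(f^*_D)} \le \abs{\DeltaAB}/3$ for each of $\DA,\DB$ and conclude by the triangle inequality, with the error split into an $O(1/m)$ approximation term from realizability, an exponentially decaying SGD bias controlled by $\eta \ge c_0/T$, and a variance term $\frac{\eta}{B}\cdot\frac{\tr(\mSigma)}{\lambda_{\min}}$ whose $O(1/m)$ noise trace at the optimum is exactly what makes $\eta \le \alpha B m$ the right upper constraint. The only divergences are minor bookkeeping: the paper bounds the \emph{training} loss and transfers it to the validation loss via a density-ratio inequality with the $(1+c)$/$(1+\tfrac1c)$ weights rather than measuring the SGD error directly in the validation metric, and the ``main obstacle'' you anticipate (shrinking $\lambda_{\min}$ of the feature covariance) is already ruled out by the positive-definite-kernel assumption plus operator concentration, which gives $\lambda_{\min}(\mH) \ge \tfrac12\lambda_{\min}(\cT^{(\infty)}) =: \lambda_0 > 0$ with probability $1 - C e^{-cm}$, so no effective-condition-number or eigenbasis argument is needed.
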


\textbf{Interpretation.}
\add{The theorem establishes that the sign of the difference in the expected validation loss of two datasets, $\sign(\cIvalm(\DA; \eta, B, T) - \cIvalm(\DB; \eta, B, T))$, when trained with a small, fixed learning rate $\eta$, is identical to the sign of the difference in their best achievable validation losses in the infinite-width limit, $\sign(\DeltaAB)$. 
For random feature models, this infinite-width limit, $\mathcal{L}_{\text{val}}(f^*_D)$, represents the theoretical optimum that a model can achieve after full hyperparameter tuning and sufficient scaling. Theorem \ref{thm:main} shows that for a given set of candidate data recipes, the ranking observed in the tiny-learning-rate regime accurately predicts the ranking according to the datasets' true potential. Importantly, as model size increases, the dataset rankings for optimally-tuned larger target models also converge to this infinite-width limit, thereby providing the necessary theoretical link to justify using small models with tiny learning rates for reliable data selection.}

\subsection{Proofs}

To prove the main theorem, it suffices to show the following lemma.
\begin{lemma}\label{lm:main-lemma}
    Let $D$ be a training distribution that has no label noise and is compatible with $\Dval$.
    Given $\Delta > 0$, there exist constants $m_0, T_0, c_0, \alpha, \eta_{\max} > 0$ such that
    for all $m \ge m_0$, $T \ge T_0$, $B \ge 1$, if $\eta$ lies in the range $\frac{c_0}{T} \le \eta \le \min\left\{ \alpha B m, \eta_{\max} \right\}$, then
    \begin{align}
        \labs{\cIvalm(D; \eta, B, T) - \cLval(f^*_D)} \le \Delta.
    \end{align}
\end{lemma}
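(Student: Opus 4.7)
The plan is to exploit the quadratic structure of the MSE loss together with positive-definiteness of the kernel $K$ to decompose the excess validation loss into three pieces — a deterministic optimization bias, a stochastic noise term, and a finite-width approximation error — each controlled by a different regime in the stated $\eta$-range. For any fixed first-layer weights $\mU$, the training objective $\cL_D(\cdot;\mU)$ is a strongly convex quadratic in $\vtheta$, so SGD reduces to a linear stochastic recursion with empirical risk minimizer $\vtheta^*(\mU)$. Standard random-feature concentration together with positive-definiteness of $K$ and full support of $\Dx$ gives $\lambda_{\min}(H^{(\mU)}) \ge \mu > 0$ and $\lambda_{\max}(H^{(\mU)}) \le L$ with high probability for all $m \ge m_0$, where $H^{(\mU)} := \frac{1}{m}\E_{\vx \sim \Dx}[\phi^{(\mU)}(\vx)\phi^{(\mU)}(\vx)^\top]$; boundedness of $\sigma$, $\supp(\cU)$, and $\cX$ gives $\|\phi^{(\mU)}(\vx)\|^2 = O(m)$ uniformly on $\cX$.

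I would bound the three pieces separately. For the \emph{optimization bias}, taking expectations of the SGD recursion gives $\E[\vtheta_{t+1}] = (I - \eta H^{(\mU)})\E[\vtheta_t] + \eta g^{(\mU)}$, so for $\eta \le \eta_{\max} \le 1/L$ contraction yields $\|\E[\vtheta_T] - \vtheta^*(\mU)\|_{H^{(\mU)}} \le \exp(-\eta T \mu)\|\vtheta^*(\mU)\|_{H^{(\mU)}}$; because $\|\vtheta^*(\mU)\|_{H^{(\mU)}} = \|f^{(\mU)}_{\vtheta^*(\mU)}\|_{L^2(\Dx)}$ is uniformly bounded in $m$ by the bounded-$\nu$ realizability assumption, the lower bound $\eta T \ge c_0$ drives this below $\Delta/3$ in function space once $c_0$ is chosen large. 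For the \emph{stochastic noise}, standard variance analysis for SGD on strongly convex quadratics gives a stationary variance $\E[\|\vtheta_T - \E[\vtheta_T]\|^2] = O(\eta/(B\mu))$; translating through $|f^{(\mU)}_\vtheta(\vx) - f^{(\mU)}_{\vtheta'}(\vx)| \le \|\phi^{(\mU)}(\vx)\|/\sqrt{m} \cdot \|\vtheta - \vtheta'\|$ with $\|\phi^{(\mU)}\|^2 = O(m)$ keeps the noise contribution to $\cLval$ at $O(\eta/B)$, below $\Delta/3$ under $\eta \le \min\{\alpha Bm, \eta_{\max}\}$. For the \emph{approximation error}, realizability writes $f^*_D(\vx) = \E_{\vu}[\sigma(\inne{\vu}{\vx})\nu(\vu)]$ with bounded $\nu$, and concentration of the sum $\frac{1}{m}\sum_i \sigma(\inne{\vu_i}{\vx})\nu(\vu_i)$ around its mean yields $\E_{\mU}[\|f^{(\mU)}_{\vtheta^*(\mU)} - f^*_D\|_{L^2(\Dx)}^2] = O(1/m)$, which together with the compatibility assumption (used to transfer from $L^2(\Dx)$ to a $\cLval$ bound) is below $\Delta/3$ by choosing $m_0$ large.

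The main obstacle is producing explicit, uniform-in-$m$ rates for all three pieces at once, since the compatibility assumption is stated only as an iterated limit $\lim_{m\to\infty}\lim_{T\to\infty}$ at the optimally tuned learning rate and therefore provides no direct quantitative control over an interval of $\eta$. The argument above effectively replaces that iterated limit with quantitative rates enabled by the positive-definite kernel and bounded-$\nu$ realizability assumptions; the most delicate step is the approximation bound, since no absolute continuity between $\Dvalx$ and $\Dx$ is assumed, so an $L^2(\Dx)$-approximation does not automatically imply a $\cLval$ bound — one either exploits Lipschitzness of $\sigma$ with compact $\supp(\cU)$ and $\cX$ to upgrade to a uniform-in-$\cX$ approximation, or re-invokes the compatibility assumption to handle this transfer. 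The constants are then fixed in order: $m_0$ first (spectral concentration and approximation), then $c_0$ (optimization contraction), and finally $\eta_{\max}$ and $\alpha$ (stability and noise), yielding the desired uniform bound over the stated $\eta$-range.
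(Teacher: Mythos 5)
Your overall architecture matches the paper's: a three-way split into finite-width approximation error, optimization bias controlled by $\eta T \ge c_0$, and stochastic noise controlled by the upper range of $\eta$, with spectral concentration of the random-feature kernel supplying a uniform $\lambda_{\min}$ lower bound and bounded-$\nu$ realizability supplying the $O(1/m)$ approximation rate. Your worry about transferring an $L^2(\Dx)$ bound to $\cLval$ is resolved by the well-behavedness assumptions exactly as you suspect: since $\Dx$ has a strictly positive continuous density on the compact $\cX$ and $\Dvalx$ has a continuous density, the ratio $\beta := \sup_{\vx \in \cX} \Dvalx(\vx)/\Dx(\vx)$ is finite, and the paper uses this $\beta$ (together with the elementary inequality $(a+b)^2 \le (1+c)a^2 + (1+\tfrac1c)b^2$, optimizing over $c$ at the end) rather than your uniform-approximation alternative. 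Your reading of the compatibility assumption as playing no quantitative role in this lemma is also correct.

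The genuine gap is in your noise term. You bound the stationary variance by $O(\eta/(B\mu))$ and the resulting loss contribution by $O(\eta/B)$, then claim this is below $\Delta/3$ under $\eta \le \min\{\alpha B m, \eta_{\max}\}$. But $\eta \le \alpha B m$ gives $\eta/B \le \alpha m$, which \emph{grows} with $m$; the constraint as stated does not control an $O(\eta/B)$ term at all. The missing ingredient is that the gradient-noise covariance at the minimizer satisfies $\tr(\mSigma) \le R^2\, \cL_D(\vmu; \mU)$, and --- because there is no label noise and $f^*_D$ is realizable at infinite width --- the residual $\E[\cL_D(\vmu;\mU)]$ is itself $O(1/m)$. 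The noise contribution to the loss is therefore $O(\eta/(Bm))$, and it is precisely the condition $\eta \le \alpha Bm$ that makes this $O(\alpha)$, hence $\le \Delta/3$ for small $\alpha$. Your version could still be salvaged by letting $\eta_{\max}$ shrink with $\Delta$ (the lemma permits $\Delta$-dependent constants, and $B \ge 1$ then gives $\eta/B \le \eta_{\max}$), but that renders the $\alpha Bm$ clause inert and leaves your stated accounting internally inconsistent; it also misses the structural reason the admissible learning rate is allowed to scale linearly in $Bm$. You should replace the generic ``stationary variance $O(\eta/(B\mu))$'' step with the bound $\tr(\mSigma) = O(1/m)$ before closing the argument.
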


\begin{proof}[Proof of \Cref{thm:main}]
    Applying \Cref{lm:main-lemma} to $\cD = \DA$ and $\cD = \DB$ with $\Delta = \frac{\DeltaAB}{3}$, we have constants $m^{\mathrm{A}}_0, T^{\mathrm{A}}_0, c^{\mathrm{A}}_0, \alpha^{\mathrm{A}}, \eta^{\mathrm{A}}_{\max} > 0$ and $m^{\mathrm{B}}_0, T^{\mathrm{B}}_0, c^{\mathrm{B}}_0, \alpha^{\mathrm{B}}, \eta^{\mathrm{B}}_{\max} > 0$ such that
    for all $m \ge \max\left\{m^{\mathrm{A}}_0, m^{\mathrm{B}}_0\right\}$, $T \ge \max\left\{T^{\mathrm{A}}_0, T^{\mathrm{B}}_0\right\}$, $B \ge 1$, if $\eta$ lies in the range $\frac{\min\left\{c^{\mathrm{A}}_0, c^{\mathrm{B}}_0\right\}}{T} \le \eta \le \min\left\{ \alpha^{\mathrm{A}} B m, \alpha^{\mathrm{B}} B m, \eta^{\mathrm{A}}_{\max}, \eta^{\mathrm{B}}_{\max} \right\}$, then
    \begin{align}
        \labs{\cIvalm(\DA; \eta, B, T) - \cLval(f^*_{\DA})} \le \frac{\DeltaAB}{3},
        \quad \labs{\cIvalm(\DB; \eta, B, T) - \cLval(f^*_{\DB})} \le \frac{\DeltaAB}{3}.
    \end{align}
    Therefore,
    $\sign\left(\cIvalm(\DA; \eta, B, T) - \cIvalm(\DB; \eta, B, T)\right) = \sign(\DeltaAB)$
    as desired.
\end{proof}

In the rest of this section, we prove \Cref{lm:main-lemma} for a fixed training distribution $D$.
In the following, we define the following quantities depending on the first-layer weights $\mU$:
\begin{align}
    \mH &:= \frac{1}{m} \E_{(\vx, y) \sim \cD} \left[ \phi^{(\mU)}(\vx) \phi^{(\mU)}(\vx)^{\top} \mid \mU \right], \\
    \vbeta &:= \frac{1}{\sqrt{m}}\E_{(\vx, y) \sim \cD} \left[ y\phi^{(\mU)}(\vx) \mid \mU \right], \\
    \vmu &:= \mH^+ \vbeta \in \argmin_{\vtheta \in \R^m} \cL_D(\vtheta; \mU), \\
    \mSigma &:= \frac{1}{m}\E_{(\vx, y) \sim \cD} \left[ \left(y - f^{(\mU)}_{\vmu}(\vx)\right)^2 \phi^{(\mU)}(\vx) \phi^{(\mU)}(\vx)^{\top} \;\middle|\; \mU \right].
\end{align}

\subsubsection{Loss Decomposition}
First, 
we present the following lemma that decomposes the validation loss of any random feature model into the best achievable validation loss plus some error terms.
\begin{lemma}
    For all $\mU \in \R^{d \times m}$ and $\vtheta \in \R^m$,
    \begin{align}
        \forall c > 0: \quad
            \cLval(f_{\vtheta}^{(\mU)}) &\le (1+c)\cLval(f^*_D) + \beta \cdot (1+\tfrac{1}{c}) \cL_D(\vtheta; \mU), \\
        \forall c \in (0, 1]: \quad
            \cLval(f_{\vtheta}^{(\mU)}) &\ge (1-c)\cLval(f^*_D) - \beta \cdot (\tfrac{1}{c}-1) \cL_D(\vtheta; \mU),
    \end{align}
    where $\beta := \sup_{\vx \in \cX} \frac{\Dvalx(\vx)}{\Dx(\vx)}$.
\end{lemma}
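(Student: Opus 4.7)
The plan is to pivot the validation residual around the target function $f^*_D$ and then absorb the resulting cross term via a free parameter $c$, transferring the excess-error term from $L^2(\Dvalx)$ to $L^2(\Dx)$ through the density-ratio constant $\beta$. Concretely, I decompose pointwise
\begin{equation*}
f_{\vtheta}^{(\mU)}(\vx) - y = (f^*_D(\vx) - y) + (f_{\vtheta}^{(\mU)}(\vx) - f^*_D(\vx)),
\end{equation*}
so that the squared residual splits into a ``validation irreducibility'' piece, an ``excess-error'' piece, and a cross term. Expanding the square and applying the Young-type bound $2ab \le ca^2 + b^2/c$ in one direction and $2ab \ge -(ca^2 + b^2/c)$ in the other yields the pointwise inequalities $(a+b)^2 \le (1+c)a^2 + (1+1/c)b^2$ and $(a+b)^2 \ge (1-c)a^2 + (1-1/c)b^2$.

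For the upper bound, I take expectation of the first pointwise inequality under $\Dval$ to get
\begin{equation*}
\cLval(f_{\vtheta}^{(\mU)}) \le (1+c)\cLval(f^*_D) + \tfrac{1}{2}(1+\tfrac{1}{c})\,\E_{\vx \sim \Dvalx}\!\left[(f_{\vtheta}^{(\mU)}(\vx) - f^*_D(\vx))^2\right].
\end{equation*}
The excess-error expectation is then moved to the training input marginal by the Radon--Nikodym change of measure $\Dvalx \ll \Dx$, which costs a multiplicative factor $\beta = \sup_{\vx} \Dvalx(\vx)/\Dx(\vx)$. Finally, the no-label-noise assumption identifies $\tfrac{1}{2}\E_{\vx \sim \Dx}[(f_{\vtheta}^{(\mU)}(\vx) - f^*_D(\vx))^2]$ with $\cL_D(\vtheta; \mU)$, producing the claimed bound.

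For the lower bound I take expectation of the mirror inequality under $\Dval$, giving the analogous expression with $(1-c)$ and $(1-1/c)$ in place of $(1+c)$ and $(1+1/c)$. Restricting to $c \in (0,1]$ makes $(1-1/c) \le 0$, so I still apply the same upper bound $\tfrac12 \E_{\vx \sim \Dvalx}[(f_{\vtheta}^{(\mU)}(\vx) - f^*_D(\vx))^2] \le \beta\, \cL_D(\vtheta; \mU)$; multiplying by a nonpositive coefficient flips the inequality, yielding $(1-\tfrac{1}{c})\cdot(\text{excess term}) \ge (1-\tfrac{1}{c})\beta\, \cL_D(\vtheta; \mU) = -(\tfrac{1}{c}-1)\beta\, \cL_D(\vtheta; \mU)$, which is precisely the stated form.

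The main subtlety is the sign bookkeeping in the lower-bound step: the same auxiliary inequality that \emph{tightens} the upper bound must be re-read as \emph{loosening} the lower bound once the coefficient $(1-1/c)$ becomes nonpositive, which is exactly why the statement restricts to $c \in (0,1]$ (beyond this range the $\cLval(f^*_D)$ coefficient turns negative and the bound becomes vacuous). Everything else—the Young-type inequalities, the measure change, and the identification of $\tfrac12 \E_{\vx \sim \Dx}[(f_{\vtheta}^{(\mU)}(\vx)-f^*_D(\vx))^2]$ with $\cL_D(\vtheta;\mU)$ under the no-label-noise hypothesis—is mechanical once the pivot decomposition around $f^*_D$ is in place.
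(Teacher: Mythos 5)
Your proposal is correct and follows essentially the same route as the paper's proof: pivot the residual around $f^*_D$, apply the Young-type inequalities $(a+b)^2 \le (1+c)a^2 + (1+\tfrac{1}{c})b^2$ and $(a+b)^2 \ge (1-c)a^2 - (\tfrac{1}{c}-1)b^2$, change measure from $\Dvalx$ to $\Dx$ at cost $\beta$, and identify the excess-error term with $\cL_D(\vtheta;\mU)$ via the no-label-noise assumption. Your sign bookkeeping for the lower bound and the explanation of the restriction to $c \in (0,1]$ match the paper's argument.
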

\begin{proof}
    For the first inequality,
    since $(a+b)^2 \le (1+c)a^2 + (1 + \frac{1}{c})b^2$ for all $c > 0$, we have
    \begin{align*}
        \cLval(f_{\vtheta}^{(\mU)}) &= \frac{1}{2} \E_{(\vx, y) \sim \Dval} \left[ (f_{\vtheta}^{(\mU)}(\vx) - y)^2 \right] \\
        &\le \tfrac{1+c}{2} \E_{(\vx, y) \sim \Dval} \left[(y - f^*_D(\vx))^2 \right] + \tfrac{1+\tfrac{1}{c}}{2} \E_{(\vx, y) \sim \Dval} \left[ (f_{\vtheta}^{(\mU)}(\vx) - f^*_D(\vx))^2 \right] \\
        &\le (1+c) \cLval(f^*_D) + \beta \cdot (1+\tfrac{1}{c}) \cL_D(\vtheta; \mU),
    \end{align*}
    where the last inequality follows from the fact that $\E_{(\vx, y) \sim \Dval} \left[ (f_{\vtheta}^{(\mU)}(\vx) - f^*_D(\vx))^2 \right] = \int_{\cX} (f_{\vtheta}^{(\mU)}(\vx) - f^*_D(\vx))^2 \Dvalx(\vx) d\vx \le \beta \int_{\cX} (f_{\vtheta}^{(\mU)}(\vx) - f^*_D(\vx))^2 \Dx(\vx) d\vx$.

    For the second inequality,
    since $(a+b)^2 \ge (1-c)a^2 - (\frac{1}{c}-1)b^2$ and $\frac{1}{c}-1 > 0$ for all $c \in (0, 1)$,
    we have
    \begin{align*}
        \cLval(f_{\vtheta}^{(\mU)}) &= \frac{1}{2} \E_{(\vx, y) \sim \Dval} \left[ (f_{\vtheta}^{(\mU)}(\vx) - y)^2 \right] \\
        &\ge \tfrac{1-c}{2} \E_{(\vx, y) \sim \Dval} \left[(y - f^*_D(\vx))^2 \right] - \tfrac{\tfrac{1}{c}-1}{2} \E_{(\vx, y) \sim \Dval} \left[ (f_{\vtheta}^{(\mU)}(\vx) - f^*_D(\vx))^2 \right] \\
        &\ge (1-c) \cLval(f^*_D) - \beta \cdot (\tfrac{1}{c} - 1) \cL_D(\vtheta; \mU),
    \end{align*}
    which completes the proof.
\end{proof}

Taking the expectation over $\mU$ and the training process, we have the following corollary.
\begin{corollary}\label{cor:decomp-corollary}
    For all $m \ge 1$, $\eta >0$, $B \ge 1$ and $T \ge 1$,
    \begin{align*}
        &\forall c \in (0, 1]: \quad
        \labs{\cIvalm(D; \eta, B, T) - \cLval(f^*_D)} \le c \cLval(f^*_D) + \beta \cdot (1 + \tfrac{1}{c}) \bar{L}_D^{(m)}(\eta, B, T),
    \end{align*}
    where $\bar{L}_D^{(m)}(\eta, B, T) := \E_{(\vtheta, \mU) \sim \cPm(D; \eta, B, T)} \left[ \cL_D(\vtheta; \mU) \right]$.
\end{corollary}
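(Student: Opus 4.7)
The plan is to derive the Corollary as an immediate consequence of the two pointwise inequalities stated in the preceding lemma, by taking expectations and combining the upper and lower bounds into a single two-sided (absolute-value) bound. Because the lemma's inequalities hold for \emph{every} choice of $(\vtheta, \mU)$, they remain valid after averaging over any distribution on $(\vtheta, \mU)$. The natural distribution to use is $\cPm(D; \eta, B, T)$, since by definition $\E_{(\vtheta,\mU) \sim \cPm(D;\eta,B,T)}\!\left[\cLval(f_{\vtheta}^{(\mU)})\right] = \cIvalm(D; \eta, B, T)$ and $\E_{(\vtheta,\mU) \sim \cPm(D;\eta,B,T)}\!\left[\cL_D(\vtheta; \mU)\right] = \bar{L}_D^{(m)}(\eta, B, T)$.

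Concretely, I would fix any $c \in (0, 1]$, which is the regime in which \emph{both} inequalities of the lemma are available simultaneously. Taking expectations on both sides of the upper inequality and then subtracting $\cLval(f^*_D)$ yields
\begin{equation*}
\cIvalm(D;\eta, B, T) - \cLval(f^*_D) \;\le\; c\,\cLval(f^*_D) + \beta\,(1 + \tfrac{1}{c})\,\bar{L}_D^{(m)}(\eta, B, T).
\end{equation*}
Doing the same with the lower inequality gives
\begin{equation*}
\cIvalm(D;\eta, B, T) - \cLval(f^*_D) \;\ge\; -c\,\cLval(f^*_D) - \beta\,(\tfrac{1}{c} - 1)\,\bar{L}_D^{(m)}(\eta, B, T).
\end{equation*}

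The final step is to consolidate these into an absolute-value bound. Since both $\cLval(f^*_D) \ge 0$ and $\bar{L}_D^{(m)}(\eta, B, T) \ge 0$, and since $1 + \tfrac{1}{c} \ge \tfrac{1}{c} - 1$ for all $c > 0$, the magnitude of the upper-bound expression dominates that of the lower-bound expression. Taking the maximum of the two right-hand sides therefore produces exactly $c\,\cLval(f^*_D) + \beta(1 + \tfrac{1}{c})\,\bar{L}_D^{(m)}(\eta, B, T)$, which is the bound claimed in the Corollary. There is no real analytical obstacle here — the derivation is pure bookkeeping via linearity of expectation — so the only subtlety worth flagging is that the restriction $c \in (0,1]$ is needed precisely to ensure that the lower bound of the lemma is applicable with the \emph{same} $c$ as the upper bound, so that a common $c$ can be chosen when combining them.
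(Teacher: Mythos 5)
Your proposal is correct and follows essentially the same route as the paper: take expectations of the two pointwise inequalities from the preceding lemma under $\cPm(D; \eta, B, T)$ and rearrange them into a two-sided bound, which the paper compresses into the single phrase ``rearranging the inequalities proves the result.'' Your extra observation that the upper-bound magnitude dominates the lower-bound magnitude (since $1 + \tfrac{1}{c} \ge \tfrac{1}{c} - 1$ and both loss quantities are nonnegative) is exactly the bookkeeping the paper leaves implicit.
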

\begin{proof}
    Taking the expectation over $(\vtheta, \mU) \sim \cPm(D; \eta, B, T)$, we have
    \begin{align}
        \cIvalm(D; \eta, B, T) &\le (1+c)\cLval(f^*_D) + \beta \cdot (1+\tfrac{1}{c}) \bar{L}_D^{(m)}(\eta, B, T), \\
        \cIvalm(D; \eta, B, T) &\ge (1-c)\cLval(f^*_D) - \beta \cdot (\tfrac{1}{c} - 1) \bar{L}_D^{(m)}(\eta, B, T).
    \end{align}
    Rearranging the inequalities proves the result.
\end{proof}

We further decompose $L_D^{(m)}(\eta, B, T)$ as follows:
\begin{align}
    L_D^{(m)}(\eta, B, T) = \underbrace{\E_{\mU \sim \cU^m}[\cL_D(\vmu; \mU)]}_{\text{approximation error}} + \underbrace{\E_{(\vtheta, \mU) \sim \cPm(D; \eta, B, T)}[\cL_D(\vtheta; \mU) - \cL_D(\vmu; \mU)]}_{\text{optimization error}}.
\end{align}
Now we analyze the approximation error and the optimization error separately.

\subsubsection{Approximation Error Analysis}

The following lemma gives an upper bound on the approximation error.
\begin{lemma} \label{lm:approx-error}
    For random feature models with width $m$, it holds that
    \begin{align}
        \E[\cL_D(\vmu; \mU)] \le \frac{C_1^2}{m},
    \end{align}
    where $C_1 := \sup_{\vx \in \cX} \sup_{\vu \in \supp(\cU)} \left\{ \abs{\sigma(\inne{\vu}{\vx})} \cdot \abs{\nu(\vu)} \right\}$.
\end{lemma}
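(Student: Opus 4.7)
The plan is to exploit the optimality of $\vmu$ as the minimizer of $\cL_D(\,\cdot\,; \mU)$ over $\vtheta \in \R^m$: it suffices to exhibit \emph{any} explicit $\tilde\vtheta \in \R^m$ whose expected loss is bounded by $C_1^2/m$. The realizability assumption on $D$ suggests a natural candidate: the function $\vu \mapsto \nu(\vu)$ witnesses $f^*_D(\vx) = \E_{\vu \sim \cU}[\sigma(\inne{\vu}{\vx})\nu(\vu)]$, so $f^*_D$ admits an integral representation. The width-$m$ random feature family can approximate this integral by a Monte Carlo average over the i.i.d.\ sample $\vu_1, \dots, \vu_m \sim \cU$. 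Concretely, I would define $\tilde\vtheta \in \R^m$ by $\tilde\vtheta_i := \nu(\vu_i)/\sqrt{m}$, so that
\begin{align*}
    f^{(\mU)}_{\tilde\vtheta}(\vx) = \frac{1}{\sqrt{m}} \sum_{i=1}^m \tilde\vtheta_i \sigma(\inne{\vu_i}{\vx}) = \frac{1}{m} \sum_{i=1}^m \nu(\vu_i)\sigma(\inne{\vu_i}{\vx}),
\end{align*}
which is exactly the empirical average approximating $f^*_D(\vx)$.

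Given this comparator, the bound follows by a single variance calculation. For each fixed $\vx \in \cX$, since $\vu_1, \dots, \vu_m$ are i.i.d.\ and $\E_{\vu \sim \cU}[\nu(\vu)\sigma(\inne{\vu}{\vx})] = f^*_D(\vx)$, we have
\begin{align*}
    \E_{\mU}\!\left[\left(f^{(\mU)}_{\tilde\vtheta}(\vx) - f^*_D(\vx)\right)^2\right]
    = \frac{1}{m}\Var_{\vu \sim \cU}\!\left[\nu(\vu)\sigma(\inne{\vu}{\vx})\right]
    \le \frac{1}{m}\,\E_{\vu}\!\left[\nu(\vu)^2\sigma(\inne{\vu}{\vx})^2\right] \le \frac{C_1^2}{m},
\end{align*}
where the last inequality invokes the definition of $C_1$ (the supremum of $|\nu(\vu)\sigma(\inne{\vu}{\vx})|$ over $\vx \in \cX$ and $\vu \in \supp(\cU)$). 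Since $D$ has no label noise, $y = f^*_D(\vx)$ almost surely, so using Fubini (legitimate because $\cX$ is compact and the integrand is uniformly bounded by $C_1^2$) together with the optimality of $\vmu$ yields
\begin{align*}
    \E_{\mU}[\cL_D(\vmu;\mU)]
    \le \E_{\mU}[\cL_D(\tilde\vtheta;\mU)]
    = \tfrac{1}{2}\,\E_{(\vx,y)\sim D}\E_{\mU}\!\left[(f^{(\mU)}_{\tilde\vtheta}(\vx) - f^*_D(\vx))^2\right]
    \le \frac{C_1^2}{2m} \le \frac{C_1^2}{m}.
\end{align*}
There is no real obstacle here; the only subtlety is to ensure the correct $1/\sqrt m$ scaling in the definition of $\tilde\vtheta$ so that it cancels the $1/\sqrt m$ prefactor in $f^{(\mU)}_{\vtheta}$ and leaves a clean empirical average, and to verify that the uniform bound $C_1$ from the assumption suffices to dominate the variance (making Hoeffding/second-moment control immediate without any further concentration machinery).
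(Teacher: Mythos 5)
Your proposal is correct and follows essentially the same route as the paper: both exhibit the Monte Carlo comparator $\tilde\vtheta_i = \nu(\vu_i)/\sqrt{m}$, invoke the optimality of $\vmu$, swap the order of expectations, and bound the resulting per-$\vx$ variance by $C_1^2$. Your version is slightly more careful in tracking the factor of $\tfrac12$ from the MSE loss, which the paper silently drops (harmless for the upper bound).
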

\begin{proof}
    Since the distribution $D$ is well-behaved, there exists a function $\nu: \supp(\cU) \to \R$ such that $\sup_{\vu \in \supp(\cU)} \abs{\nu(\vu)} < \infty$
    and it holds for all $\vx \in \cX$ that
    \begin{align}
        f^*_D(\vx) = \E_{\vu \sim \cU}[\sigma(\inne{\vu}{\vx})\nu(\vu)].
    \end{align}
    Let $\vtheta \in \R^m$ be the vector whose $i$-th entry is $\frac{1}{\sqrt{m}}\nu(\vu_i)$. Then, we have
    \begin{align}
        \E[\cL_D(\vmu; \mU)]
        \le \E[\cL_D(\vtheta; \mU)]
        &= \E\left[
            \E_{(\vx, y) \sim \cD}\left[
                \left(
                    y - \frac{1}{m} \sum_{i=1}^{m} \sigma(\inne{\vu_i}{\vx}) \nu(\vu_i)
                \right)^2
            \right]
        \right] \\
        &= \frac{1}{m}\E_{(\vx, y) \sim \cD} \Var_{\vu \sim \cU} \left[
            \sigma(\inne{\vu}{\vx}) \nu(\vu)
        \right] \\
        &\le \frac{C_1^2}{m},
    \end{align}
    where the last inequality follows from the fact that $\Var_{\vu \sim \cU} \left[ \sigma(\inne{\vu}{\vx}) \nu(\vu) \right] \le C_1^2$.
\end{proof}

\subsubsection{Optimization Error Analysis}

Now we analyze the optimization error.
We first present a series of lemmas that upper bound a few quantities that are related to the optimization error.

\begin{lemma}
    There exists a constant $R > 0$ such that the following conditions hold almost surely over the draw of $\mU$:
    \begin{enumerate}
        \item $\abs{\phi^{(\mU)}(\vx)} \le R$ for all $\vx \in \cX$;
        \item $\normtwo{\phi^{(\mU)}(\vx)} \le \sqrt{m} R$ for all $\vx \in \cX$;
        \item $\mH \preceq R^2 \mI$;
        \item $\tr(\mSigma) \le R^2 \cL_D(\vmu; \mU)$.
    \end{enumerate}
\end{lemma}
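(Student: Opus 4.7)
The plan is to derive all four conclusions from a single almost-sure pointwise bound on the feature coordinates, which I will call $R_0$, and then take $R := \sqrt{2}\, R_0$ at the end to absorb the MSE constant appearing in (4).

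First I would show that $\phi_i^{(\mU)}(\vx) = \sigma(\inne{\vu_i}{\vx})$ is uniformly bounded almost surely. Because $\cU$ has bounded support, $\normtwo{\vu_i} \le B_u$ almost surely for some constant $B_u$; because $\cX$ is compact, $\normtwo{\vx} \le B_x$ on $\cX$. Cauchy--Schwarz then gives $\abs{\inne{\vu_i}{\vx}} \le B_u B_x$ almost surely, uniformly in $\vx \in \cX$. The Lipschitz assumption on $\sigma$ yields $\abs{\sigma(t)} \le \abs{\sigma(0)} + L \abs{t}$, so $\abs{\phi_i^{(\mU)}(\vx)} \le \abs{\sigma(0)} + L B_u B_x =: R_0$, proving (1). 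Conclusion (2) is then immediate by summing coordinatewise: $\normtwo{\phi^{(\mU)}(\vx)}^2 = \sum_{i=1}^{m} \phi_i^{(\mU)}(\vx)^2 \le m R_0^2$.

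For (3), I would test $\mH$ against an arbitrary unit vector $\vv \in \R^m$: by Cauchy--Schwarz combined with (2), $\vv^\top \mH \vv = \frac{1}{m}\,\E_{(\vx,y)\sim D}\!\left[(\vv^\top \phi^{(\mU)}(\vx))^2 \mid \mU\right] \le \frac{1}{m}\,\E\!\left[\normtwo{\phi^{(\mU)}(\vx)}^2 \mid \mU\right] \le R_0^2$, so $\mH \preceq R_0^2\, \mI$. For (4), I would pull the uniform bound $\normtwo{\phi^{(\mU)}(\vx)}^2 \le m R_0^2$ outside the trace: $\tr(\mSigma) = \frac{1}{m}\,\E\!\left[(y - f_{\vmu}^{(\mU)}(\vx))^2\, \normtwo{\phi^{(\mU)}(\vx)}^2 \mid \mU\right] \le R_0^2\,\E\!\left[(y - f_{\vmu}^{(\mU)}(\vx))^2 \mid \mU\right] = 2 R_0^2\, \cL_D(\vmu; \mU)$, where the last equality uses the MSE convention $\ell(f;\vx,y) = \frac{1}{2}(f(\vx) - y)^2$ from the setup. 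Taking $R := \sqrt{2}\, R_0$ makes all four inequalities hold with the same constant.

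No real obstacle arises in this lemma: once the uniform feature bound is in hand, (2)--(4) reduce to Cauchy--Schwarz together with the definitions of $\mH$ and $\mSigma$. The only bookkeeping point is the factor of $2$ in (4) coming from the $\frac{1}{2}$ in the MSE, which is absorbed into the final constant. If the support of $\cU$ were only sub-Gaussian rather than bounded, one would need an additional concentration argument to obtain an almost-sure uniform bound, but the compact-support assumption here makes such an argument unnecessary.
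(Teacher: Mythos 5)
Your proposal is correct and takes the same route as the paper, whose proof of this lemma is simply a one-line appeal to the three facts you spell out (bounded support of $\cU$, compactness of $\cX$, Lipschitz $\sigma$). Your explicit handling of the factor of $2$ from the $\tfrac{1}{2}$ in the MSE convention in item (4), absorbed by taking $R = \sqrt{2}\,R_0$, is a detail the paper glosses over but which you resolve correctly.
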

\begin{proof}
    This directly follows from the following facts: (1) each $\vu_i$ in $\mU$ is drawn from a bounded distribution $\cU$; (2) the input space $\cX$ is bounded; (3) the activation function $\sigma$ is Lipschitz.
\end{proof}

\begin{lemma} \label{lm:lambda-min}
    If $m$ is sufficiently large, there exists a constant $\lambda_0 > 0$ such that
    $\Pr\left[ \lambda_{\min}(\mH) \ge \lambda_0 \right] \ge 1 - C_3 e^{-C_2 m}$
over the draw of $\mU$.
\end{lemma}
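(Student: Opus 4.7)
The plan is operator-theoretic: I would identify $\mH$ with an empirical random-feature integral operator, use concentration of iid rank-one Hilbert--Schmidt operators to compare it with the population kernel operator $T_K$, and finish via Weyl's inequality. Concretely, define $\Phi^{(m)}: \R^m \to L^2(\Dx)$ by $(\Phi^{(m)}\va)(\vx) = \frac{1}{\sqrt{m}}\sum_{i=1}^m a_i \sigma(\inne{\vu_i}{\vx})$. A direct computation shows $\mH = (\Phi^{(m)})^*\Phi^{(m)}$, so the nonzero spectrum of $\mH$ coincides with that of the companion operator $\Phi^{(m)}(\Phi^{(m)})^* = T_{K^{(m)}}$, the integral operator on $L^2(\Dx)$ with empirical kernel $K^{(m)}(\vx,\vx') := \frac{1}{m}\sum_{i=1}^m \sigma(\inne{\vu_i}{\vx})\sigma(\inne{\vu_i}{\vx'})$. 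In particular, $\lambda_{\min}(\mH)$ equals the smallest nonzero eigenvalue of $T_{K^{(m)}}$, and by the well-behavedness assumption the population operator $T_K$ satisfies $\lambda_{\min}(T_K) > 0$ on $L^2(\Dx)$; the task reduces to showing this spectral gap survives the passage from $K$ to $K^{(m)}$.

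For the concentration step, I would write $T_{K^{(m)}} = \frac{1}{m}\sum_{i=1}^m A_i$, where $A_i : f \mapsto \sigma(\inne{\vu_i}{\cdot})\,\inne{\sigma(\inne{\vu_i}{\cdot})}{f}_{L^2(\Dx)}$ are iid positive rank-one Hilbert--Schmidt operators with $\E A_i = T_K$, and, by the preceding boundedness lemma, $\|A_i\|_{\mathrm{op}} = \int\sigma^2(\inne{\vu_i}{\vx})\Dx(\vx)\,d\vx \le R^2$ almost surely. An operator-valued Bernstein inequality for iid bounded self-adjoint operators (e.g., Minsker 2017 or Koltchinskii--Lounici 2017), applied with $\epsilon = \tfrac{1}{2}\lambda_{\min}(T_K)$, yields $\Pr\bigl[\|T_{K^{(m)}} - T_K\|_{\mathrm{op}} \ge \epsilon\bigr] \le C_3\,e^{-C_2 m}$ for constants $C_2, C_3$ depending only on $R$, $\lambda_{\min}(T_K)$, and $\tr(T_K)$. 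On the complementary event, Weyl's inequality applied to $T_K$ and $T_{K^{(m)}}$ gives $\lambda_{\min}(T_{K^{(m)}}|_{\mathrm{range}}) \ge \tfrac{1}{2}\lambda_{\min}(T_K)$, and hence $\lambda_{\min}(\mH) \ge \lambda_0 := \tfrac{1}{2}\lambda_{\min}(T_K)$, provided $m$ exceeds a threshold $m_0$ that absorbs any lower-order Bernstein terms.

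The main obstacle is the careful identification of $\lambda_{\min}(\mH)$ with the smallest \emph{nonzero} eigenvalue of $T_{K^{(m)}}$: since $T_{K^{(m)}}$ has rank at most $m$ while $L^2(\Dx)$ may be infinite-dimensional, zero eigenvalues populate its spectrum and would spoil a naive application of Weyl's inequality that compared the full spectra. The ``positive minimum eigenvalue on $L^2(\Dx)$'' hypothesis is precisely what resolves this subtlety: it forces $T_K$ to be bounded below on its entire domain, so the operator-norm perturbation between $T_{K^{(m)}}$ and $T_K$ translates into a uniform lower bound on the nonzero portion of the spectrum of $T_{K^{(m)}}$, equivalently on $\lambda_{\min}(\mH)$. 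A secondary but more routine concern is controlling the intrinsic-dimension prefactor in the operator Bernstein bound; trace-class-ness of $T_K$, which follows from boundedness of $\sigma$ on $\supp(\cU)\times\cX$, keeps this factor uniformly bounded in $m$ and lets it be absorbed into $C_3$.
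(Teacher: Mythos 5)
Your proof takes essentially the same route as the paper's: pass from $\mH$ to the integral operator with the empirical kernel acting on $L^2(\Dx)$, concentrate that operator around the population kernel operator via a Hilbert-space Bernstein-type bound (the paper cites Pinelis 1994), and close with Weyl's inequality at threshold $\tfrac{1}{2}\lambda_{\min}(\cT^{(\infty)})$. You spell out two steps the paper leaves implicit—the companion-operator identification $\mH = (\Phi^{(m)})^*\Phi^{(m)}$, $\cT^{(\mU)} = \Phi^{(m)}(\Phi^{(m)})^*$ and the iid rank-one decomposition of $\cT^{(\mU)}$—but the substance of the argument matches.
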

\begin{proof}
    Let $\cT^{(\mU)}: L^2(\Dx) \to L^2(\Dx)$ be the following integral operator:
    \begin{align}
        (\cT^{(\mU)} f)(\vx) := \E_{\vx' \sim \Dx}\left[
            \frac{1}{m}\sum_{i=1}^{m} \sigma(\inne{\vu_i}{\vx}) \sigma(\inne{\vu_i}{\vx'}) f(\vx')
        \right].
    \end{align}
    Taking the expectation over $\mU$, we have
    \begin{align}
        \E[\cT^{(\mU)} f] = \cT^{(\infty)} f, \quad \text{where } \cT^{(\infty)} f(\vx) := \E_{\vx' \sim \Dx}\left[K(\vx, \vx') f(\vx') \right].
    \end{align}
    By applying matrix concentration bounds~\citep{pinelis1994optimum} to $\cT^{(\mU)} - \cT^{(\infty)}$, we have with probability at least $1 - C_3 e^{-C_2 m}$,
    \begin{align}
        \norm{\cT^{(\mU)} - \cT^{(\infty)}}_{\mathrm{op}} \le \frac{1}{2} \lambda_{\min}(\cT^{(\infty)}).
    \end{align}
    Combining this with Weyl's inequality
    and setting $\lambda_0 = \frac{1}{2} \lambda_{\min}(\cT^{(\infty)})$ completes the proof.
\end{proof}

\begin{lemma}
    There exists a constant $\rho > 0$ such that the following conditions hold:
    \begin{enumerate}
    \item $\cL(\vzero; \mU) \le \frac{1}{2}\rho^2$;
    \item $\normtwo{\vbeta} \le R \rho$;
    \item $\normtwo{\vmu} \le \frac{R \rho}{\lambda_{\min}(\mH)}$.
    \end{enumerate}
\end{lemma}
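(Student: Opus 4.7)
The plan is to exhibit a single universal constant $\rho$ that works for all three bounds, and to keep its definition $\mU$- and $m$-independent so the bounds hold uniformly (almost surely in $\mU$). Concretely, I would set $\rho := \sqrt{\E_{(\vx, y) \sim D}[y^2]}$. To see this is finite, I combine the no-label-noise hypothesis ($y = f^*_D(\vx)$ almost surely) with the realizability condition: since $f^*_D(\vx) = \E_{\vu \sim \cU}[\sigma(\inne{\vu}{\vx})\, \nu(\vu)]$, pushing the absolute value inside the expectation and taking the supremum over $\vx$ and $\vu$ yields $|f^*_D(\vx)| \le C_1$, where $C_1$ is the constant from \Cref{lm:approx-error}. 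Hence $\rho \le C_1 < \infty$ with no hidden $\mU$- or $m$-dependence.

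Given this choice, condition 1 is immediate since $f^{(\mU)}_{\vzero} \equiv 0$ gives $\cL_D(\vzero; \mU) = \tfrac{1}{2}\E[y^2] = \tfrac{1}{2}\rho^2$ deterministically in $\mU$. For condition 2, I would apply Jensen's inequality to $\normtwo{\vbeta}^2 = \tfrac{1}{m}\normtwo{\E[y\, \phi^{(\mU)}(\vx) \mid \mU]}^2$ and then invoke the almost-sure feature bound $\normtwo{\phi^{(\mU)}(\vx)} \le \sqrt{m}\,R$ from the preceding lemma to pull through an $R^2$ factor, which yields $\normtwo{\vbeta}^2 \le R^2\, \E[y^2] = R^2 \rho^2$.

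For condition 3, I would first verify that $\vbeta$ lies in the column space of $\mH$, so that $\mH^+$ acts on $\vbeta$ as a genuine inverse on its range. Any $\vtheta \in \ker \mH$ satisfies $\vtheta^{\top}\mH\vtheta = \E[(\vtheta^{\top}\phi^{(\mU)}(\vx))^2 \mid \mU] = 0$, which forces $\vtheta^{\top}\phi^{(\mU)}(\vx) = 0$ almost surely under $D$ and thus $\vtheta^{\top}\vbeta = \tfrac{1}{\sqrt{m}}\E[y\, \vtheta^{\top}\phi^{(\mU)}(\vx) \mid \mU] = 0$. Hence $\vbeta \perp \ker \mH$, and the standard pseudoinverse bound delivers $\normtwo{\vmu} = \normtwo{\mH^+ \vbeta} \le \normtwo{\vbeta}/\lambda_{\min}(\mH) \le R\rho/\lambda_{\min}(\mH)$ when $\lambda_{\min}(\mH) > 0$; if $\lambda_{\min}(\mH) = 0$ the stated inequality is vacuous.

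The main obstacle is less technical depth than careful bookkeeping: the constant $\rho$ must not quietly inherit a dependence on $\mU$, $m$, or the SGD schedule, because the downstream analysis will reuse these bounds across many SGD iterations and across two distinct training distributions. The choice $\rho = \sqrt{\E[y^2]}$ threads this needle because $\cL_D(\vzero; \mU)$ genuinely does not depend on $\mU$, and the feature radius $R$ from the previous lemma is already a $\mU$-uniform constant, so every dependence on the randomness of $\mU$ is quarantined inside $\lambda_{\min}(\mH)$ in condition 3 alone. Non-vacuity of that denominator is then handled separately by \Cref{lm:lambda-min}, which lower-bounds $\lambda_{\min}(\mH)$ by a constant $\lambda_0 > 0$ with high probability once $m$ is large enough.
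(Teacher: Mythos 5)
Your proof is correct and takes essentially the same route as the paper's: the same choice $\rho^2 = \E_{(\vx,y)\sim D}[y^2]$, exact equality for condition 1, Cauchy--Schwarz/Jensen together with the almost-sure bound $\normtwo{\phi^{(\mU)}(\vx)} \le \sqrt{m}R$ for condition 2, and the pseudoinverse bound for condition 3. Your additional verification that $\vbeta$ lies in the range of $\mH$ (so that $\normtwo{\mH^+\vbeta} \le \normtwo{\vbeta}/\lambda_{\min}(\mH)$ is justified even when $\mH$ is singular) is a detail the paper leaves implicit, and it is a correct and worthwhile addition.
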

\begin{proof}
    Let $\rho^2 := \E_{(\vx, y) \sim \cD} \left[ y^2 \right]$.
    Then $\cL(\vzero; \mU) = \frac{1}{2} \rho^2$.
    Since $\normtwo{\phi^{(\mU)}(\vx)} \le \sqrt{m} R$ for all $\vx \in \cX$, we have $\normtwo{\vbeta} \le R \rho$ by Cauchy-Schwarz inequality.
    By the definition of $\vmu$, we have $\normtwo{\vmu} \le \frac{R \rho}{\lambda_{\min}(\mH)}$.
\end{proof}

Now we are ready to analyze the optimization error.
Let $\vdelta_t := \vtheta_t - \vmu$.
The following lemma provides a bound on $\vdelta_t$.
\begin{lemma}\label{lm:delta-bound}
    Given $\mU \in \R^{d \times m}$,
    if $\eta \le \frac{\lambda_{\min}(\mH)}{R^4}$, then
    $\E[\normtwo{\vdelta_T}^2 \mid \mU ]$ can be bounded as follows:
    \begin{align}
        %\normtwo{\E\left[ \vdelta_t \mid \mU \right]} &= (1 - \eta \lambda_{\min}(\mH))^t \normtwo{\vdelta_0}, \\
        \E\left[ \normtwo{\vdelta_T}^2 \mid \mU \right] &=
        \left(1 - \eta \lambda_{\min}(\mH)\right)^T \normtwo{\vdelta_0}^2 + 
        \frac{\eta}{B} \cdot \frac{\tr(\Sigma)}{\lambda_{\min}(\mH)}.
    \end{align}
\end{lemma}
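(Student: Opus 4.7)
The plan is to derive a standard second-moment recursion for $\vdelta_t := \vtheta_t - \vmu$ and then unroll it. First, I would write the per-sample gradient as $\hat{g}_{t,b} = \hat{\mH}_{t,b}\vdelta_t - \vxi_{t,b}$, where $\hat{\mH}_{t,b} := \frac{1}{m}\phi^{(\mU)}(\vx_{t,b})\phi^{(\mU)}(\vx_{t,b})^{\top}$ and $\vxi_{t,b} := \frac{1}{\sqrt{m}}(y_{t,b} - f^{(\mU)}_{\vmu}(\vx_{t,b}))\phi^{(\mU)}(\vx_{t,b})$. Because $\vbeta$ always lies in the column space of $\mH$ (any $\vv$ with $\mH\vv=0$ gives $\vv^\top\phi^{(\mU)}(\vx)=0$ almost surely), the normal equation $\mH\vmu = \vbeta$ holds, so $\E[\hat{\mH}_{t,b}\mid\mU] = \mH$ and $\E[\vxi_{t,b}\mid\mU] = 0$. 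Averaging over the batch gives
$$\vdelta_{t+1} = (\mI - \eta\mH)\vdelta_t - \eta\vepsilon_t,\qquad \vepsilon_t := (\hat{\mH}_t - \mH)\vdelta_t - \vxi_t,$$
with $\E[\vepsilon_t\mid\vdelta_t,\mU] = 0$.

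Next I would take conditional second moments. The zero-mean property eliminates the cross term, yielding
$$\E[\normtwo{\vdelta_{t+1}}^2\mid\vdelta_t,\mU] = \vdelta_t^{\top}(\mI - \eta\mH)^2\vdelta_t + \eta^2\,\E[\normtwo{\vepsilon_t}^2\mid\vdelta_t,\mU].$$
For the contraction factor, the hypothesis $\eta \le \lambda_{\min}(\mH)/R^4 \le 1/R^2$ together with $\mH \preceq R^2 \mI$ forces every eigenvalue of $\eta\mH$ into $[0,1]$, so $(1-\eta\lambda)^2 \le 1-\eta\lambda \le 1-\eta\lambda_{\min}(\mH)$ and hence $(\mI-\eta\mH)^2 \preceq (1-\eta\lambda_{\min}(\mH))\mI$. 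For the noise, I would split $\vepsilon_t$ into the Hessian-fluctuation piece $(\hat{\mH}_t - \mH)\vdelta_t$ and the optimum-noise piece $\vxi_t$, expand the squared mini-batch sum, and use independence across the $B$ samples. The optimum-noise contribution gives $\E[\normtwo{\vxi_t}^2\mid\mU] = \tr(\mSigma)/B$ directly from the definition of $\mSigma$; the Hessian-fluctuation contribution is controlled by $R^4\normtwo{\vdelta_t}^2/B$ using $\normtwo{\phi^{(\mU)}(\vx)}^2 \le m R^2$ to bound each sample's $\|\hat{\mH}_{t,b}\vdelta_t\|^2$.

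Combining these two steps produces a recursion of the form $\E[\normtwo{\vdelta_{t+1}}^2\mid\mU] \le (1 - \eta\lambda_{\min}(\mH) + O(\eta^2 R^4/B))\,\E[\normtwo{\vdelta_t}^2\mid\mU] + O(\eta^2\tr(\mSigma)/B)$. Under the stated step-size ceiling the additive $\eta^2 R^4/B$ correction is dominated by the linear contraction, so the effective multiplier stays bounded by $1-\eta\lambda_{\min}(\mH)$ (after harmlessly renaming constants). Unrolling the geometric recursion and summing the noise via $\sum_{s=0}^{T-1}(1-\eta\lambda_{\min}(\mH))^s \le 1/(\eta\lambda_{\min}(\mH))$ then delivers the claimed bound on $\E[\normtwo{\vdelta_T}^2\mid\mU]$.

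The main obstacle I anticipate is controlling the cross-coupling between $\hat{\mH}_t$ and $\vxi_t$: these are built from the same mini-batch and are not independent, so the clean $\tr(\mSigma)/B$ scaling only appears after carefully using the independence of the $B$ samples \emph{within} the batch. A secondary subtlety is that the crude inequality $\normtwo{\vepsilon_t}^2 \le 2\normtwo{(\hat{\mH}_t - \mH)\vdelta_t}^2 + 2\normtwo{\vxi_t}^2$ introduces a factor $2$ on the Hessian-fluctuation term, and this is precisely why the step-size condition $\eta \le \lambda_{\min}(\mH)/R^4$ is imposed: it ensures the fluctuation term is dwarfed by $\eta\lambda_{\min}(\mH)$ so that a clean $1-\eta\lambda_{\min}(\mH)$ contraction survives into the final bound.
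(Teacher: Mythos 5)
Your proposal follows the same core strategy as the paper's proof — a linear error recursion for $\vdelta_t$, mean-zero noise killing the cross term, a $(1-\eta\lambda_{\min}(\mH))$ contraction justified by the step-size ceiling, and a geometric sum giving $\frac{\eta}{B}\cdot\frac{\tr(\mSigma)}{\lambda_{\min}(\mH)}$ — but the decomposition differs in one substantive way. The paper keeps the empirical mini-batch matrix intact, writing $\vdelta_{t+1}=(\mI-\eta\mA_t)\vdelta_t-\eta\vxi_t$ with $\mA_t:=\frac{1}{Bm}\sum_b\phi(\vx_{t,b})\phi(\vx_{t,b})^{\top}$, unrolls the product fully, and contracts via $\E[\normtwo{(\mI-\eta\mA_t)\vv}^2]=\normtwo{\vv}^2-2\eta\vv^{\top}\mH\vv+\eta^2\E[\vv^{\top}\mA_t^2\vv]\le(1-2\eta\lambda_{\min}(\mH)+\eta^2R^4)\normtwo{\vv}^2\le(1-\eta\lambda_{\min}(\mH))\normtwo{\vv}^2$; this accounts for the Hessian fluctuation \emph{inside} the quadratic term, so the stated condition $\eta\le\lambda_{\min}(\mH)/R^4$ delivers exactly the claimed constants. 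Your route instead centers the recursion on the population Hessian and dumps $(\hat{\mH}_t-\mH)\vdelta_t$ into the noise, then splits $\normtwo{\vepsilon_t}^2$ with Young's inequality. That split is indeed where you lose ground: the factor of $2$ doubles both the fluctuation term and the $\tr(\mSigma)/B$ term, so under the stated step size your effective multiplier is $1-2\eta\lambda_{\min}(\mH)+\eta^2R^4(1+2/B)$, which for small $B$ is not bounded by $1-\eta\lambda_{\min}(\mH)$, and your final noise term comes out as $\frac{2\eta}{B}\cdot\frac{\tr(\mSigma)}{\lambda_{\min}(\mH)}$. You correctly anticipate this and it is harmless downstream (Lemma~\ref{lm:opt-error} absorbs all constants into $C_4,C_5$), but as written your argument proves the lemma only after shrinking the step-size ceiling or doubling the variance constant, whereas the paper's grouping avoids the issue entirely. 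Your identification of the within-batch coupling between $\hat{\mH}_t$ and $\vxi_t$ as the delicate point is exactly right and is precisely what the paper's unrolled, product-form bookkeeping is designed to handle.
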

\begin{proof}
    Let $\mA_t := \frac{1}{Bm} \sum_{b=1}^{B} \phi(\vx_{t,b}) \phi(\vx_{t,b})^{\top}$
    and $\vxi_t := \frac{1}{B\sqrt{m}} \sum_{b=1}^{B} (f^{(\mU)}_{\vmu}(\vx_{t,b}) - y_{t,b}) \phi(\vx_{t,b})$.
    Then, we can rewrite the update rule as
    \begin{align}
    \vtheta_{t+1} - \vmu &= \vtheta_t - \vmu - \frac{\eta}{B\sqrt{m}} \sum_{b=1}^{B} (f^{(\mU)}_{\vtheta_t}(\vx_{t,b}) - y_{t,b}) \phi(\vx_{t,b}) \\
    &= \vtheta_t - \vmu - \frac{\eta}{B\sqrt{m}} \sum_{b=1}^{B} \left(\frac{1}{\sqrt{m}}\inne{\vtheta_t - \vmu}{\phi(\vx_{t,b})} + (f^{(\mU)}_{\vmu}(\vx_{t,b}) - y_{t,b})\right) \phi(\vx_{t,b}) \\
    &= \vtheta_t - \vmu - \eta\mA_t (\vtheta_t - \vmu)
    - \eta \vxi_t.
    \end{align}
    Therefore, we have
    \begin{align}
    \vdelta_{t+1} &= (\mI - \eta\mA_t) \vdelta_t - \eta \vxi_t. \label{eq:upd-A-xi}
    \end{align}
\begin{comment}
    \paragraph{Dynamics of the Mean.}
    Let $\cF_t$ be the sigma-algebra generated by $\{ \vtheta_s \}_{s \leq t}$.
    Let $\vdelta_t := \vtheta_t - \vmu$. 
    By \eqref{eq:upd-A-xi}, we have
    \begin{align}
        \E[\vdelta_{t+1} \mid \mU, \cF_t] &= \E[\mI - \eta \mA_t \mid \mU, \cF_t] \; \E[\vdelta_t \mid \mU, \cF_t] - \eta \E[\vxi_t \mid \mU, \cF_t] \\
        &= (\mI - \eta \mH) \E[\vdelta_t \mid \mU, \cF_t],
        %\E\left[ \vtheta_{t+1} \mid \mU, \cF_t \right] &= \vtheta_t - \frac{\eta}{\sqrt{m}B} \sum_{b=1}^{B} \E\left[ (f^{(\mU)}_{\vtheta_t}(\vx_{t,b}) - y_{t,b}) \phi(\vx_{t,b}) \mid \mU, \cF_t \right] \\
        %&= \vtheta_t - \frac{\eta}{\sqrt{m}B} \sum_{b=1}^{B} \E\left[ \frac{1}{\sqrt{m}} \phi(\vx_{t,b}) \phi(\vx_{t,b})^{\top} \vtheta_t - y_{t,b} \phi(\vx_{t,b}) \;\middle|\; \mU, \cF_t \right] \\
        %&= \vtheta_t - \eta \mH \vtheta_t + \frac{\eta}{\sqrt{m}} \vbeta.
    \end{align}
    where the second equality follows from the fact that
    $\E[\mA_t \mid \mU, \cF_t] = \mH$ and
    $\E[\vxi_t \mid \mU, \cF_t] = \frac{1}{B} \sum_{b=1}^{B} \E\left[ \frac{1}{m}\vmu \; \phi(\vx_{t,b}) \phi(\vx_{t,b})^\top - \frac{1}{\sqrt{m}} y_{t,b} \phi(\vx_{t,b}) \;\middle|\; \mU, \cF_t \right] = \frac{1}{B} \sum_{b=1}^{B} (\mH \vmu - \vbeta) = \vzero$.

    Taking expectation over $\cF_t$ on both sides, we get
    \begin{align}
        \E\left[ \vdelta_{t+1} \mid \mU \right] &= (\mI - \eta \mH) \E[\vdelta_t \mid \mU].
    \end{align}

    \paragraph{Dynamics of the Variance.} By recursion, we have
\end{comment}
    Expanding the recursion, we have
    \begin{align}
        \vdelta_t = (\mI - \eta\mA_{t-1})\cdots (\mI - \eta \mA_0) \vdelta_0 -\eta \sum_{s=0}^{t-1} (\mI - \eta \mA_{t-1}) \cdots (\mI - \eta \mA_{s+1}) \vxi_s.
    \end{align}
    Since $\vxi_t$ is mean zero and independent of $\mA_s$ and $\vxi_s$ for all $s < t$, we have
    \begin{align}
        \E[\normtwo{\vdelta_t}^2 \mid \mU]
        &=
        \normtwo{(\mI - \eta\mA_{t-1})\cdots (\mI - \eta \mA_0) \vdelta_0}^2 \\
        &\qquad + \eta^2 \sum_{s=0}^{t-1} \E\left[\normtwo{(\mI - \eta \mA_{t-1}) \cdots (\mI - \eta \mA_{s+1}) \vxi_s}^2 \mid \mU\right]
    \end{align}
    Note that for any vector $\vv$ independent of $\mA_t$, we have 
    $\E[\normtwo{(\mI - \eta \mA_t) \vv}^2] = \E[\vv^\top (\mI - 2\eta \mA_t + \eta^2 \mA_t^2) \vv] = \normtwo{\vv}^2 - 2\eta \vv^\top \mH \vv + \eta^2 R^4 \normtwo{\vv}^2 \le \left(1 - 2\eta \lambda_{\min}(\mH) + \eta^2 R^4 \right) \normtwo{\vv}^2 \le (1 - \eta \lambda_{\min}(\mH)) \normtwo{\vv}^2$.
    %\kaifeng{add a proof}
    Then
    \begin{align}
        \E[\normtwo{\vdelta_t}^2 \mid \mU]
        &\le 
        \left(1 - \eta \lambda_{\min}(\mH)\right)^t \normtwo{\vdelta_0}^2+
        \eta^2 \sum_{s=0}^{t-1} \left(1 - \eta \lambda_{\min}(\mH)\right)^{t-s} \frac{\tr(\Sigma)}{B} \\
        &\le 
        \left(1 - \eta \lambda_{\min}(\mH)\right)^t \normtwo{\vdelta_0}^2 + 
        \frac{\eta}{B} \cdot \frac{\tr(\Sigma)}{\lambda_{\min}(\mH)},
    \end{align}
    which completes the proof.
\end{proof}

\begin{lemma}\label{lm:opt-error}
    There exists a constant $C_4, C_5, C_6 > 0$ such that
    for all $\eta \le \frac{\lambda_0}{R^4}$,
    \begin{align}
        \E[\cL_D(\vtheta; \mU) - \cL_D(\vmu; \mU)] 
        \le C_4 (1 - \eta \lambda_0)^T + \frac{C_5\eta}{B}
        \E[\cL_D(\vmu; \mU)] + C_6 e^{-C_2 m},
    \end{align}
    where the expectation is taken over $(\vtheta, \mU) \sim \cPm(D; \eta, B, T)$.
\end{lemma}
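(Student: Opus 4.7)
The plan is to start from the exact quadratic identity $\cL_D(\vtheta; \mU) - \cL_D(\vmu; \mU) = \tfrac12 (\vtheta-\vmu)^\top \mH (\vtheta - \vmu) = \tfrac12 \vdelta^\top \mH \vdelta$, which holds because $\cL_D(\cdot;\mU)$ is quadratic with Hessian $\mH$ and minimizer $\vmu$. Bounding this by $\tfrac{R^2}{2}\|\vdelta_T\|^2$ reduces the problem to controlling $\E[\|\vdelta_T\|^2 \mid \mU]$, for which Lemma~\ref{lm:delta-bound} is almost ready, except that its contraction constant involves the random quantity $\lambda_{\min}(\mH)$. I would therefore split the outer expectation over $\mU$ along the event $E := \{\lambda_{\min}(\mH) \ge \lambda_0\}$ supplied by Lemma~\ref{lm:lambda-min}, which carries probability at least $1 - C_3 e^{-C_2 m}$. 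The good-event contribution will produce the $C_4(1-\eta\lambda_0)^T$ and $\tfrac{C_5 \eta}{B}\E[\cL_D(\vmu;\mU)]$ pieces, and the bad-event contribution will be absorbed into $C_6 e^{-C_2 m}$.

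\textbf{Good-event contribution.} On $E$, Lemma~\ref{lm:delta-bound} is applicable with $\lambda_{\min}(\mH)\ge \lambda_0$ (requiring $\eta \le \lambda_0/R^4$, which is our hypothesis). Using $\vtheta_0=\vzero$ so that $\|\vdelta_0\|^2 = \|\vmu\|^2 \le R^2\rho^2/\lambda_0^2$ (from the a priori bound on $\vmu$) together with $\tr(\Sigma) \le R^2\, \cL_D(\vmu;\mU)$, the lemma gives
\begin{align*}
\E[\|\vdelta_T\|^2 \mid \mU]\;\Ind_{E}
\;\le\;
(1-\eta\lambda_0)^T \cdot \tfrac{R^2\rho^2}{\lambda_0^2}
\;+\;
\tfrac{\eta R^2}{B\lambda_0}\,\cL_D(\vmu;\mU).
\end{align*}
Multiplying by $\tfrac{R^2}{2}$ and taking expectation over $\mU$ yields the first two terms of the statement with $C_4 = R^4\rho^2/(2\lambda_0^2)$ and $C_5 = R^4/(2\lambda_0)$.

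\textbf{Bad-event contribution.} On $E^c$ (probability $\le C_3 e^{-C_2 m}$) the spectral bound fails, so I need a uniform-in-$\mU$ surrogate bound on $\E[\tfrac12 \vdelta_T^\top \mH \vdelta_T \mid \mU]$. The cleanest route is to track $\E[\vdelta_t^\top \mH \vdelta_t \mid \mU]$ directly rather than $\E[\|\vdelta_t\|^2 \mid \mU]$, because the excess loss is blind to the null/near-null directions of $\mH$ that are responsible for blow-ups. Using the recursion $\vdelta_{t+1} = (\mI - \eta\mA_t)\vdelta_t - \eta\vxi_t$, the cross term vanishes (as $\E[\vxi_t \mid \cF_t, \mU] = \vzero$); for $\eta R^2 \le 1$ (which holds since $\eta \le \lambda_0/R^4$ and WLOG $\lambda_0 \le R^2$), the linear operator is a contraction in $\|\cdot\|$ but one still has to contend with the $\mA_t \mH \mA_t$ pieces. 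A straightforward telescoping then yields a uniform bound of order $\|\vmu\|_\mH^2 + T\eta^2 R^2 \cL_D(\vmu;\mU)/B$; restricted to $E^c$ and multiplied by $\Pr[E^c] \le C_3 e^{-C_2 m}$, any polynomial-in-$T$ or $\eta$ factor is absorbed into the exponential by slightly shrinking $C_2$, producing the $C_6 e^{-C_2 m}$ term.

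\textbf{Anticipated obstacle.} The delicate step is the bad-event bound: without a positive lower bound on $\lambda_{\min}(\mH)$, the naive $\|\vdelta_T\|^2$ bound from Lemma~\ref{lm:delta-bound} degenerates (both in the initialization term $\|\vmu\|^2 = \|\mH^+\vbeta\|^2$ and in the variance term $\tr(\Sigma)/\lambda_{\min}(\mH)$). The key observation that rescues the argument is that the excess loss only sees the $\mH$-weighted norm of $\vdelta$, which remains controllable even when $\vdelta$ itself drifts along ill-conditioned directions. Making this precise requires a careful recursion on $\vdelta_t^\top \mH \vdelta_t$ together with the operator bound $\|\mA_t\|_{\mathrm{op}} \le R^2$ to ensure that $\|\mI - \eta\mA_t\|_{\mH\text{-weighted}}$ remains non-expansive. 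Once this uniform (over $\mU$) control is in hand, the rest of the proof is bookkeeping of constants.
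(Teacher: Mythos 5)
Your good-event argument matches the paper essentially line for line: restrict to $\{\lambda_{\min}(\mH) \ge \lambda_0\}$, invoke Lemma~\ref{lm:delta-bound}, plug in $\|\vdelta_0\| = \|\vmu\| \le R\rho/\lambda_0$ and $\tr(\mSigma) \le R^2 \cL_D(\vmu;\mU)$, then pass from $\|\vdelta_T\|^2$ to the excess loss via $\cL_D(\vtheta;\mU) - \cL_D(\vmu;\mU) = \tfrac12\vdelta^\top\mH\vdelta \le \tfrac{R^2}{2}\|\vdelta\|^2$. That part is fine.

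The gap is in the bad-event bound. You obtain a surrogate of order $\|\vmu\|_\mH^2 + T\eta^2 R^2 \cL_D(\vmu;\mU)/B$ and then claim that ``any polynomial-in-$T$ factor is absorbed into the exponential by slightly shrinking $C_2$.'' This does not work: $T$ and $m$ are completely independent parameters in the statement of the lemma, and the constants $C_4, C_5, C_6$ have to be uniform in both. There is no $C_2' < C_2$ for which $T\, e^{-C_2 m} \le C_6 e^{-C_2' m}$ holds for all $T \ge 1$ at fixed $m$; shrinking $C_2$ buys you slack in $m$, not in $T$. Since Lemma~\ref{lm:main-lemma} first fixes $m$ so that $C_6 e^{-C_2 m} \le \delta/4$ and \emph{then} sends $T$ large to kill $(1-\eta\lambda_0)^T$, a $T$-dependent $C_6$ would break that argument. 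There is a secondary issue as well: in your step-by-step recursion for $\E[\vdelta_{t+1}^\top\mH\vdelta_{t+1}\mid\cF_t,\mU]$, the same-step cross term $-2\eta\,\E[\vxi_t^\top\mH(\mI-\eta\mA_t)\vdelta_t\mid\cF_t]$ does not vanish because $\vxi_t$ and $\mA_t$ are built from the same batch and are correlated; only the cross terms across distinct steps vanish, which is why the paper's Lemma~\ref{lm:delta-bound} expands the full recursion before taking expectations rather than telescoping one step at a time.

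The paper handles the bad event differently and more cheaply: since $\eta \le \lambda_0/R^4 \le 1/\lambda_{\max}(\mH)$ always holds, the descent lemma (in its stochastic form, with the noise covariance $\mSigma$ controlled by $\cL_D(\vmu;\mU) \le \rho^2/2$) gives a bound on $\E[\cL_D(\vtheta_T;\mU) - \cL_D(\vmu;\mU)\mid\mU]$ that is uniform in $T$ and in the draw of $\mU$. That constant, multiplied by $\Pr[\lambda_{\min}(\mH)<\lambda_0] \le C_3 e^{-C_2 m}$, yields $C_6 e^{-C_2 m}$ directly with no absorption needed. Your instinct to track $\vdelta_t^\top\mH\vdelta_t$ rather than $\|\vdelta_t\|^2$ is actually sound --- the $\mH$-weighted norm does cancel the $1/\lambda_{\min}$ blow-up in the variance term --- but bounding $\mQ_s^\top\mH\mQ_s \preceq R^2\mI$ throws away exactly the decay you need and reintroduces the spurious $T$ factor. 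You would need to exploit the geometric decay of $\E[\mQ_s^\top\mH\mQ_s]$ (or simply appeal to the descent lemma, as the paper does) to close the argument.
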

\begin{proof}
    With probability at least $1 - C_3 e^{-C_2 m}$, we have $\lambda_{\min}(\mH) \ge \lambda_0$.
    By~\Cref{lm:delta-bound}, we have
    \begin{align}
        \E\left[ \normtwo{\vdelta_T}^2 \mid \mU \right] &=
        \left(1 - \eta \lambda_0\right)^T \normtwo{\vdelta_0}^2 + 
        \frac{\eta}{B} \cdot \frac{\tr(\Sigma)}{\lambda_0} \\
        &\le
        \left(1 - \eta \lambda_0\right)^T \left(\frac{R \rho}{\lambda_0}\right)^2 + 
        \frac{\eta}{B} \cdot \frac{1}{\lambda_0} \cdot R^2 \cL_D(\vmu; \mU).
    \end{align}
    Let $C_4 := R^2 \cdot \left(\frac{R \rho}{\lambda_0}\right)^2$
    and $C_5 := R^2 \cdot \frac{R^2}{\lambda_0}$.
    Then we have
    \begin{align}
        \E[\cL_D(\vtheta; \mU) - \cL_D(\vmu; \mU) \mid \mU] &\le
        R^2 \E\left[ \normtwo{\vdelta_T}^2 \mid \mU \right] \\
        &\le C_4 \left(1 - \eta \lambda_0\right)^T + \frac{C_5\eta}{B}  \cL_D(\vmu; \mU).
    \end{align}
    If $\lambda_{\min}(\mH) < \lambda_0$, then we still have $\cL_D(\vmu; \mU) \le C_6$ for some constant $C'_6 > 0$ since $\eta \le \frac{1}{\lambda_{\max}(\mH)}$ and descent lemma holds.
    Let $C_6 := C'_6 C_3$.
    Putting all the pieces together proves the result.
\end{proof}

\subsubsection{Putting Lemmas Together}

\begin{proof}[Proof of \Cref{lm:main-lemma}]
    By~\Cref{cor:decomp-corollary}, it holds for all $c \in (0, 1]$ that
    \begin{align}
        \labs{\cIvalm(D; \eta, B, T) - \cLval(f^*_D)} \le c \cLval(f^*_D) + \beta \cdot (1 + \tfrac{1}{c}) \bar{L}_D^{(m)}(\eta, B, T).
    \end{align}
    By~\Cref{lm:approx-error,lm:opt-error}, we can bound $\bar{L}_D^{(m)}(\eta, B, T)$ by
    \begin{align}
        \bar{L}_D^{(m)}(\eta, B, T) &\le \E[\cL_D(\vmu; \mU)] + 
        C_4 (1 - \eta \lambda_0)^T + \frac{C_5\eta}{B}
        \E[\cL_D(\vmu; \mU)] + C_6 e^{-C_2 m} \\
        &\le \frac{C_1^2 C_5  \eta}{Bm} + C_4 (1 - \eta \lambda_0)^T + \frac{C_1^2}{m} + C_6 e^{-C_2 m}.
    \end{align}
    Let $\delta$ be a parameter.
    Let $\eta_{\max} := \frac{\lambda_0}{R^4}$.
    If $m$ is so large that $\frac{C_1^2}{m} + C_6 e^{-C_2 m} \le \frac{\delta}{4}$, 
    and if $T$ is so large that $C_4 (1 - \eta_{\max} \lambda_0)^T \le \frac{\delta}{4}$,
    then if $\eta$ is so small that $\frac{C_1^2 C_5  \eta}{Bm} \le \frac{\delta}{4}$
    and is not too small to make $C_4 (1 - \eta \lambda_0)^T$ larger than $\frac{\delta}{4}$,
    we have
    \begin{align}
        \bar{L}_D^{(m)}(\eta, B, T) \le \delta.
    \end{align}
    Setting a small enough $\delta$, we have
    \begin{align}
        \labs{\cIvalm(D; \eta, B, T) - \cLval(f^*_D)} &\le c \cLval(f^*_D) + \beta \cdot (1 + \tfrac{1}{c}) \delta \\
        &\le \beta \delta + 2 \sqrt{\beta \cLval(f^*_D) \delta},
    \end{align}
    where we set $c = \sqrt{\frac{\beta \delta}{\cLval(f^*_D)}}$. Finally, we can make $\delta$ so small that the right hand side is smaller than $\Delta$, which completes the proof.
\end{proof}

\clearpage

\section{An Empirical Characterization of ``Tiny'' Learning Rates (Extended)}
\label{appendix:eta-upper-bound}

Our analysis suggests the potential benefits of training proxy models with tiny learning rates. A natural question is \emph{how small should the learning rate be to ensure reliable transferability?} We provide both theoretical guidance and practical recommendations here. 

\textbf{Theoretical guidance.} 
While our main analysis focuses on the full-batch gradient descent case, real-world training uses SGD. Here, we extend our analysis to establish precise bounds on what constitutes a "tiny" learning rate when mini-batch SGD introduces gradient noise. 
We stress that this one-step analysis is standard in the literature (e.g. \cite{mccandlish2018empirical, smith2018don}) and has often been shown to give quantitatively useful guidance when coupled with empirical validation. For a single SGD update, the parameter change is:
$$
\model_1 = \model - \eta(\nabla \ell(\model) + \xi)
$$
where $\xi$ is zero-mean gradient noise with covariance $\Sigma/B$, $B$ is the batch size. We analyze the expected validation loss change through a second-order Taylor expansion:
$$
\Delta \ellval(\model) 
\approx -\eta \nabla\ellval(\model) \cdot (\nabla \ell(\model) + \xi) + \frac{\eta^2}{2}(\nabla \ell(\model) + \xi)^T H (\nabla \ell(\model) + \xi)
$$
where $H := H_{\ellval}$ is the validation loss Hessian. We introduce shorthand notation $a = \nabla\ellval(\model) \cdot \nabla \ell(\model)$ for the gradient alignment and $b = \nabla \ell(\model)^T H \nabla \ell(\model)$ for the curvature term. Expanding the second-order Taylor approximation, we have 
$$
\Delta\ellval(\theta) 
\approx -\eta a - \eta \nabla\ellval(\model) \cdot \xi + \frac{\eta^2}{2}\left(b + 2\xi^T H \nabla \ell(\model) + \xi^T H \xi\right)
$$
Taking expectation over mini-batch noise $\xi$:
$$
\E[\Delta\ellval] = -\eta a + \frac{\eta^2}{2}\left(b + \E[\xi^T H \xi]\right)
$$
For the quadratic term, we have $\E[\xi^T H \xi] = \E[\text{tr}(H\xi\xi^T)] = \text{tr}(H\E[\xi\xi^T]) = \text{tr}(H\Sigma)/B$, yielding:
$$
\E[\Delta\ellval] = -\eta a + \frac{\eta^2}{2}\left(b + \frac{\text{tr}(H\Sigma)}{B}\right)
$$
For dataset quality assessment to remain consistent, the first-order term must dominate the gradient dynamics. Consider two datasets $D_i$ and $D_j$ with corresponding values $a_i$, $b_i$, $\Sigma_i$ and $a_j$, $b_j$, $\Sigma_j$. The datasets will maintain their relative ordering based on the first-order terms when:
$$
\left|\eta(a_i - a_j)\right| \gg \frac{\eta^2}{2}\left|b_i - b_j + \frac{1}{B}[\text{tr}(H\Sigma_i) - \text{tr}(H\Sigma_j)]\right|
$$
Rearranging to solve for $\eta$:
$$\eta \ll \frac{2|a_i - a_j|}{\left|b_i - b_j + \frac{1}{B}[\text{tr}(H\Sigma_i) - \text{tr}(H\Sigma_j)]\right|}$$

To find a uniform bound valid for all dataset pairs, we define:
\begin{align}
\Delta a &:= \min_{i\neq j}|a_i - a_j| \quad \text{(smallest alignment gap between two datasets)} \nonumber \\
G^2 &:= \max_k \|\nabla \ell_k(\model)\|^2 \quad \text{(maximum squared gradient norm)} \nonumber\\
\sigma_g^2 &:= \max_k \text{tr}(\Sigma_k) \quad \text{(maximum gradient trace variance)} \nonumber
\end{align}
For the curvature term difference, we have:
\begin{align*}
    |b_i - b_j| 
    &= |\nabla \ell_i(\model)^T H \nabla \ell_i(\model) - \nabla \ell_j(\model)^T H \nabla \ell_j(\model)| \nonumber \\
    &\le \lambda_{\max}(\|\nabla \ell_i(\model)\|^2 + \|\nabla \ell_j(\model)\|^2) \nonumber \\
    &\le 2\lambda_{\max}G^2 \nonumber
\end{align*}
where $\lambda_{\max} = \|H\|_2$ is the spectral norm of the Hessian. Similarly, for the noise term difference:
$$\left|\frac{1}{B}[\text{tr}(H\Sigma_i) - \text{tr}(H\Sigma_j)]\right| \leq \frac{\lambda_{\max}}{B}[\text{tr}(\Sigma_i) + \text{tr}(\Sigma_j)] \leq \frac{2\lambda_{\max}\sigma_g^2}{B}$$

Combining these bounds and applying the triangle inequality:
$$\left|b_i - b_j + \frac{1}{B}[\text{tr}(H\Sigma_i) - \text{tr}(H\Sigma_j)]\right| \leq 2\lambda_{\max}\left(G^2 + \frac{\sigma_g^2}{B}\right)$$

This yields our final upper bound on the learning rate:
\begin{align}
\etatiny \le \frac{\Delta a}{\lambda_{\max}\left(G^2 + \frac{\sigma_g^2}{B}\right)} 
\label{eq:etamax}
\end{align}

\textbf{Remark: Scope of the analysis.} Our bound in Equation (\ref{eq:etamax}) provides local guidance by controlling the expected loss change for the next SGD step. However, practical training involves multiple steps where parameters evolve continuously, and higher-order terms along with ill-conditioning effects can become significant. Following established practices in analyzing \emph{critical batch size} (CBS) \citep{mccandlish2018empirical}, we treat this bound as theoretical guidance rather than a strict constraint. 
\textbf{Empirical validation.} 
Figure \ref{fig:bound-gpt} and \ref{fig:bound-p70} demonstrate that our theoretically predicted values of $\etatiny$ align well with empirically observed regimes of high cross-scale transferability. We estimate $\etatiny$ using model checkpoints obtained after short warmup training. Specifically, we train each candidate dataset for 500 warmup steps using a standard learning rate of $3 \times 10^{-4}$ \cite{karpathy2022nanogpt}, with complete training configurations detailed in Appendix \ref{appendix:eval-setting}. We then compute the upper bound from Equation (\ref{eq:etamax}) using statistics gathered from 20 batches sampled during this warmup phase. Our bound estimation requires computing Hessian eigenvalues and gradient variances, which we estimate efficiently through power iteration combined with Pearlmutter's Hessian-vector products and a covariance trace Monte Carlo estimator. 
Importantly, this estimation process requires only minutes of computation time, making it practically viable for real-world applications. As shown in both Figure \ref{fig:bound-gpt} and \ref{fig:bound-p70}, the predicted values consistently fall within the learning rate regime that exhibits very high cross-scale transferability, thereby validating both the theoretical soundness and practical utility of our analysis.

\textbf{Choosing $\etatiny$ in practice: a simple rule of thumb.} 
While estimating $\etatiny$ from proxy model checkpoints is computationally inexpensive, we find that a simpler heuristic suffices for most LLM pretraining scenarios. In typical LLM pretraining, gradient clipping is used to control gradient magnitudes, with the batch gradient norm capped at some constant $C$. We can therefore conservatively set $G = C$. Substituting these values into our bound yields $\etatiny < \frac{\Delta a}{\lambda_{\max}C^2}$. Since optimal learning rates for neural network training typically scale as $1/\lambda_{\max}$ \citep{lecun2002efficient}, $\etatiny$ should be orders of magnitude smaller than the optimal learning rate, with the specific factor depending on the minimum gradient alignment difference between any dataset pair. Empirically, we find that \emph{using learning rates 1-2 orders of magnitude smaller than optimal} generally works well. For language model pretraining, this typically translates to learning rates of approximately $10^{-5}$ to $10^{-6}$ \cite{radford2019language, karpathy2022nanogpt}. As demonstrated in Figure \ref{fig:bound-gpt} and \ref{fig:bound-p70}, these predicted values consistently achieve high cross-scale transferability across various model architectures and scales. Importantly, while such small learning rates would be impractical for complete model training due to slow convergence, they prove highly effective in assessing and comparing different data mixtures.

\textbf{Lower limit of $\etatiny$.} 
We further remark that in practice, $\etatiny$ can also not be too small due to factors such as floating point errors. While extremely small learning rates benefit transferability, there exists a practical lower bound determined by finite floating point precision. In 32-bit floating-point format, every number is represented with a 23-bit mantissa, giving around 7 decimal digits of precision. The gap between 1.0 and the next larger representable value is therefore $2^{-23} \approx 1.19 × 10^{-7}$, where any increment smaller than this is rounded away. Consequently, for parameters whose magnitude is $O(1)$, the parameter update magnitude (i.e., learning-rate-times-gradient product) below $10^{-7}$ will be flushed to zero, imposing a practical lower bound on usable $\etatiny$.

\begin{figure}[h]
    \centering
    \setlength\intextsep{0pt}
    \setlength\abovecaptionskip{2pt}
    \setlength\belowcaptionskip{0pt}
    \includegraphics[width=0.5\linewidth]{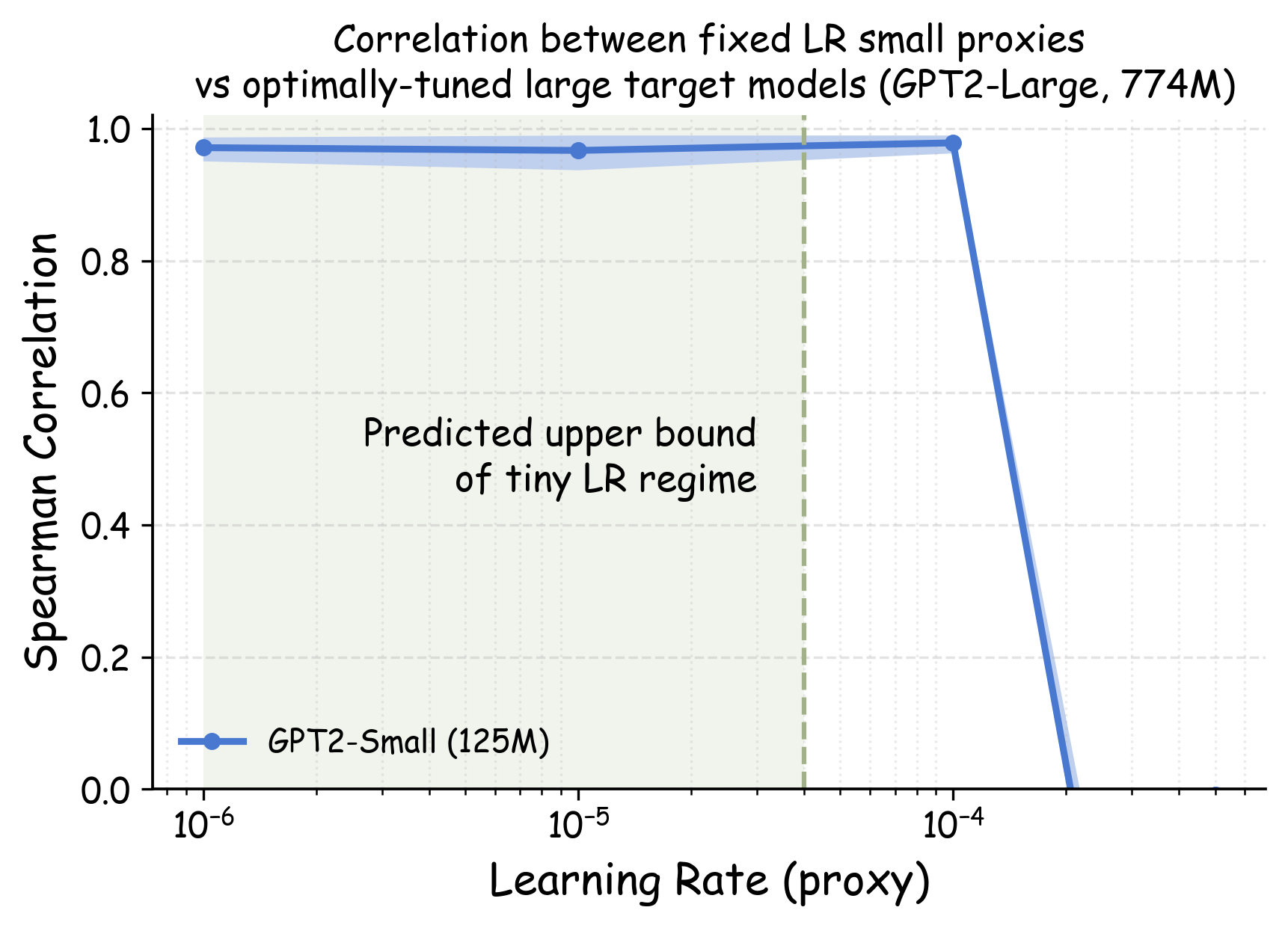}
    \caption{The same GPT2-Small curve in Figure \ref{fig:corr-vs-lr} (a), but highlighting the theoretical bounds of tiny learning rate regimes. The shaded region indicates the theoretically predicted range for $\etatiny$, with the upper bound given by Equation (\ref{eq:etamax}).} 
    \label{fig:bound-gpt}
\end{figure}

\begin{figure}[h]
    \centering
    \setlength\intextsep{0pt}
    \setlength\abovecaptionskip{2pt}
    \setlength\belowcaptionskip{0pt}
    \includegraphics[width=0.5\linewidth]{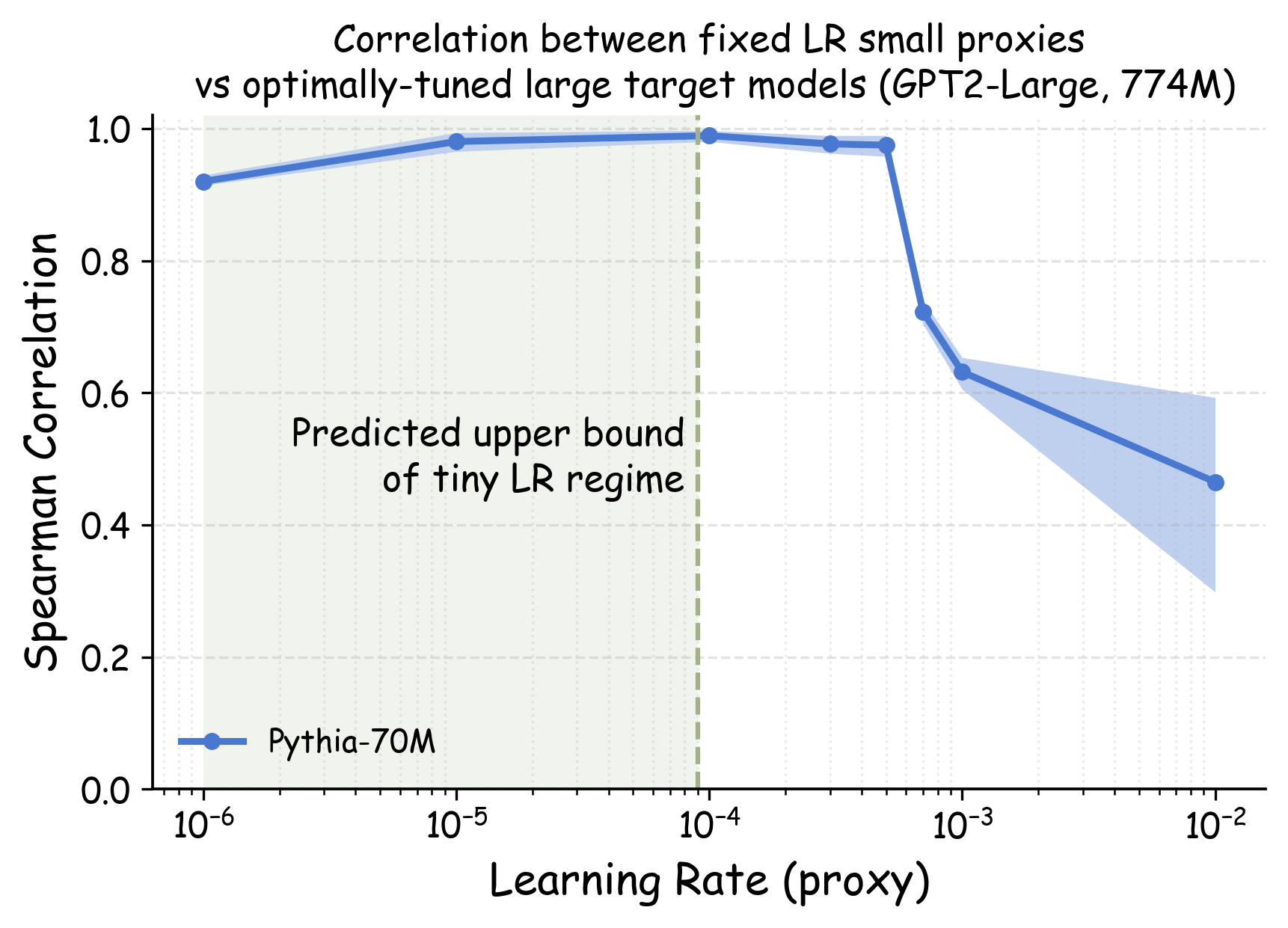}
    \caption{
    The same Pythia-70M curve in Figure \ref{fig:corr-vs-lr} (a) but highlight the theoretical bounds of tiny learning rate regimes. The shaded region indicates the theoretically predicted range for $\etatiny$, with the upper bound given by Equation (\ref{eq:etamax}).} 
    \label{fig:bound-p70}
\end{figure}

\clearpage

\section{Experiment Settings \& Additional Experiments}
\label{appendix:eval}

\subsection{Additional Experiment Settings}
\label{appendix:eval-setting}

\subsubsection{Details for evaluation metrics}

\textbf{Top-$k$ Decision Regret.} To quantify the practical risk of proxy-model-based dataset selection, we define the top-$k$ decision regret as the performance gap between the truly optimal dataset and the best-performing dataset among the proxy's top-$k$ recommendations. Let $\mathcal{S}_k$ denote the set of $k$ datasets with the highest small proxy model performance. The top-$k$ decision regret is then expressed as:
$$
\Delta_{\text{opt}}^{(k)} = \min_{D_i \in \mathcal{S}_k, \lambda \in \Lambda} \ellval(\modeltgt(D_i; \lambda)) - \min_{D_j \in \mathcal{D}, \lambda \in \Lambda}\ellval(\modeltgt(D_j; \lambda))
$$
where the first term represents the best achievable performance (in terms of validation loss) among the proxy's top-k selections when each is tuned on the large target model, and the second term represents the globally optimal performance across all candidate datasets. This metric captures the performance degradation that could result from relying on proxy model rankings, providing an estimate of the practical risk in dataset selection workflows.

\subsubsection{Model training and hyperparameter tuning}
\label{appendix:eval-setting-model-training}

\textbf{Model architectures.}
We evaluate on 3 classes of different model architectures: GPT-2 \citep{radford2019language}, Pythia \citep{biderman2023pythia}, and OPT \citep{zhang2022opt}. For target models, we use GPT2-Large (774M) and Pythia-1B. For small proxy models, we use GPT2-Small (125M), Pythia-70M, and OPT-125M.  

\textbf{Training configurations and hyperparameter tuning.} 
For all training runs, we set token-per-parameter (TPP) ratio to be 20, i.e., $1 \times$ Chinchilla's compute-optimal ratio \citep{hoffmann2022training} with the exception of the ablation study in Appendix \ref{appendix:eval-other-hyperparameters}. We use AdamW optimizer \citep{loshchilov2017decoupled} across all training runs. For hyperparameter tuning on large target models, we use simple grid search across the three most important hyperparameters in LLM pretraining: learning rate, batch size, and weight decay. For learning rate, we sweep from $8 \times 10^{-5}$, $1 \times 10^{-4}$, $3 \times 10^{-4}$, $5 \times 10^{-4}$, $7 \times 10^{-4}$, $1 \times 10^{-3}$. For batch size, we sweep from $128$, $256$, $512$, $1024$. For weight decay, we sweep from $0.001$, $0.01$, $0.1$. For small proxy models, we set batch size as 128 and weight decay as 0.1, with additional ablation studies on other choices of these two hyperparameters in Appendix \ref{appendix:eval-other-hyperparameters}. 
For both small proxy and large target models, we set other hyperparameters in the AdamW optimizer to be $\beta_1 = 0.9$, $\beta_2 = 0.999$, and $\epsilon = 1e-8$. We fix the context length at 1024 tokens across all training runs. We follow the Warmup-Stable-Decay (WSD) learning rate scheduler proposed in MiniCPM \cite{hu2024minicpm}, allocating 2000 training steps for warmup, 66\% for stable training at peak learning rate, and the remaining for linear decay. Throughout this work, ``learning rate" refers to the peak learning rate achieved during the stable phase of the WSD schedule. We use full precision training for small proxy models and bfloat16 precision for large target models. We set gradient clipping with a maximum norm of 1.0 to stabilize training dynamics across all training runs. 
All model training is conducted on 32 NVIDIA H100 GPUs with 80 GB of memory.

\subsubsection{Downstream Benchmark Evaluation}
\label{appendix:benchmark-setting}

We use the Language Model Evaluation Harness \cite{eval-harness} for benchmark evaluation. For each benchmark, the framework computes the length-normalized log-likelihood of the model on each candidate answer and marks a response as correct if the correct answer receives the highest log-likelihood score.

\subsubsection{Detailed Dataset Recipe Configurations}
\label{appendix:data-recipes}

\newcommand{\mixid}[2]{\texttt{Pile-CC70-S#1A#2}}   % \mixid{20}{10} -> CC70-S20A10
\newcommand{\rpv}[2]{\texttt{RP2-H#1T#2}}      % \rpv{80}{20}   -> RP2-H80T20
\newcommand{\pilex}[1]{\texttt{Pile-no#1}}     % \pilex{Git}     -> Pile-noGit
\newcommand{\cfour}{\texttt{C4}}
\newcommand{\dclm}{\texttt{DCLM}}
\newcommand{\refinedweb}{\texttt{RefinedWeb}}
\newcommand{\dedupline}{\texttt{DCLM-dedup-L}}
\newcommand{\deduplineS}{\texttt{DCLM-dedup-LS}}
\newcommand{\dedupgram}{\texttt{DCLM-dedup-G}}
\newcommand{\dedupgramS}{\texttt{DCLM-dedup-GS}}

We evaluate our tiny learning rate approach across 23 carefully designed data recipes that span a wide spectrum of practical data curation decisions faced by LLM pretraining practitioners. The detailed description for the data recipes considered in this work is summarized in Table \ref{table:dataset-recipes}.

\textbf{Category 1: Domain composition and ablation.} We create 10 domain mixture variations from The Pile \citep{gao2020pile}. Specifically, 4 recipes maintain a fixed 70\% Pile-CC allocation while varying the remaining 30\% between StackExchange and ArXiv. Additionally, 6 recipes are created by excluding one domain from the full Pile dataset to assess domain importance. 

\textbf{Category 2: Established datasets.} 
We evaluate 3 widely-adopted pretraining corpora: C4 \citep{raffel2020exploring}, DCLM-baseline \citep{li2024datacomplm}, and RefinedWeb \citep{penedo2023refinedweb}. They all originate from the Common Crawl but vary in their curation strategies, including data filtering criteria and deduplication algorithms. 

\textbf{Category 3: Scoring-based data filter.} We construct 6 data recipes using different mixing ratios of head-middle versus tail partitions from RedPajama-V2 \cite{weber2024redpajama}, ranging from 50:50 to 100:0. These partitions are defined by the perplexity scores from a 5-gram Kneser-Ney model trained on Wikipedia \citep{wenzek2020ccnet}. Specifically, RedPajama-V2 \citep{weber2024redpajama} partitions its 113B documents into quality buckets based on the perplexity scores from a 5-gram Kneser-Ney model trained on Wikipedia corpus. Documents are categorized as ``head'' (low perplexity), ``middle'' (medium perplexity), or ``tail'' (high perplexity). Our six mixing ratios between the head-middle and tail partitions represent different decision-making in score-based data filtering faced by LLM pretraining practitioners.

\textbf{Category 4: Deduplication.} 
To understand how deduplication decisions transfer across scales, we create 4 variants of DCLM-baseline by modifying the Massive Web repetition filter \cite{rae2021scaling} used in its curation pipeline. The Massive Web repetition filter removes documents with excessive n-gram repetitions at various granularities. We systematically adjust thresholds for line/paragraph repetitions and n-gram frequencies to create 4 new data recipes. The default thresholds for the Massive Web repetition filter are summarized in Table \ref{table:baseline-mwrf}. We further summarize the changes in thresholds for each of the newly created 4 data recipes in Tables \ref{table:dedup-L}, \ref{table:dedup-LS}, \ref{table:dedup-G}, and \ref{table:dedup-GS}. At a high level, the default thresholds are relatively conservative, and the newly created 4 data recipes are more aggressive in removing documents with repeated lines (\dedupline and \deduplineS) or short n-grams (\dedupgram and \dedupgramS).

\begin{table}[t]
\centering
\caption{Summary of 23 dataset recipes used in our experiments. Recipes are grouped by category to represent different data curation decisions faced in practice.}
\label{table:dataset-recipes}
\small
\begin{tabular}{llp{7cm}}
\toprule
\textbf{Category} & \textbf{Recipe ID} & \textbf{Description} \\
\midrule
\multirow{10}{*}{\shortstack[l]{Domain\\Composition}} 
& \mixid{0}{30} & 70\% Pile-CC, 0\% StackExchange, 30\% ArXiv \\
& \mixid{10}{20} & 70\% Pile-CC, 10\% StackExchange, 20\% ArXiv \\
& \mixid{20}{10} & 70\% Pile-CC, 20\% StackExchange, 10\% ArXiv \\
& \mixid{30}{0} & 70\% Pile-CC, 30\% StackExchange, 0\% ArXiv \\
& \pilex{ArXiv} & Full Pile excluding ArXiv \\
& \pilex{GitHub} & Full Pile excluding GitHub \\
& \pilex{EuroParl} & Full Pile excluding EuroParl \\
& \pilex{StackExchange} & Full Pile excluding StackExchange \\
& \pilex{PubMedCentral} & Full Pile excluding PubMed Central \\
& \pilex{Wikipedia} & Full Pile excluding Wikipedia \\
\midrule
\multirow{3}{*}{\shortstack[l]{Established\\Datasets}} 
& \cfour & C4 \citep{raffel2020exploring} \\
& \dclm & DCLM-baseline \citep{li2024datacomplm} \\
& \refinedweb & RefinedWeb \citep{penedo2023refinedweb} \\
\midrule
\multirow{6}{*}{\shortstack[l]{Quality\\Filter}} 
& \rpv{50}{50} & 50\% head-middle, 50\% tail \\
& \rpv{60}{40} & 60\% head-middle, 40\% tail \\
& \rpv{70}{30} & 70\% head-middle, 30\% tail \\
& \rpv{80}{20} & 80\% head-middle, 20\% tail \\
& \rpv{90}{10} & 90\% head-middle, 10\% tail \\
& \rpv{100}{0} & 100\% head-middle, 0\% tail \\
\midrule
\multirow{4}{*}{\shortstack[l]{Deduplication\\strength}} 
& \dedupline & Strict line/paragraph thresholds \\
& \deduplineS & Very strict line/paragraph thresholds \\
& \dedupgram & Strict 2,3,4-gram thresholds \\
& \dedupgramS & Strict 5,6,7,8,9,10-gram thresholds \\
\bottomrule
\end{tabular}
\end{table}

\begin{table}[h]
    \centering
    \caption{Default Massive Web repetition thresholds (used in \texttt{DCLM}). A document is filtered out if any measurement exceeds its threshold; lower is stricter.}
    \label{table:baseline-mwrf}
    \small
    \begin{tabular}{lc}
    \toprule
    \textbf{Measurement} & \textbf{Threshold} \\
    \midrule
    Duplicate line fraction                 & 0.30 \\
    Duplicate paragraph fraction            & 0.30 \\
    Duplicate line character fraction       & 0.20 \\
    Duplicate paragraph character fraction  & 0.20 \\
    Top 2-gram character fraction           & 0.20 \\
    Top 3-gram character fraction           & 0.18 \\
    Top 4-gram character fraction           & 0.16 \\
    Duplicate 5-gram character fraction     & 0.15 \\
    Duplicate 6-gram character fraction     & 0.14 \\
    Duplicate 7-gram character fraction     & 0.13 \\
    Duplicate 8-gram character fraction     & 0.12 \\
    Duplicate 9-gram character fraction     & 0.11 \\
    Duplicate 10-gram character fraction    & 0.10 \\
    \bottomrule
    \end{tabular}
\end{table}

\begin{table}[h]
    \centering
    \caption{Massive Web repetition thresholds for \dedupline. Changes vs.\ baseline in \textbf{bold}.}
    \label{table:dedup-L}
    \small
    \begin{tabular}{lc}
    \toprule
    \textbf{Measurement} & \textbf{Threshold} \\
    \midrule
    Duplicate line fraction                 & \textbf{0.20} \\
    Duplicate paragraph fraction            & \textbf{0.20} \\
    Duplicate line character fraction       & \textbf{0.15} \\
    Duplicate paragraph character fraction  & \textbf{0.15} \\
    Top 2-gram character fraction           & 0.20 \\
    Top 3-gram character fraction           & 0.18 \\
    Top 4-gram character fraction           & 0.16 \\
    Duplicate 5-gram character fraction     & 0.15 \\
    Duplicate 6-gram character fraction     & 0.14 \\
    Duplicate 7-gram character fraction     & 0.13 \\
    Duplicate 8-gram character fraction     & 0.12 \\
    Duplicate 9-gram character fraction     & 0.11 \\
    Duplicate 10-gram character fraction    & 0.10 \\
    \bottomrule
    \end{tabular}
\end{table}

\begin{table}[h]
    \centering
    \caption{Massive Web repetition thresholds for \deduplineS. Changes vs.\ baseline in \textbf{bold}.}
    \label{table:dedup-LS}
    \small
    \begin{tabular}{lc}
    \toprule
    \textbf{Measurement} & \textbf{Threshold} \\
    \midrule
    Duplicate line fraction                 & \textbf{0.15} \\
    Duplicate paragraph fraction            & \textbf{0.15} \\
    Duplicate line character fraction       & \textbf{0.10} \\
    Duplicate paragraph character fraction  & \textbf{0.10} \\
    Top 2-gram character fraction           & 0.20 \\
    Top 3-gram character fraction           & 0.18 \\
    Top 4-gram character fraction           & 0.16 \\
    Duplicate 5-gram character fraction     & 0.15 \\
    Duplicate 6-gram character fraction     & 0.14 \\
    Duplicate 7-gram character fraction     & 0.13 \\
    Duplicate 8-gram character fraction     & 0.12 \\
    Duplicate 9-gram character fraction     & 0.11 \\
    Duplicate 10-gram character fraction    & 0.10 \\
    \bottomrule
    \end{tabular}
\end{table}
    
\begin{table}[h]
    \centering
    \caption{Massive Web repetition thresholds for \dedupgram\ (tightening 2–4‑gram tests). Changes vs.\ baseline in \textbf{bold}.}
    \label{table:dedup-G}
    \small
    \begin{tabular}{lc}
    \toprule
    \textbf{Measurement} & \textbf{Threshold} \\
    \midrule
    Duplicate line fraction                 & 0.30 \\
    Duplicate paragraph fraction            & 0.30 \\
    Duplicate line character fraction       & 0.20 \\
    Duplicate paragraph character fraction  & 0.20 \\
    Top 2-gram character fraction           & \textbf{0.14} \\
    Top 3-gram character fraction           & \textbf{0.12} \\
    Top 4-gram character fraction           & \textbf{0.10} \\
    Duplicate 5-gram character fraction     & 0.15 \\
    Duplicate 6-gram character fraction     & 0.14 \\
    Duplicate 7-gram character fraction     & 0.13 \\
    Duplicate 8-gram character fraction     & 0.12 \\
    Duplicate 9-gram character fraction     & 0.11 \\
    Duplicate 10-gram character fraction    & 0.10 \\
    \bottomrule
    \end{tabular}
\end{table}

\begin{table}[h]
    \centering
    \caption{Massive Web repetition thresholds for \dedupgramS\ (tightening 5–10‑gram tests). Changes vs.\ baseline in \textbf{bold}.}
    \label{table:dedup-GS}
    \small
    \begin{tabular}{lc}
    \toprule
    \textbf{Measurement} & \textbf{Threshold} \\
    \midrule
    Duplicate line fraction                 & 0.30 \\
    Duplicate paragraph fraction            & 0.30 \\
    Duplicate line character fraction       & 0.20 \\
    Duplicate paragraph character fraction  & 0.20 \\
    Top 2-gram character fraction           & 0.20 \\
    Top 3-gram character fraction           & 0.18 \\
    Top 4-gram character fraction           & 0.16 \\
    Duplicate 5-gram character fraction     & \textbf{0.08} \\
    Duplicate 6-gram character fraction     & \textbf{0.07} \\
    Duplicate 7-gram character fraction     & \textbf{0.06} \\
    Duplicate 8-gram character fraction     & \textbf{0.05} \\
    Duplicate 9-gram character fraction     & \textbf{0.04} \\
    Duplicate 10-gram character fraction    & \textbf{0.03} \\
    \bottomrule
    \end{tabular}
\end{table}

\clearpage

\subsection{Additional Results \& Discussion}
\label{appendix:eval-results}

\subsubsection{Additional results for section \ref{sec:eval-results}}
\label{appendix:additional-results}

This section presents supplementary results that extend the findings reported in Section \ref{sec:eval-results}. Figure \ref{fig:topk-p70-gpt2} to Figure \ref{fig:barplot-o125-gpt2large} demonstrate the top-k decision regret performance when using alternative proxy architectures (Pythia-70M and OPT-125M) when GPT2-Large is the target model. 
These results corroborate our main findings by showing that the tiny learning rate approach consistently outperforms standard learning rates across diverse model families. 
In Figure \ref{fig:topk-gpt2-gpt2-benchmark}, \ref{fig:topk-p70-gpt2-benchmark} and \ref{fig:topk-o125-gpt2-benchmark} we present the top-k decision regret decomposed across specific evaluation tasks, including HellaSwag, Winogrande, OpenBookQA, ARC-Easy, and CommonsenseQA, as well as the aggregate performance across all benchmarks. Figure \ref{fig:corr-vs-lr-p1b} to \ref{fig:barplot-o125-p1b} replicates the complete experimental protocol using Pythia-1B as the target architecture. The consistency of improvements across different evaluation metrics and proxy/target model architectures strengthens the practical applicability of our method in real-world deployment scenarios.

\textbf{Proxy training runs with tiny learning rates exhibit significantly smaller stochasticity.} In addition to achieving superior rank correlation, proxy models trained with tiny learning rates demonstrate greatly reduced variance in transferability. As we can see in Figures \ref{fig:corr-vs-lr} (a) and \ref{fig:corr-vs-lr-p1b}, the variation in rank correlation due to training randomness is substantially narrower for learning rates below $1 \times 10^{-4}$. This phenomenon can be attributed to the fact that tiny learning rates produce smaller parameter updates, making the training process less sensitive to stochastic factors. More specifically, a very small learning rate produces proportionally smaller parameter diffusions, reducing sensitivity to stochastic factors such as random initialization and gradient noise \citep{jastrzkebski2017three,stephan2017stochastic}. This stability advantage of tiny learning rates provides an additional practical benefit: practitioners can achieve consistent dataset rankings without requiring multiple training runs to account for stochastic variation.

\begin{figure}[h]
    \centering
    \setlength\intextsep{0pt}
    \setlength\abovecaptionskip{2pt}
    \setlength\belowcaptionskip{-10pt}
    \includegraphics[width=0.5\linewidth]{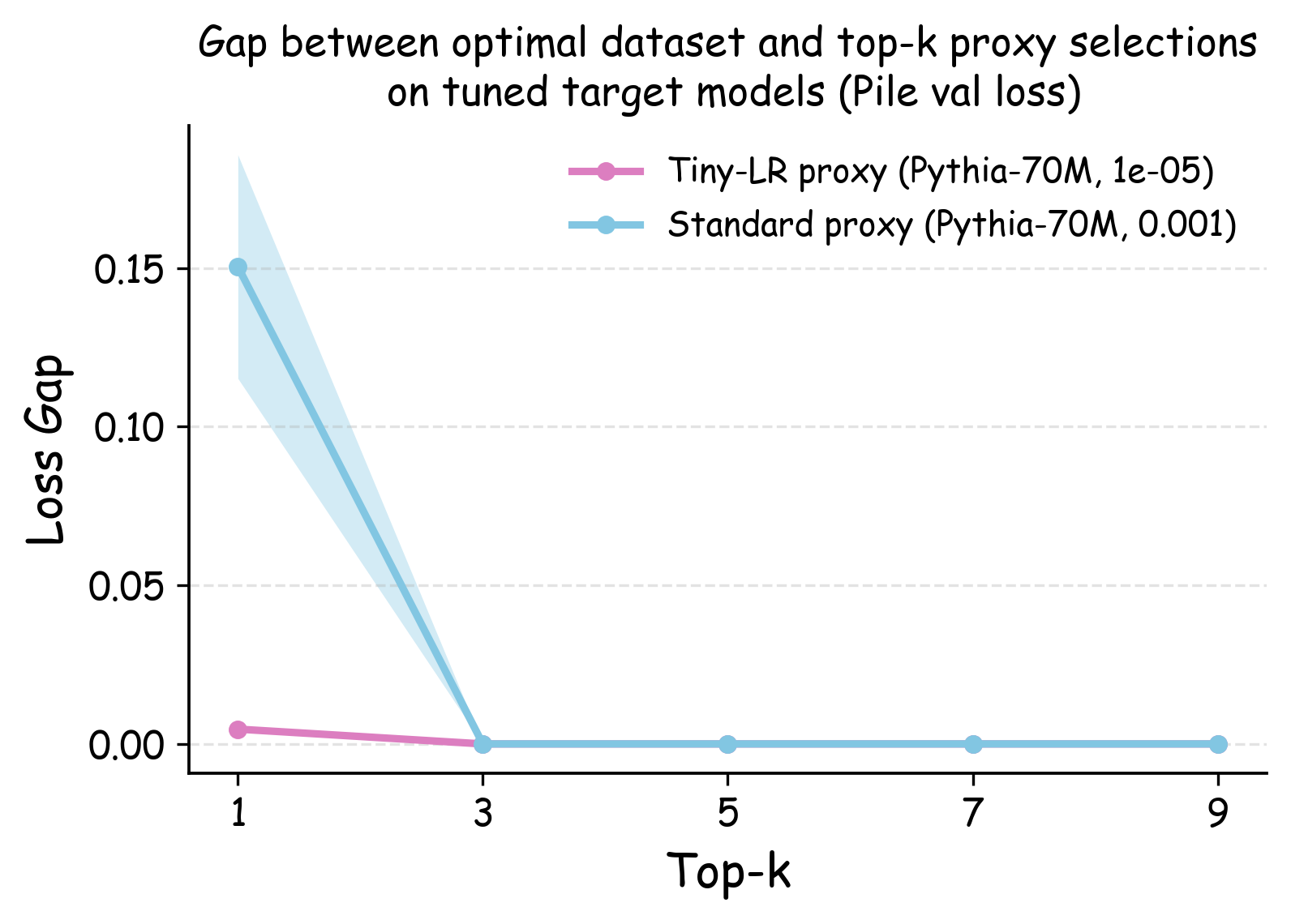}
    \caption{
    Top-$k$ decision regret between the optimal dataset and the best dataset among top-$k$ proxy (Pythia-70M) selections on hyperparameter-optimized target models (GPT2-Large), measured using Pile validation loss. Shaded areas show 95\% bootstrap CIs computed by resampling seeds (3 runs per dataset).
    }
    \label{fig:topk-p70-gpt2}
\end{figure}

\begin{figure}[h]
    \centering
    \setlength\intextsep{0pt}
    \setlength\abovecaptionskip{2pt}
    \setlength\belowcaptionskip{-10pt}
    \includegraphics[width=0.5\linewidth]{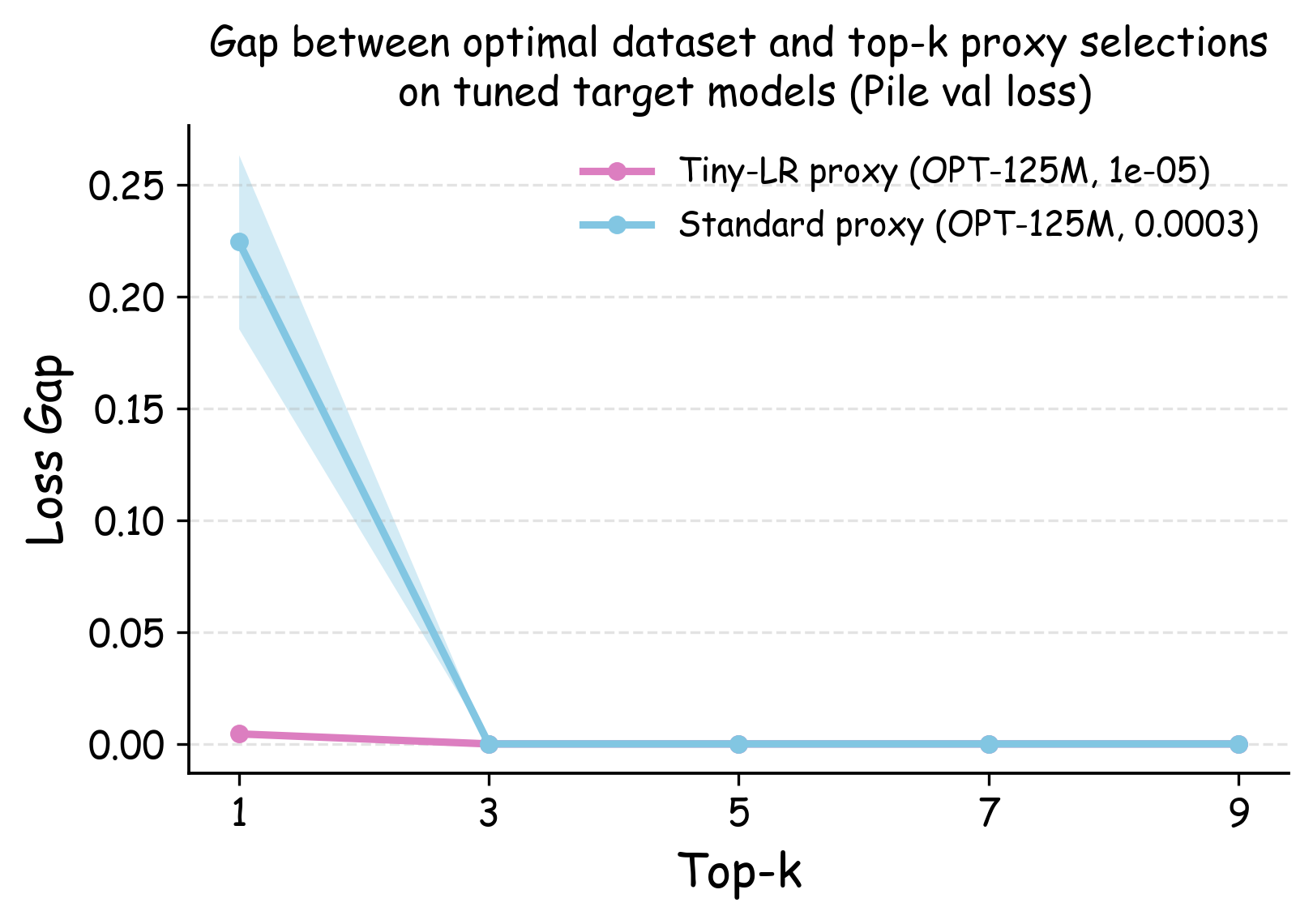}
    \caption{
    Top-$k$ decision regret between the optimal dataset and the best dataset among top-$k$ proxy (OPT-125M) selections on hyperparameter-optimized target models (GPT2-Large), measured using Pile validation loss. Shaded areas show 95\% bootstrap CIs computed by resampling seeds (3 runs per dataset).
    }
    \label{fig:topk-o125-gpt2}
\end{figure}

\begin{figure}[h]
    \centering
    \setlength\intextsep{0pt}
    \setlength\abovecaptionskip{2pt}
    \setlength\belowcaptionskip{-10pt}
    \includegraphics[width=0.75\linewidth]{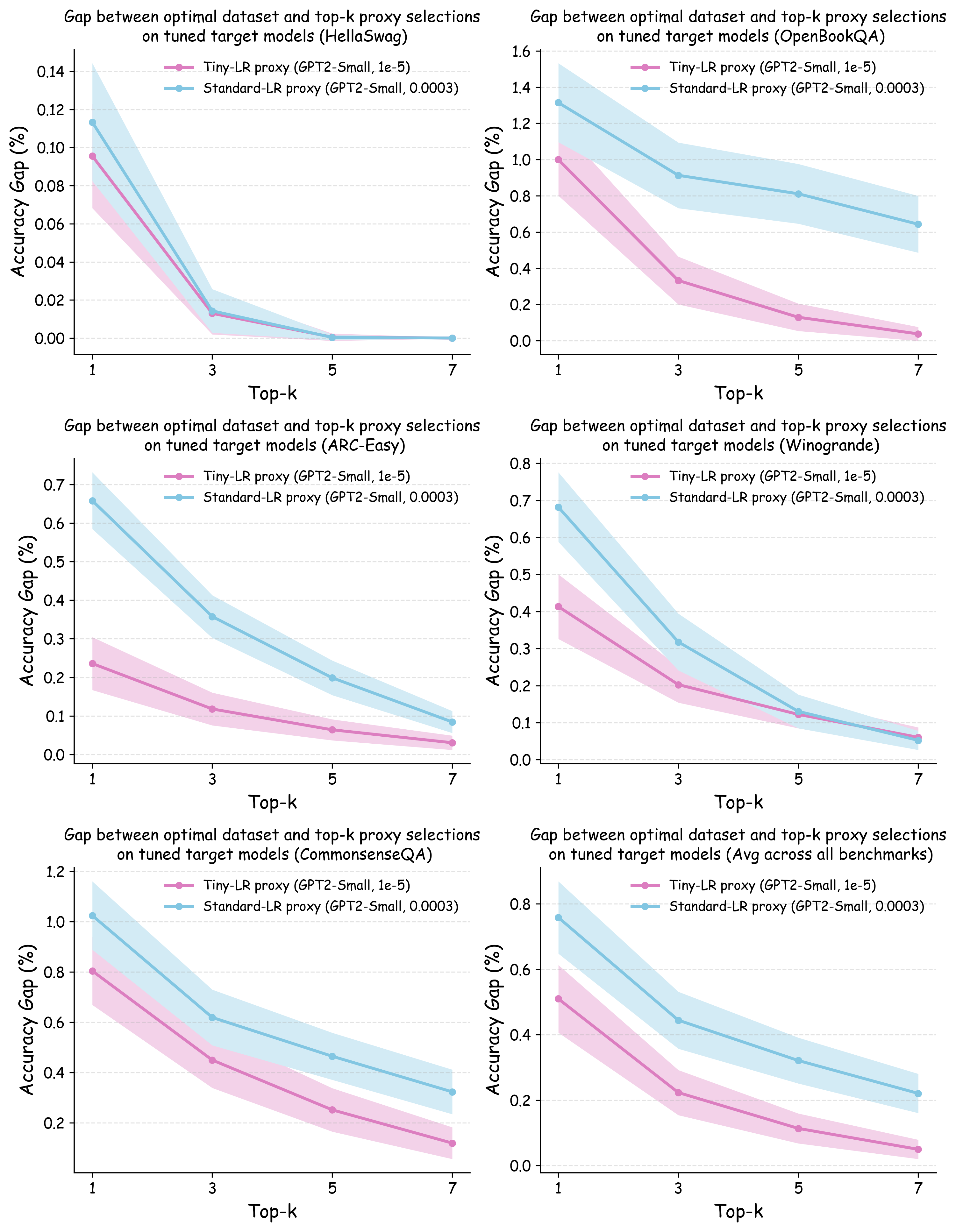}
    \caption{Top-$k$ decision regret between the optimal dataset and the best dataset among top-$k$ proxy (GPT2-Small) selections on hyperparameter-optimized target models (GPT2-Large), measured using downstream benchmarks. 
    }
    \label{fig:topk-gpt2-gpt2-benchmark}
\end{figure}

\begin{figure}[h]
    \centering
    \setlength\intextsep{0pt}
    \setlength\abovecaptionskip{2pt}
    \setlength\belowcaptionskip{-10pt}
    \includegraphics[width=0.75\linewidth]{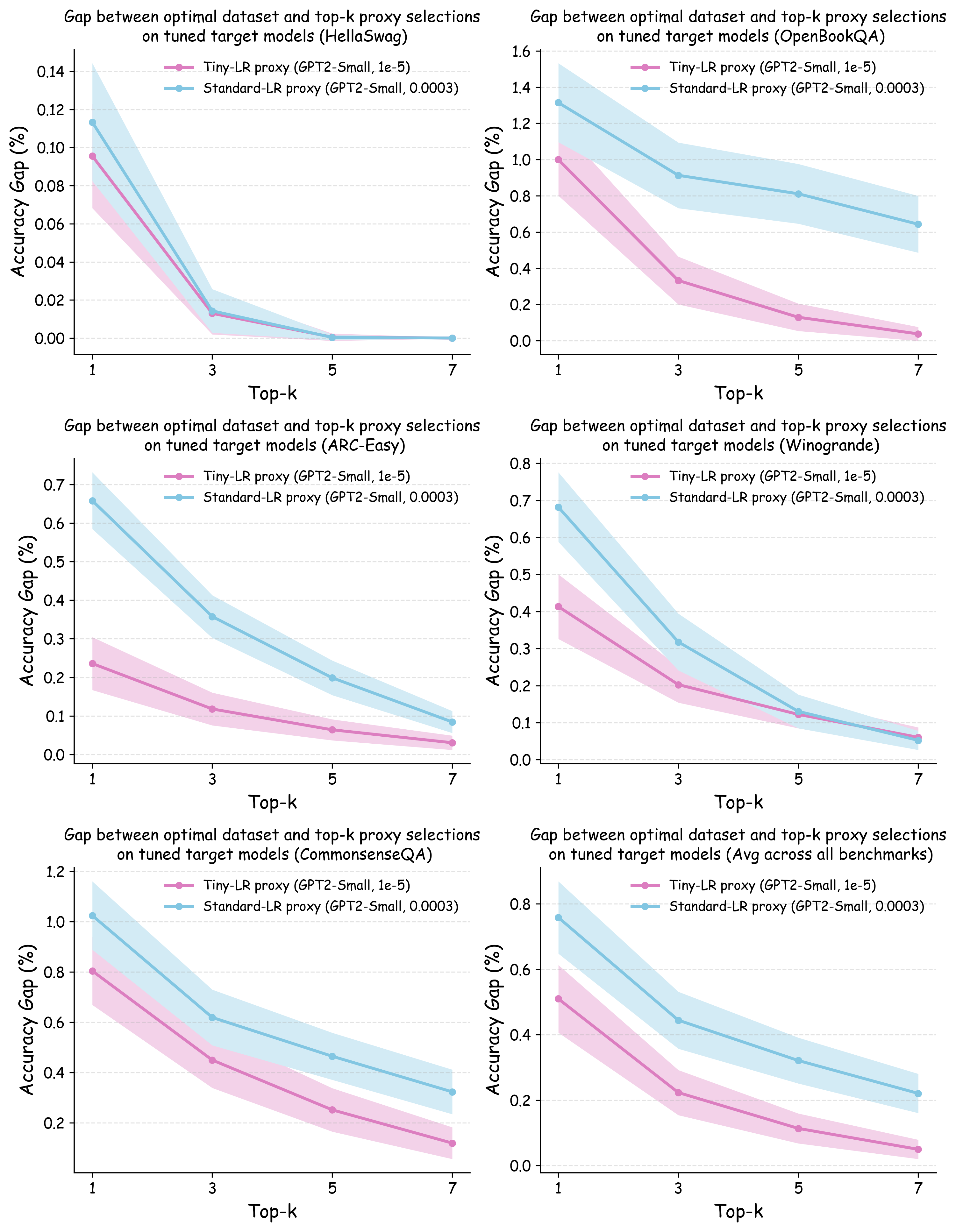}
    \caption{Top-$k$ decision regret between the optimal dataset and the best dataset among top-$k$ proxy (Pythia-70M) selections on hyperparameter-optimized target models (GPT2-Large), measured using downstream benchmarks. 
    }
    \label{fig:topk-p70-gpt2-benchmark}
\end{figure}

\begin{figure}[h]
    \centering
    \setlength\intextsep{0pt}
    \setlength\abovecaptionskip{2pt}
    \setlength\belowcaptionskip{-10pt}
    \includegraphics[width=0.75\linewidth]{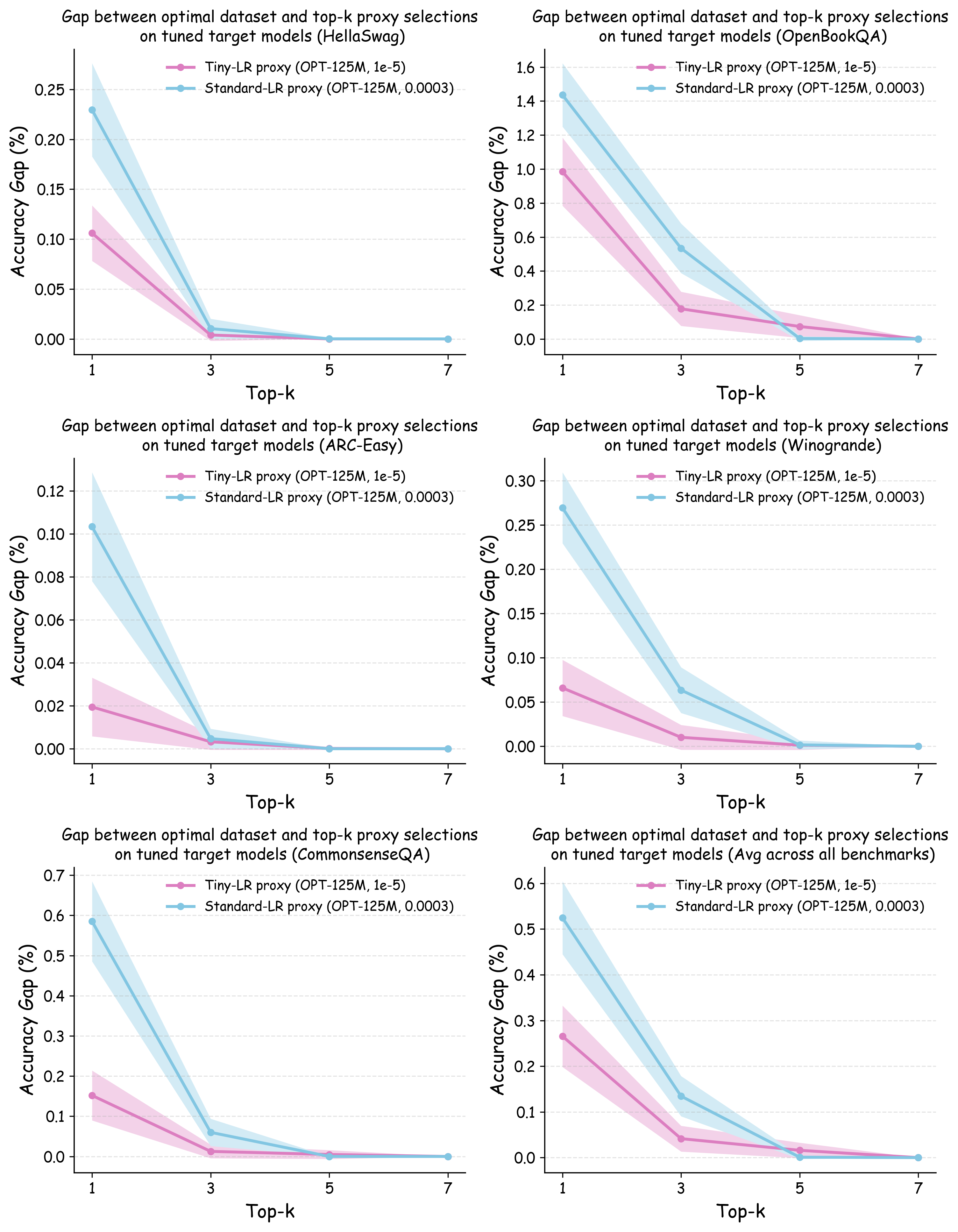}
    \caption{Top-$k$ decision regret between the optimal dataset and the best dataset among top-$k$ proxy (OPT-125M) selections on hyperparameter-optimized target models (GPT2-Large), measured using downstream benchmarks. 
    }
    \label{fig:topk-o125-gpt2-benchmark}
\end{figure}

\begin{figure}[h]
    \centering
    \setlength\intextsep{0pt}
    \setlength\abovecaptionskip{2pt}
    \setlength\belowcaptionskip{0pt}
    \includegraphics[width=\linewidth]{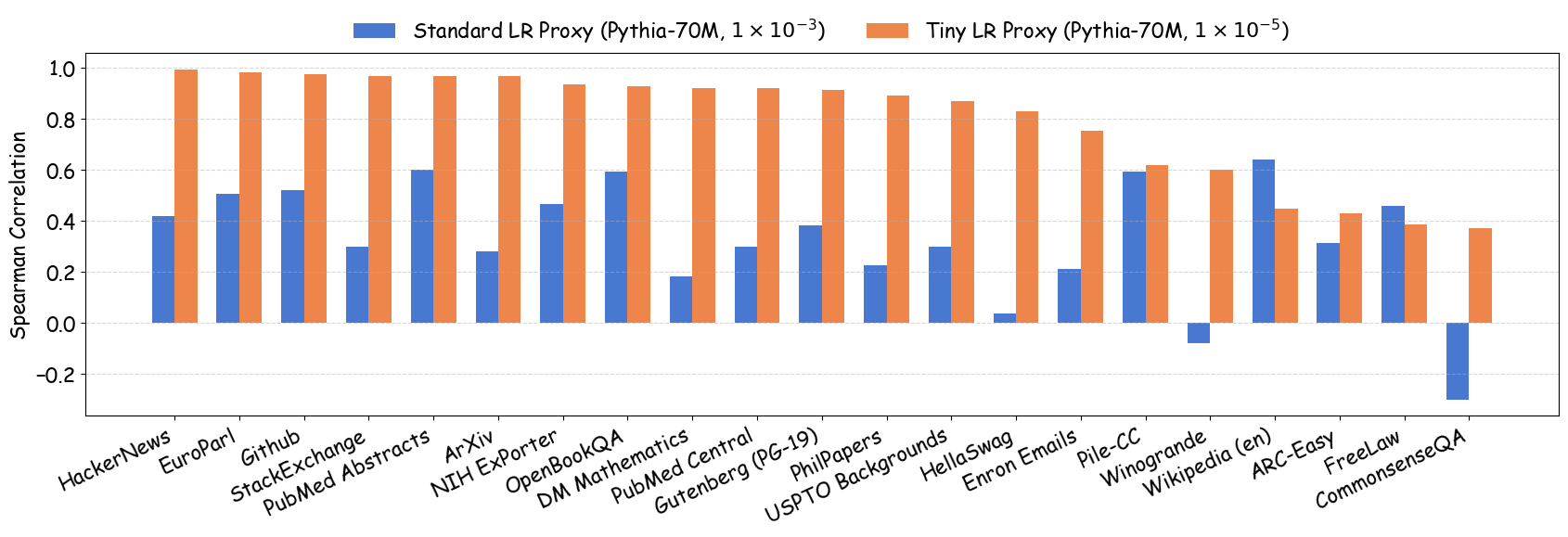}
    \caption{
    Average rank correlation between proxy (Pythia-70M) and target (GPT2-Large) for the loss computed over a variety of validation domains (from Pile) and downstream benchmarks.
    } 
\label{fig:barplot-p70-gpt2large}
\end{figure}

\begin{figure}[h]
    \centering
    \setlength\intextsep{0pt}
    \setlength\abovecaptionskip{2pt}
    \setlength\belowcaptionskip{0pt}
    \includegraphics[width=\linewidth]{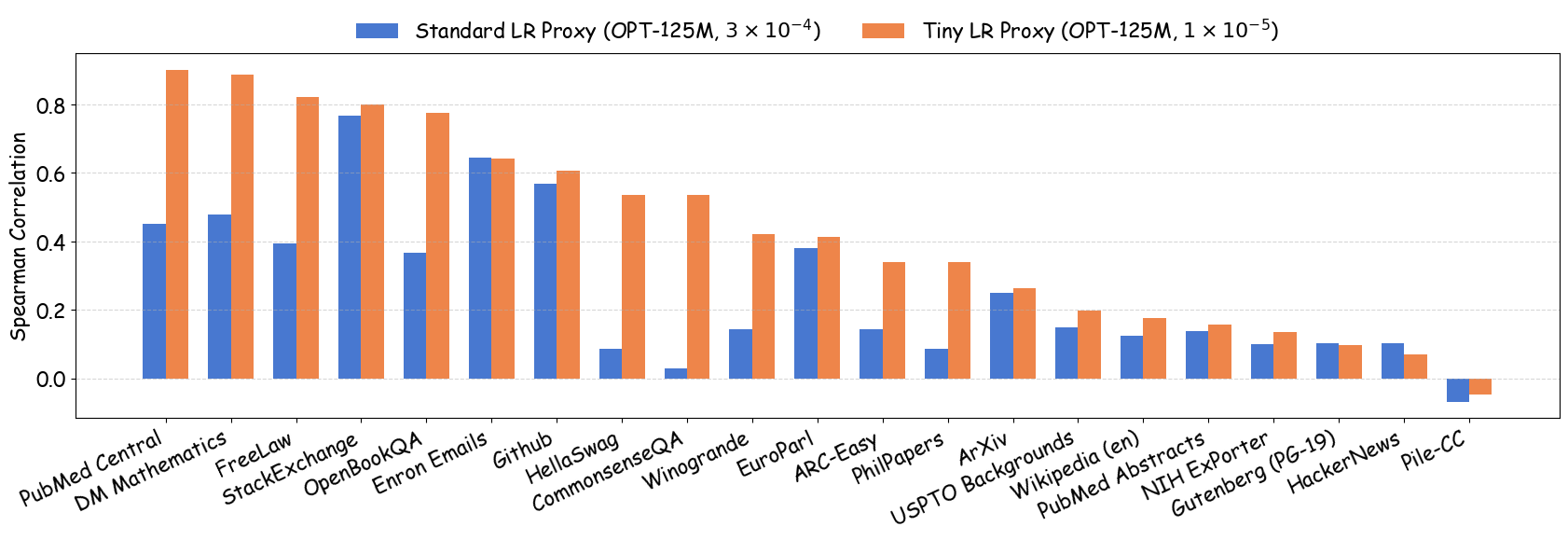}
    \caption{Average rank correlation between proxy (OPT-125M) and target (GPT2-Large) for the loss computed over a variety of validation domains (from Pile) and downstream benchmarks.} 
\label{fig:barplot-o125-gpt2large}
\end{figure}

\begin{figure}[h]
    \centering
    \setlength\intextsep{0pt}
    \setlength\abovecaptionskip{2pt}
    \setlength\belowcaptionskip{-10pt}
    \includegraphics[width=0.5\linewidth]{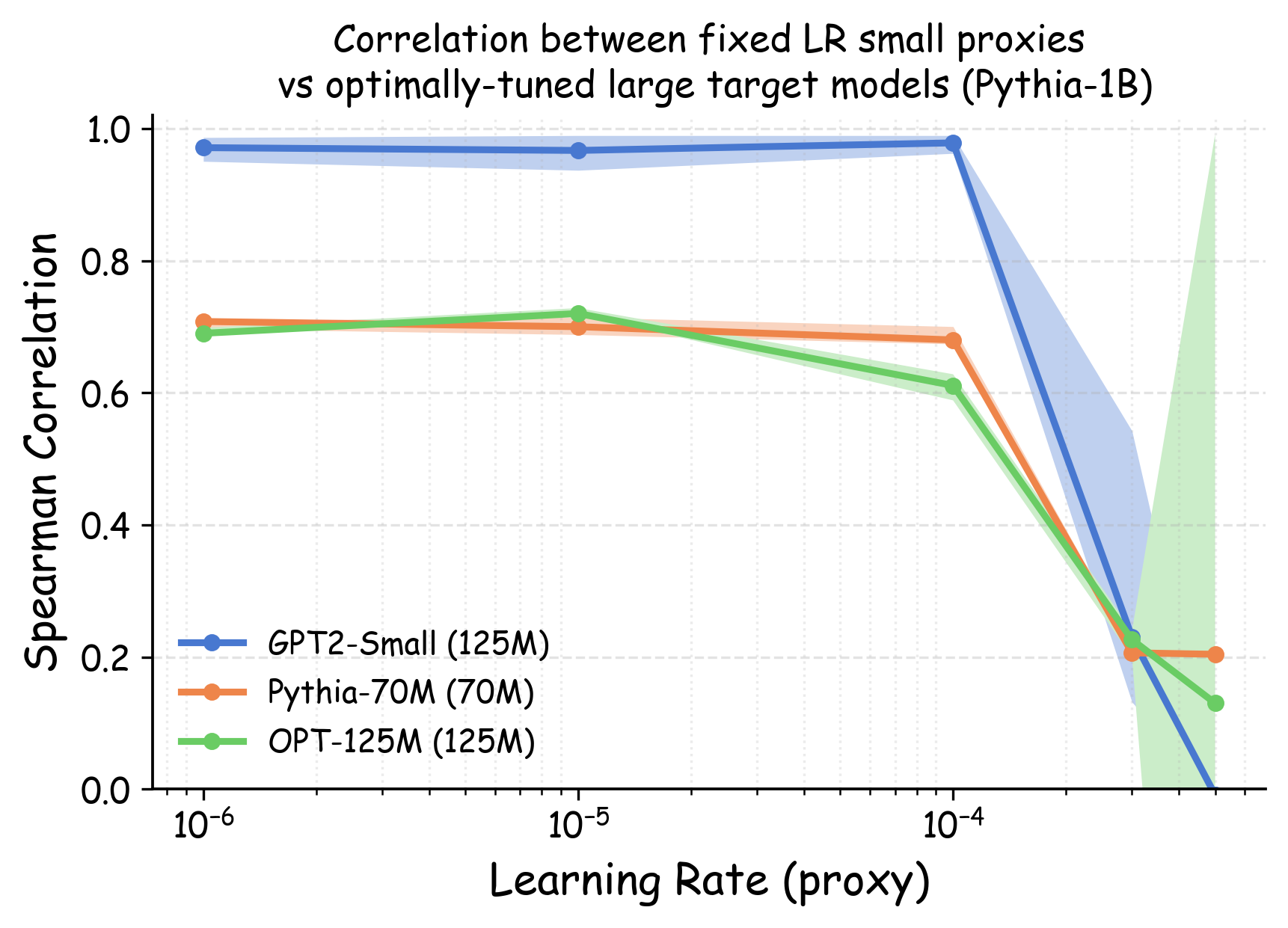}
    \caption{
    Spearman rank correlation between dataset rankings from proxy models (GPT2-Small, Pythia-70M, OPT-125M) and larger target model (Pythia-1B) as a function of proxy model learning rate, evaluated on aggregated Pile validation loss. Shaded areas show 95\% bootstrap CIs computed by resampling seeds (3 runs per dataset).
    }
    \label{fig:corr-vs-lr-p1b}
\end{figure}

\begin{figure}[h]
    \centering
    \setlength\intextsep{0pt}
    \setlength\abovecaptionskip{2pt}
    \setlength\belowcaptionskip{-10pt}
    \includegraphics[width=0.75\linewidth]{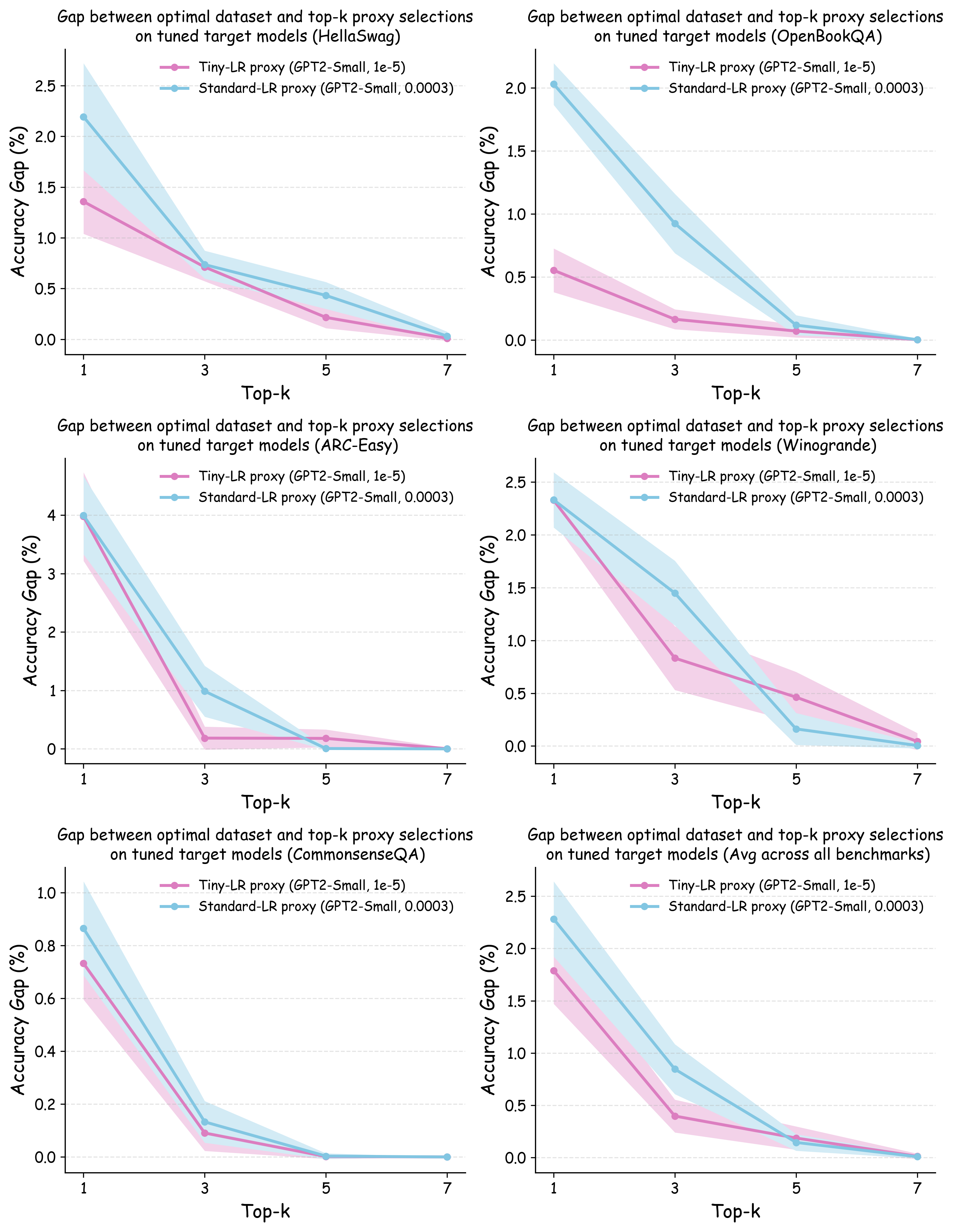}
    \caption{Top-$k$ decision regret between the optimal dataset and the best dataset among top-$k$ proxy (GPT2-Small) selections on hyperparameter-optimized target models (Pythia-1B), measured using downstream benchmarks. 
    }
    \label{fig:topk-gpt2-p1b-benchmark}
\end{figure}

\begin{figure}[h]
    \centering
    \setlength\intextsep{0pt}
    \setlength\abovecaptionskip{2pt}
    \setlength\belowcaptionskip{-10pt}
    \includegraphics[width=0.75\linewidth]{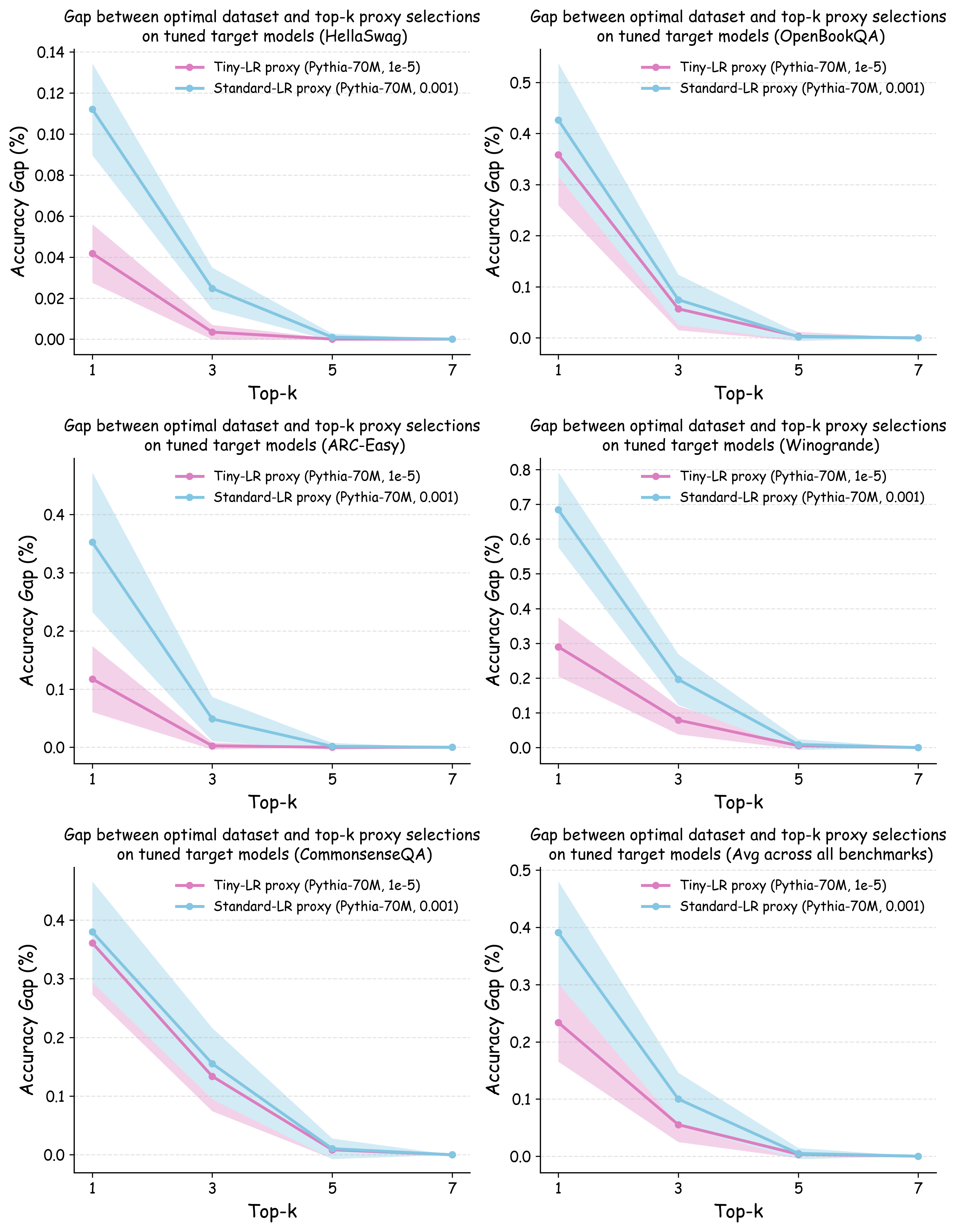}
    \caption{Top-$k$ decision regret between the optimal dataset and the best dataset among top-$k$ proxy (Pythia-70M) selections on hyperparameter-optimized target models (Pythia-1B), measured using downstream benchmarks. 
    }
    \label{fig:topk-p70-p1b-benchmark}
\end{figure}

\begin{figure}[h]
    \centering
    \setlength\intextsep{0pt}
    \setlength\abovecaptionskip{2pt}
    \setlength\belowcaptionskip{-10pt}
    \includegraphics[width=0.75\linewidth]{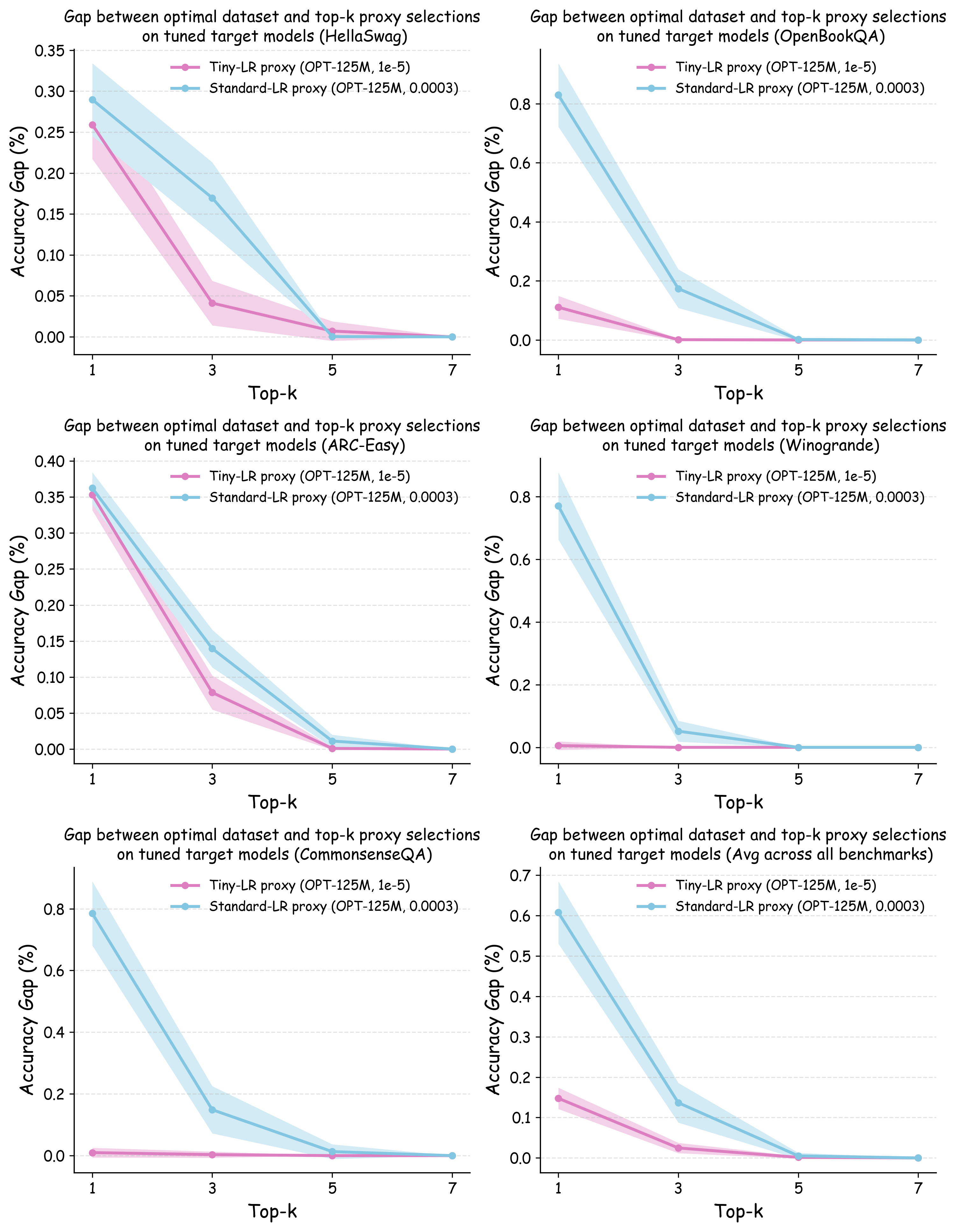}
    \caption{Top-$k$ decision regret between the optimal dataset and the best dataset among top-$k$ proxy (GPT2-Small) selections on hyperparameter-optimized target models (Pythia-1B), measured using downstream benchmarks. 
    }
    \label{fig:topk-o125-p1b-benchmark}
\end{figure}

\begin{figure}[h]
    \centering
    \setlength\intextsep{0pt}
    \setlength\abovecaptionskip{2pt}
    \setlength\belowcaptionskip{0pt}
    \includegraphics[width=\linewidth]{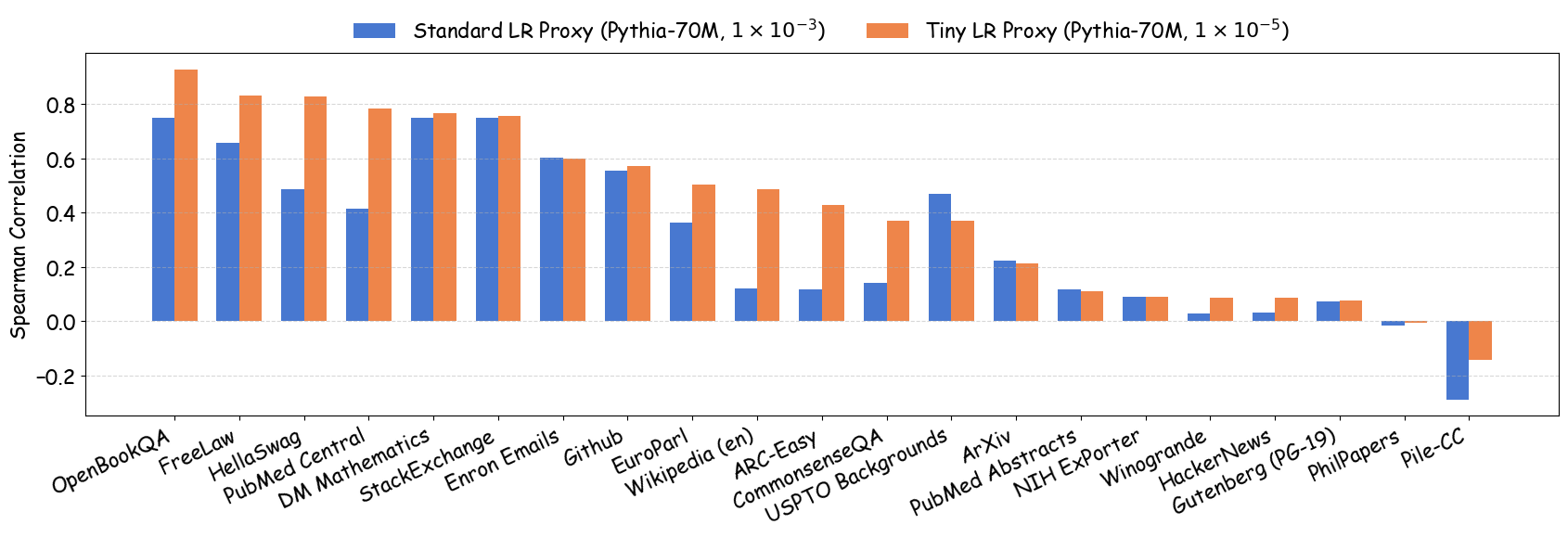}
    \caption{Average rank correlation between proxy (Pythia-70M) and target (Pythia-1B) for the loss computed over a variety of validation domains (from Pile) and downstream benchmarks.
    } 
\label{fig:barplot-p70-p1b}
\end{figure}

\begin{figure}[h]
    \centering
    \setlength\intextsep{0pt}
    \setlength\abovecaptionskip{2pt}
    \setlength\belowcaptionskip{0pt}
    \includegraphics[width=\linewidth]{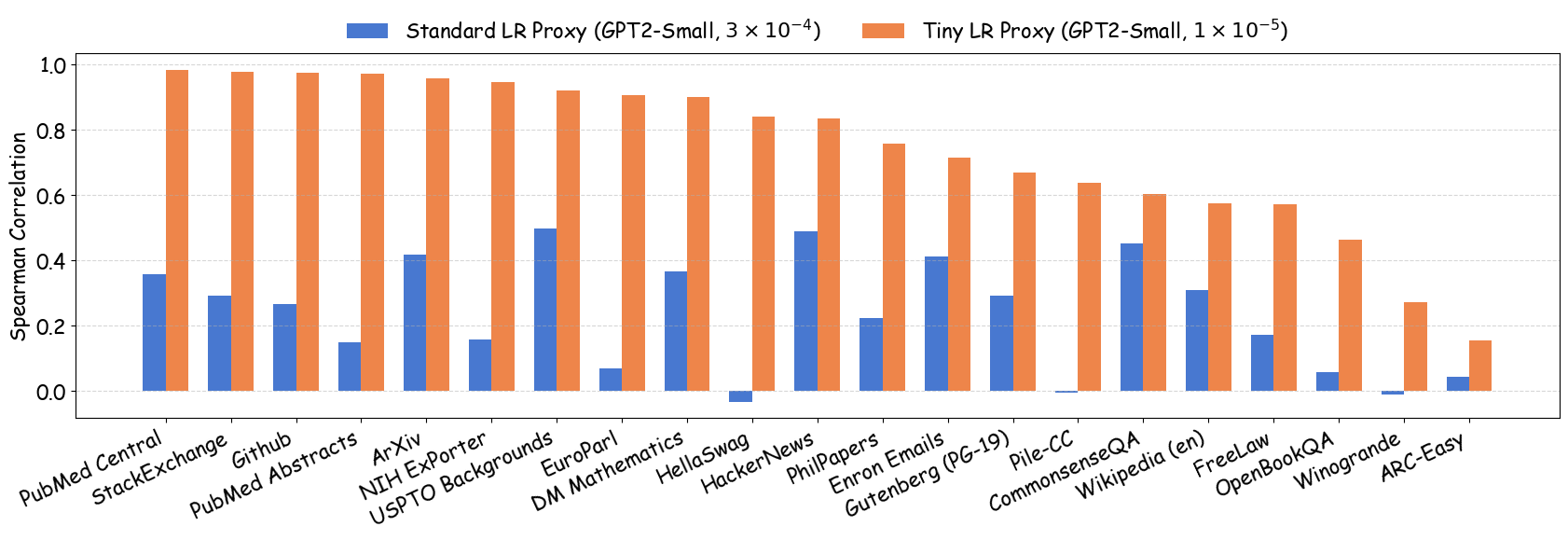}
    \caption{Average rank correlation between proxy (GPT2-Small) and target (Pythia-1B) for the loss computed over a variety of validation domains (from Pile) and downstream benchmarks.} 
\label{fig:barplot-gpt2-p1b}
\end{figure}

\begin{figure}[h]
    \centering
    \setlength\intextsep{0pt}
    \setlength\abovecaptionskip{2pt}
    \setlength\belowcaptionskip{0pt}
    \includegraphics[width=\linewidth]{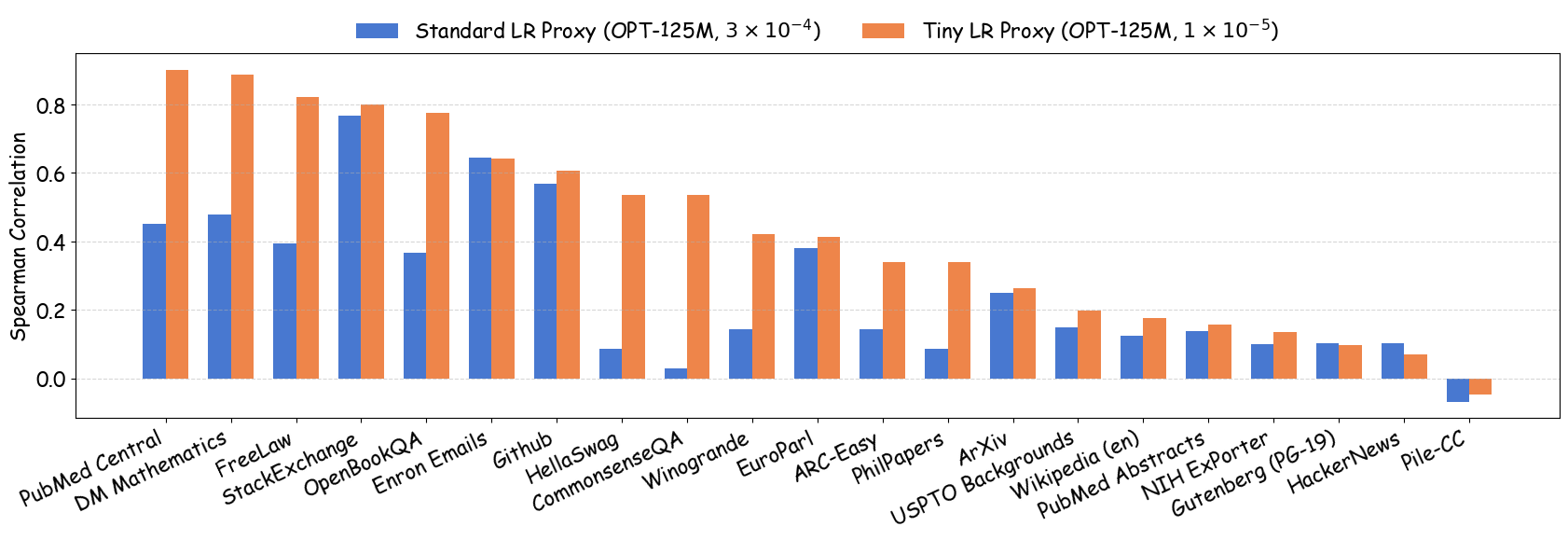}
    \caption{Average rank correlation between proxy (OPT-125M) and target (Pythia-1B) for the loss computed over a variety of validation domains (from Pile) and downstream benchmarks.} 
\label{fig:barplot-o125-p1b}
\end{figure}

\begin{figure}[h]
    \centering
    \setlength\intextsep{0pt}
    \setlength\abovecaptionskip{2pt}
    \setlength\belowcaptionskip{-10pt}
    \includegraphics[width=0.75\linewidth]{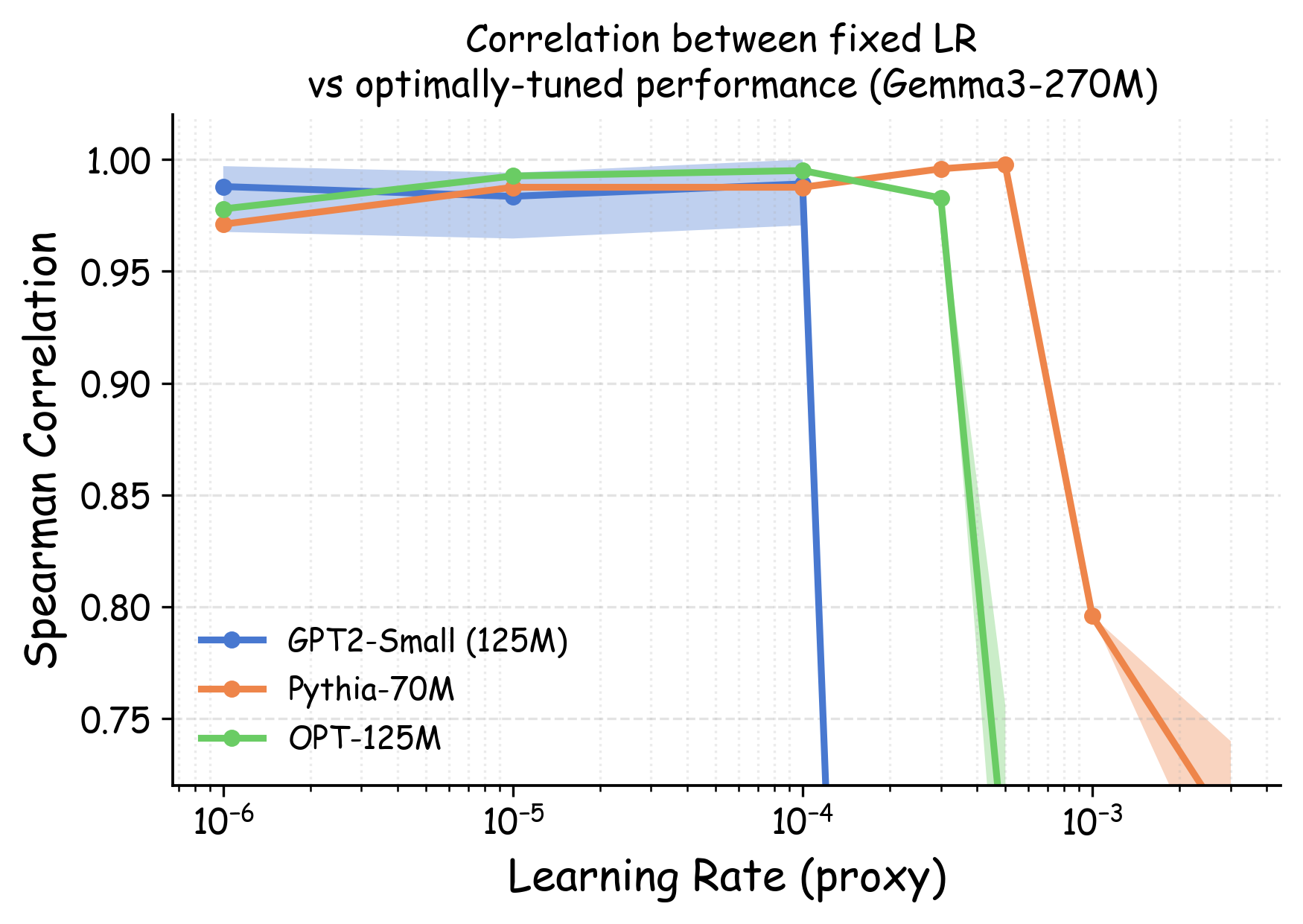}
    \caption{
    \add{Spearman rank correlation between dataset rankings from proxy models (GPT2-Small, Pythia-70M, OPT-125M) and larger target model (Gemma3-270M) as a function of proxy model learning rate, evaluated on aggregated Pile validation loss. Shaded areas show 95\% bootstrap CIs computed by resampling seeds (3 runs per data recipe).}
    }
    \label{fig:corr-vs-lr-gemma3}
\end{figure}

\clearpage

\subsubsection{Ablation studies on other hyperparameters}
\label{appendix:eval-other-hyperparameters}

Figure \ref{fig:ablation-BS}, \ref{fig:ablation-WD}, and \ref{fig:ablation-tpp} conduct ablation studies to examine the transferability of small proxy models trained with varying batch sizes, weight decay coefficients, and token-per-parameter ratios (TPP), again with respect to the optimally-tuned large target models. All experiments maintain our single-epoch training protocol with no sample repetition. 

\add{
\textbf{Batch size and weight decay.} As illustrated in Figure \ref{fig:ablation-BS} and \ref{fig:ablation-WD}, we systematically evaluated batch sizes spanning \{32, 64, 128, 256\} and weight decay coefficients \{0.001, 0.01, 0.1, 1.0\} (covering the typical range used in LLM pretraining). The results demonstrate that dataset rankings maintain Spearman correlations above 0.90 across all configurations when small learning rates are used. This stands in sharp contrast to the dramatic ranking reversals observed with learning rate variations (correlation drops below 0.75 at standard learning rates). These findings justify our focus on learning rate as the primary factor affecting proxy model transferability.}

\add{
\textbf{Token-per-parameter ratio (TPP).} 
Due to computational constraints, the TPP ablation study employed Pythia-410M as the target model and was limited to 14 data recipes, as each recipe required dataset-specific hyperparameter optimization. Figure \ref{fig:ablation-tpp} examines TPP ratios ranging from 20 (Chinchilla optimal) to 160 ($8\times$ Chinchilla optimal). Across this range, dataset rankings maintain high correlation with the target model's optimal rankings (Spearman $\rho > 0.85$), demonstrating robustness to overtraining. We attribute this stability to the one-epoch convention of LLM pretraining, which eliminates sample repetition and thus avoids the overfitting dynamics that could otherwise confound dataset comparisons.}

\add{
Our comprehensive ablation studies across batch size, weight decay, and TPP ratios demonstrate that dataset rankings remain stable across these dimensions within practical training regimes, provided small learning rates are used. This validates learning rate as the critical hyperparameter for proxy model transferability and confirms that our proposed tiny learning rate approach provides robust dataset selection across the diverse training configurations encountered in practice.}

\begin{figure}[h]
    \centering
    \setlength\intextsep{0pt}
    \setlength\abovecaptionskip{2pt}
    \setlength\belowcaptionskip{-10pt}
    \includegraphics[width=0.5\linewidth]{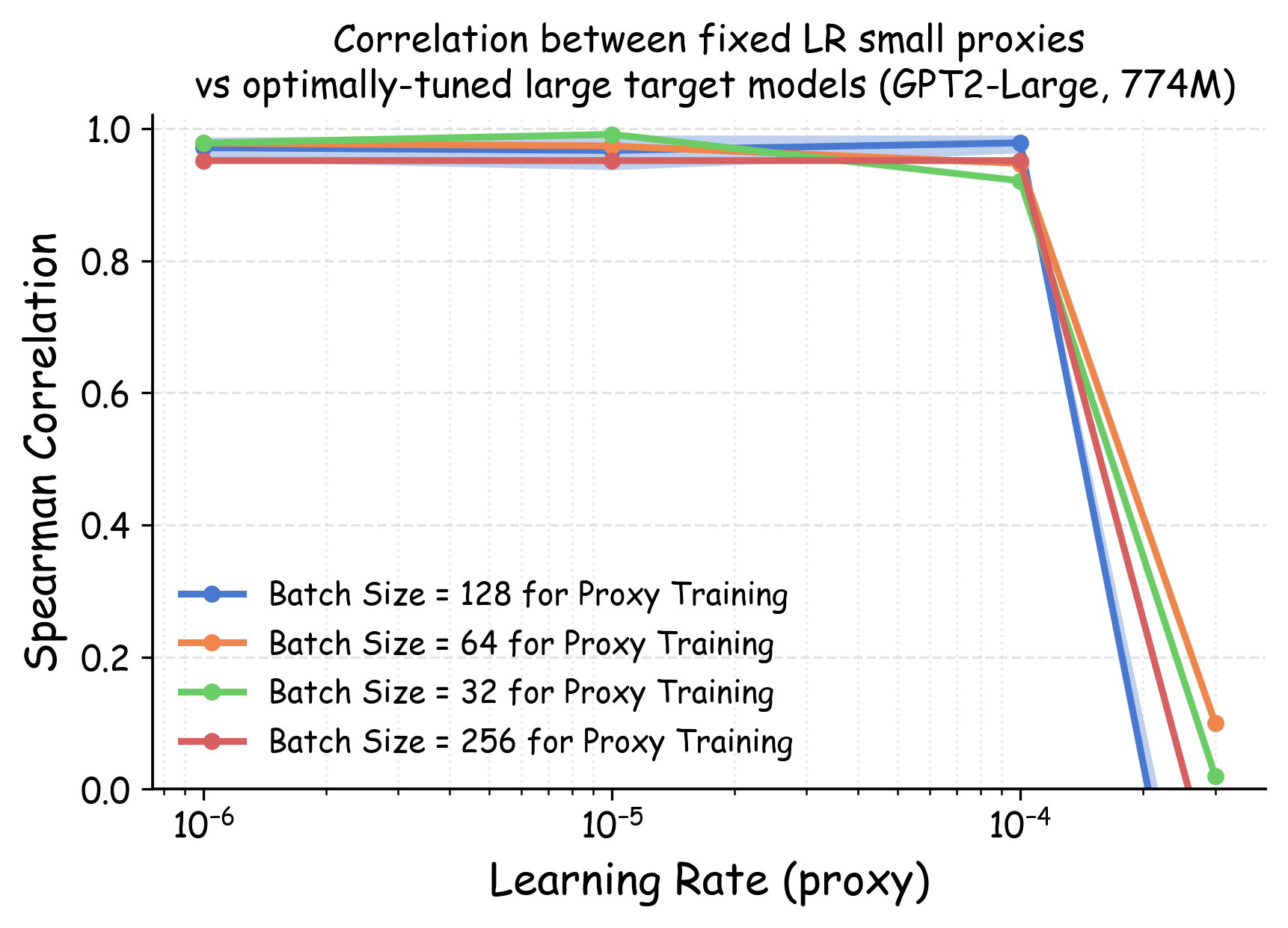}
    \caption{
    Spearman rank correlation between dataset rankings from proxy models (GPT2-Small) trained with different batch sizes and a larger target model (GPT2-Large) as a function of proxy model learning rate, evaluated on aggregated Pile validation loss.
    }
    \label{fig:ablation-BS}
\end{figure}

\begin{figure}[h]
    \centering
    \setlength\intextsep{0pt}
    \setlength\abovecaptionskip{2pt}
    \setlength\belowcaptionskip{-10pt}
    \includegraphics[width=0.5\linewidth]{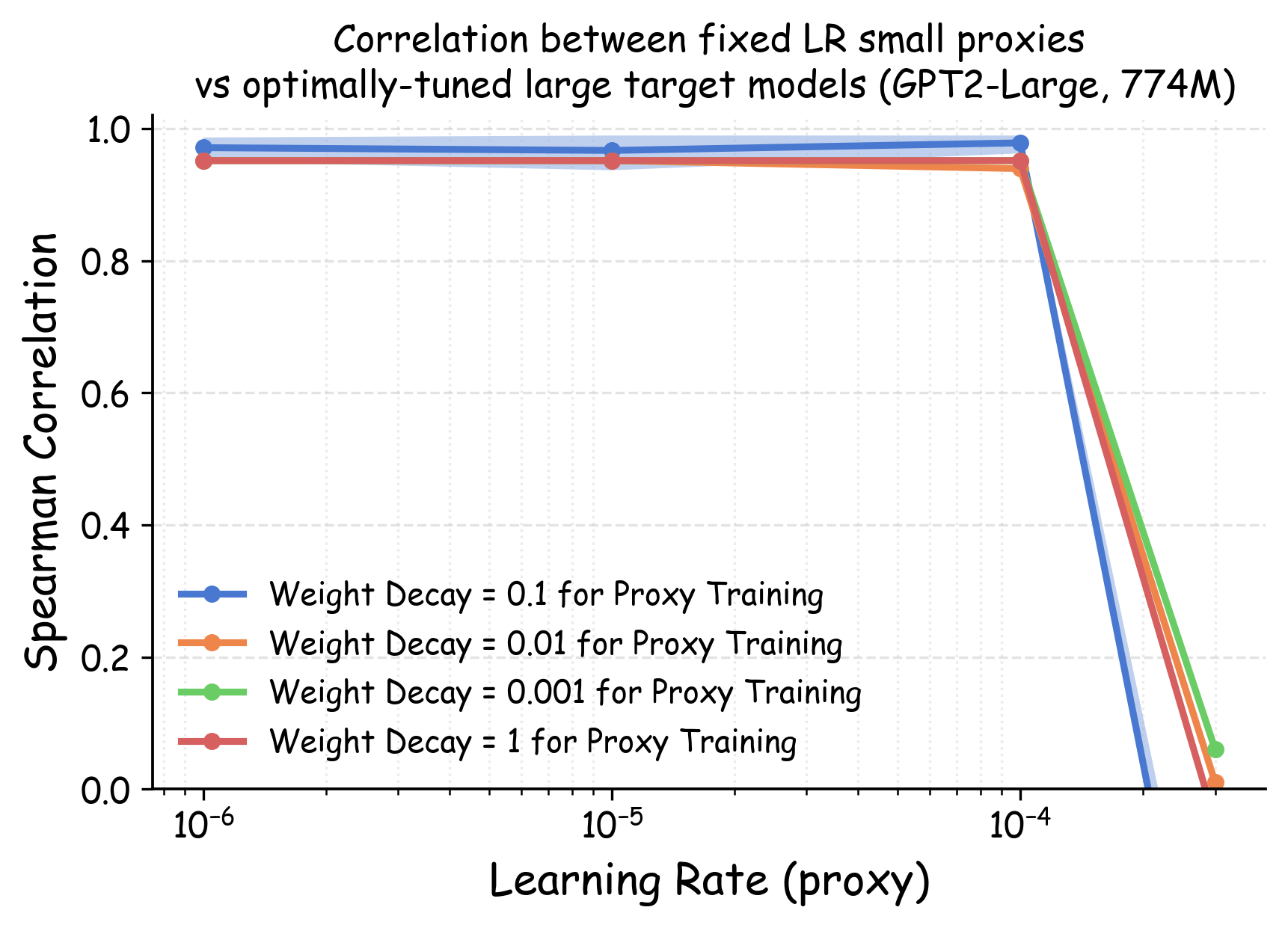}
    \caption{
    Spearman rank correlation between dataset rankings from proxy models (GPT2-Small) trained with different weight decay values and larger target model (GPT2-Large) as a function of proxy model learning rate, evaluated on aggregated Pile validation loss.
    }
    \label{fig:ablation-WD}
\end{figure}

\begin{figure}[h]
    \centering
    \setlength\intextsep{0pt}
    \setlength\abovecaptionskip{2pt}
    \setlength\belowcaptionskip{-10pt}
    \includegraphics[width=0.5\linewidth]{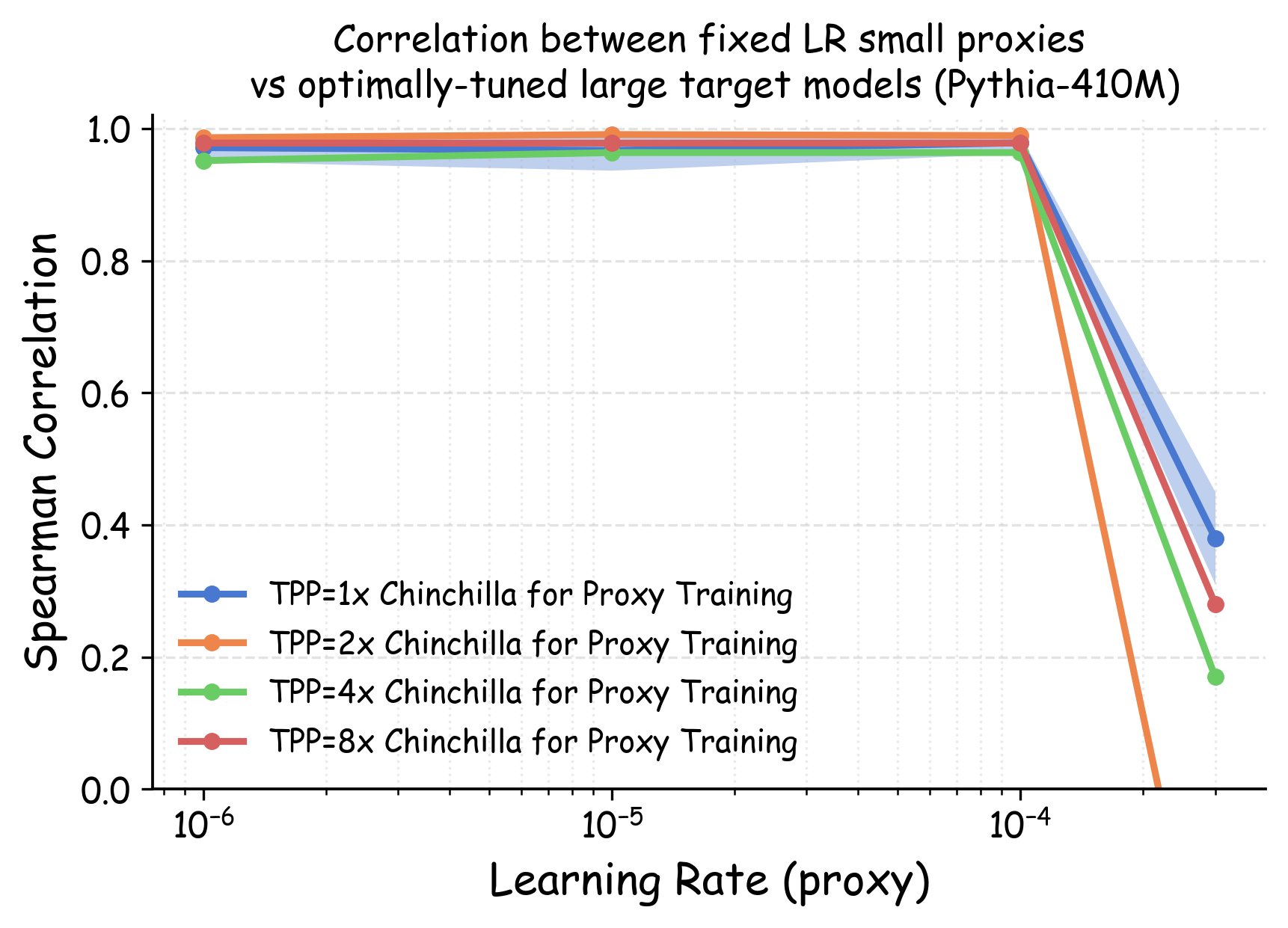}
    \caption{
    Spearman rank correlation between dataset rankings from proxy models (GPT2-Small) trained with different token-per-parameter ratios (TPP) and larger target model (Pythia-410M) as a function of proxy model learning rate, evaluated on aggregated Pile validation loss.
    }
    \label{fig:ablation-tpp}
\end{figure}

\clearpage

\subsubsection{Visualization of learning rate vs model performance}
\label{appendix:lr-vs-loss-curve}

\add{
To further illustrate how different data recipes exhibit varying optimal learning rates, we provide detailed learning rate vs validation loss curves for the six data recipes in ``scoring-based data filter'' category. As shown in Figure \ref{fig:lr-vs-loss-curves-ht}, the data recipes exhibit non-trivial gaps in optimally tuned performance ($>0.2$ in validation loss between best and worst). Furthermore, their optimal learning rates are not fixed. We can see that the data recipes with higher head\_middle percentages require higher learning rates. 
In particular, when the learning rate is $3\times10^{-3}$, the 70:30 mixture appears best, but its optimally tuned performance is $>0.05$ worse than that of the 50:50 mixture. In contrast, smaller learning rates ($<1\times10^{-3}$) correctly rank these data recipes. Note that the 50:50 mixture (including substantial ``tail'' data) achieves optimal performance, illustrating that quality signals like perplexity are imperfect proxies (Wikipedia represents only a small portion of the Pile dataset). This underscores the critical role of proper ablation experiment protocols.
}

\begin{figure}[h]
    \centering
    \setlength\intextsep{0pt}
    \setlength\abovecaptionskip{2pt}
    \setlength\belowcaptionskip{-10pt}
    \includegraphics[width=0.5\linewidth]{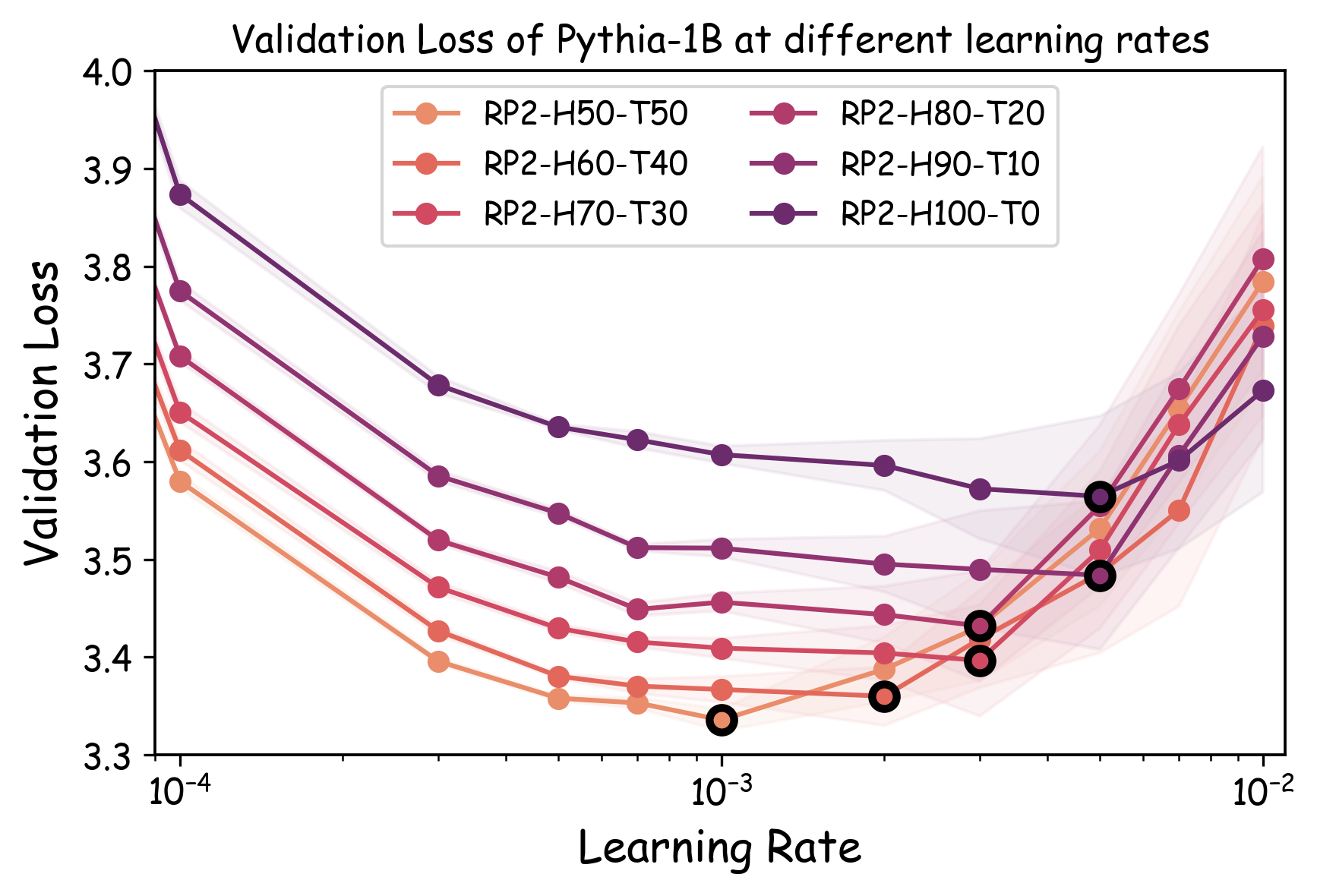}
    \caption{
    \add{
    Validation loss curves as a function of learning rate for six RedPajama-V2 data recipes mixing head\_middle and tail partitions. Each curve represents a different mixing ratio, with RP2-H50-T50 denoting 50\% head\_middle and 50\% tail, up to RP2-H100-T0 (100\% head\_middle). Black circles mark the optimal learning rate for each recipe. All models are Pythia-1B trained on Pile.}}
    \label{fig:lr-vs-loss-curves-ht}
\end{figure}

\clearpage

\subsubsection{Empirical Verification of Gradient Alignment Hypothesis}
\label{appendix:hypothesis-verify}

\add{In this section, we empirically validate the mechanistic hypothesis proposed in Section \ref{sec:method} that tiny learning rates succeed by isolating the stable first-order gradient-alignment signal in the loss reduction.}

\newcommand{\galign}{g_{\text{align}}}

\add{We directly measure the gradient alignment term. Specifically, we define the accumulated first-order gradient alignment score $\galign = \sum_{t=0}^{T} \nabla \ell_{\text{val}}(\theta_t) \cdot \nabla \ell(\theta_t; \mathcal{B}_t)$ where $\mathcal{B}_t$ is the training batch at step $t$, and $\ell_{\text{val}}$ is the loss on the validation set. This is the first-order approximation of the reduction in validation loss for vanilla SGD. In the experiment, we compute this score over the first $T=2000$ steps for both proxy (GPT2-Small) and target (GPT2-Large) models across the 23 data recipes.}

\add{
\textbf{Cross-scale stability of first-order alignment.} Figure \ref{fig:gradient-comb} (a) shows the correlation of raw gradient alignment scores between proxy and target models. We observe near-perfect correlation ($\rho > 0.98$) across small learning rates, supporting our claim that first-order interactions are structurally stable across model scales. At standard learning rates, correlation decreases but remains reasonably high ($\rho > 0.8$). This decrease is expected because model parameters diverge more substantially when trained with larger learning rates, and since gradients depend on parameter values, this divergence naturally reduces cross-scale correlation.
}

\add{
\textbf{Gradient alignment vs actual performance ranking:} Figure \ref{fig:gradient-comb} (b) shows the relationship between gradient alignment and actual performance rankings. The orange line shows the Spearman correlation between the proxy model's accumulated gradient alignment and the validation loss of the optimally-tuned large target model. Crucially, this correlation remains high ($\rho approx 0.88$) and stable even at higher learning rates, whereas the correlation based on the proxy's actual training loss (blue line) drops significantly below 0.4. We note that while highly predictive, this metric theoretically represents the update direction for vanilla SGD; since our actual training employs AdamW, which adapts updates based on moment estimation, the raw gradient alignment serves as a robust but slightly approximate proxy for the realized loss reduction.
}

\begin{figure}[h]
    \centering
    \setlength\intextsep{0pt}
    \setlength\abovecaptionskip{2pt}
    \setlength\belowcaptionskip{-10pt}
    \includegraphics[width=1.0\linewidth]{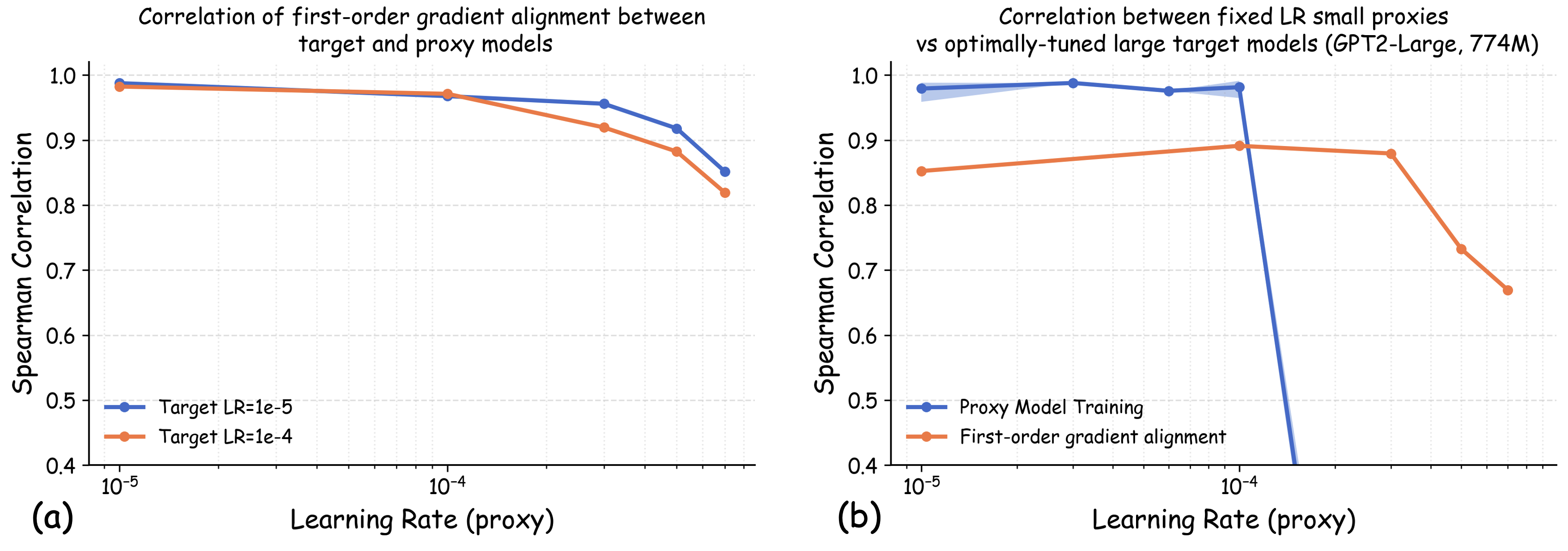}
    \caption{
    \add{
    \textbf{Empirical verification of the gradient alignment hypothesis.} 
    \textbf{(a)} Spearman correlation of accumulated gradient alignment scores between proxy (GPT2-Small) and target (GPT2-Large) models across varying learning rates. High correlation at small learning rates confirms that first-order gradient dynamics remain structurally stable across model scales.
    \textbf{(b)} Comparison of ranking transferability to optimally-tuned large target models: proxy model's actual validation loss (blue) versus accumulated first-order gradient alignment (orange). The gradient alignment signal maintains a strong correlation even at learning rates where actual loss rankings collapse, supporting our hypothesis that first-order terms provide a stable foundation for cross-scale dataset comparison.
    }}
    \label{fig:gradient-comb}
\end{figure}

\clearpage

\subsubsection{Effect of Training Duration on Transferability}
\label{appendix:training-duration}

\add{
We investigate whether early stopping can reliably determine dataset rankings when combined with tiny learning rates, motivated by the common practice of training for only a short period during data-ablation experiments. We measure the Spearman rank correlation between model performance at intermediate checkpoints and the optimally-tuned target model performance at the final step, using Pythia-70M with the same training configurations as our main experiments (11k total training steps) in Section \ref{sec:eval}.}

\add{
\textbf{The results strongly support using early stopping when the learning rate is small.} Figure~\ref{fig:train-duration}(a) shows that when proxy models are trained with tiny learning rates (e.g., $10^{-5}$ or $10^{-6}$), high rank correlation is maintained even with substantially fewer training steps. For instance, at 3k steps (approximately 27\% of full training), tiny learning rate proxies achieve rank correlations above 0.95. This suggests that under the tiny learning rate regime, dataset rankings emerge early and remain stable, potentially enabling significant computational savings. In contrast, with standard learning rates, rank correlation generally improves as training progresses toward convergence.}

\add{
\textbf{The transferability of early checkpoints depends on the learning rate schedule.} We also observe that transferability at 1k steps (mid-warmup) is notably worse than at later checkpoints across all learning rates. We hypothesize this is because the warmup phase involves rapidly changing learning rates and potentially unstable optimization dynamics. To investigate this, we conducted a pilot experiment reducing the warmup period from 2,000 to 200 steps, shown in Figure~\ref{fig:train-duration}(b). With shorter warmup (i.e., peak learning rate reached earlier), the 1k-step checkpoint achieves comparable transferability to later checkpoints. While preliminary, this suggests that the transferability of early checkpoints depends on the learning rate schedule. We leave a systematic investigation of the interplay between early stopping and scheduler design to future work.}

\begin{figure}[h]
    \centering
    \setlength\intextsep{0pt}
    \setlength\abovecaptionskip{2pt}
    \setlength\belowcaptionskip{-10pt}
    \includegraphics[width=1.0\linewidth]{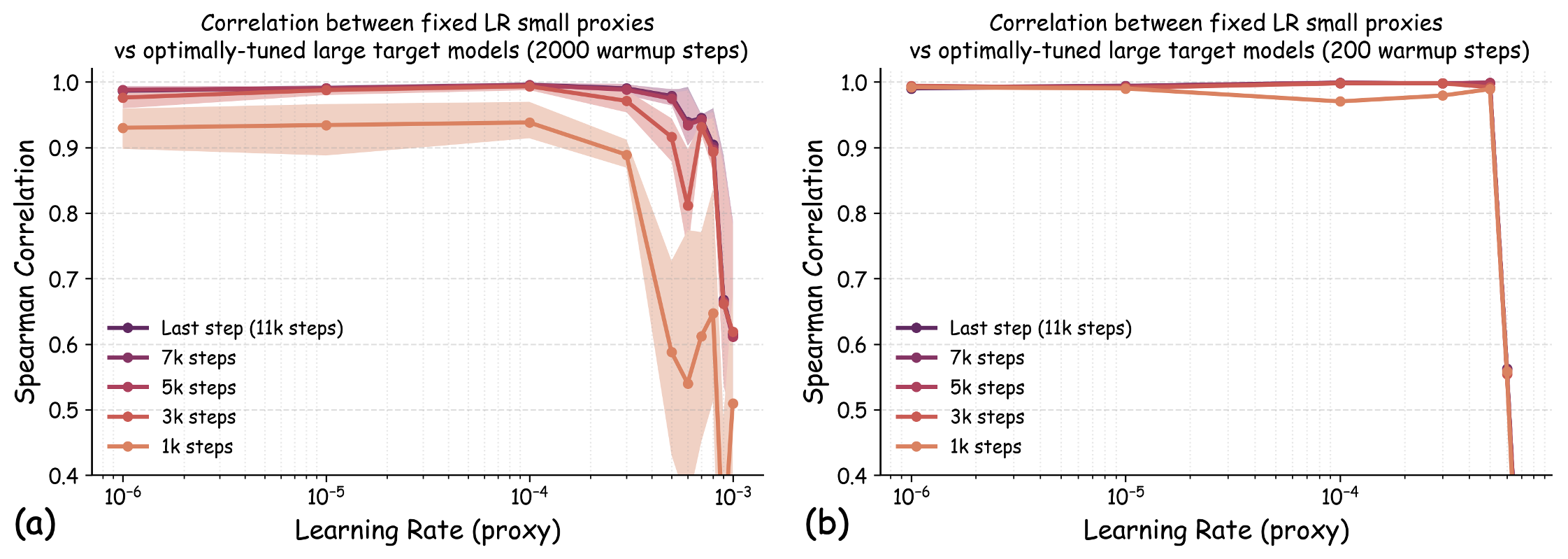}
    \caption{
    \add{Effect of training duration on proxy-to-target transferability. We measure Spearman rank correlation between dataset rankings from proxy models (Pythia-70M) at intermediate checkpoints and optimally-tuned target models (GPT2-Large) at the final step. \textbf{(a)} With 2,000 warmup steps: tiny learning rates ($\leq 10^{-5}$) achieve high rank correlation early in training, while standard learning rates require more training steps for reliable transferability. \textbf{(b)} With 200 warmup steps: reducing the warmup period improves the transferability of early checkpoints.}
    }
    \label{fig:train-duration}
\end{figure}

\end{document}